\newcommand{\RNum}[1]{\uppercase\expandafter{\romannumeral #1\relax}}
\newcommand{\Rnum}[1]{\lowercase\expandafter{\romannumeral #1\relax}}
\def\Figref#1{Figure~\ref{#1}}
\def\Tabref#1{Table~\ref{#1}}
\def\eqref#1{equation~(\ref{#1})}
\def\Eqref#1{Equation~(\ref{#1})}
\def\Algref#1{Algorithm~\ref{#1}}
\def\peqref#1{(\ref{#1})}
\def \Thmref#1{Theorem~\ref{#1}}
\def\0{\bm{0}} %%% LSY
\def\1{\bm{1}}
\DeclareMathAlphabet{\mathsfit}{\encodingdefault}{\sfdefault}{m}{sl}
\SetMathAlphabet{\mathsfit}{bold}{\encodingdefault}{\sfdefault}{bx}{n}
\newtheorem{theorem}{Theorem}
\newtheorem{definition}{Definition}
\newtheorem*{T1}{Theorem~\ref{thm:thm2}}
\newtheorem*{T2}{Theorem~\ref{thm:thm_fair}}
\title{Fair Sampling in Diffusion Models through Switching Mechanism}
\author{
    %Authors
    % All authors must be in the same font size and format.
    Written by AAAI Press Staff\textsuperscript{\rm 1}\thanks{With help from the AAAI Publications Committee.}\\
    AAAI Style Contributions by Pater Patel Schneider,
    Sunil Issar,\\
    J. Scott Penberthy,
    George Ferguson,
    Hans Guesgen,
    Francisco Cruz\equalcontrib,
    Marc Pujol-Gonzalez\equalcontrib
}
\title{My Publication Title --- Single Author}
\author {
    Author Name
}
\author {
    % Authors
    Yujin Choi\textsuperscript{\rm 1}\equalcontrib,
    Jinseong Park\textsuperscript{\rm 1}\equalcontrib,
    Hoki Kim\textsuperscript{\rm 1},
    Jaewook Lee\textsuperscript{\rm 1},
    Saerom Park\textsuperscript{\rm 2}\thanks{Corresponding author.}
}
\begin{document}

\maketitle

\begin{abstract}
Diffusion models have shown their effectiveness in generation tasks by well-approximating the underlying probability distribution. However, diffusion models are known to suffer from an amplified inherent bias from the training data in terms of fairness. While the sampling process of diffusion models can be controlled by conditional guidance, previous works have attempted to find empirical guidance to achieve quantitative fairness. 
To address this limitation, we propose a fairness-aware sampling method called \textit{attribute switching} mechanism for diffusion models. Without additional training, the proposed sampling can obfuscate sensitive attributes in generated data without relying on classifiers.
We mathematically prove and experimentally demonstrate the effectiveness of the proposed method on two key aspects: (i) the generation of fair data and (ii) the preservation of the utility of the generated data.
\end{abstract}

\noindent 
\section{Introduction}
Generative models have shown promise across diverse domains to generate unseen data examples. Recently, diffusion-based models \cite{song2020score,rombach2022high} have shown the effectiveness in various fields, including image \cite{rombach2022high}, video~\cite{blattmann2023align}, and audio~\cite{kong2020diffwave} generation. However, diffusion models also suffer from generating data possessing inherited and amplified bias from the training dataset \cite{friedrich2023fair}, similar to the problems observed in other using generative models such as autoencoder \cite{amini2019uncovering} or generative adversarial networks (GAN) \cite{sattigeri2019fairness,xu2018fairgan}. 
Since the diffusion model needs specific methods to generate synthetic data by denoising from stochastic differential equation (SDE) and ordinary differential equation (ODE)  \cite{song2020denoising}, it requires unique methods for achieving fairness in contrast to previous fair generative models \cite{kenfack2021fairness}.
In response, recent studies have investigated fairness issues in diffusion models \cite{friedrich2023fair,sinha2021d2c}. These studies have mainly focused on quantitative aspects of achieving fairness within the diffusion model framework.  
For instance, \citet{friedrich2023fair} introduces the fair guidance to generate sample fair data to resolve the imbalanced sampling for certain classes. 
However, assessing models through a specific classifier falls short of capturing the broader performance spectrum~\cite{xu2019fairgan+}. In addition, evaluating the fairness of a generative model through classification tasks introduces complexities in distinguishing whether the observed discrimination stems from the generator or the classifier. Instead, in this study, we employ the concept of \textit{$\epsilon$-fairness}, as introduced in \cite{feldman2015certifying, xu2018fairgan}. We aim to evaluate data fairness concerning the sensitive attribute itself without a specific label by achieving data fairness in diffusion models. 
% In contrast to existing fairness frameworks that often require a specific classifier to achieve fairness, we present a sampling method that can achieve distributional fairness between sensitive variables.

To achieve the goal, we present a sampling method called \textit{attribute switching} to generate synthetic data under the $\epsilon$-fairness framework. Inspired by the finding in \cite{choi2022perception} that diffusion models learn different characteristics at each step, our work aims to generate data that is both independent of sensitive attributes and satisfies utility requirements. Subsequently, we prove the theoretical condition of the transition point, ensuring independence so that achieving fairness. Furthermore, our model does not require additional training on top of standard diffusion models, offering flexibility in its applicability to various pre-trained models. As our method is solely designed for diffusion sampling, we demonstrate the effectiveness of attribute switching using various recent diffusion models.
We summarize our main contributions as follows:
\begin{itemize}
    \item We address the data fairness issue in diffusion models, aiming to ensure sensitive attribute independence without requiring additional classifiers. To the best of our knowledge, this is the first attempt to investigate $\epsilon$-fairness within diffusion models.

    \item We introduce a novel sampling approach, which we call \textit{attribute switching}, that can generate fair synthetic data by matching the data distribution. Additionally, we mathematically prove its effectiveness and present an efficient algorithm to find the optimal transition point to achieve fairness.
    
    \item We show the effectiveness of the proposed fair sampling on various datasets including the state-of-art diffusion models without additional training. 
\end{itemize}

\begin{table*}[!ht]
\centering
\resizebox{\textwidth}{!}{%
    \begin{tabular}{c|cc|cc}
    \toprule
     & \multicolumn{2}{c|}{Classification Fairness} & \multicolumn{2}{c}{Data Fairness} \\ \hline
    Notions & \multicolumn{1}{c|}{Statistical parity} & Equalized odds & \multicolumn{1}{c|}{Statistical fairness} & $\epsilon$-fairness \\ \hline
    Equations & \multicolumn{1}{c|}{\begin{tabular}[c]{@{}c@{}}$P(\eta(X)|S = 1)$\\ $=P(\eta(X)|S = 0)$\end{tabular}} & \begin{tabular}[c]{@{}c@{}}$P(\eta(X)=1|Y, S = 1)$\\ $=P(\eta(X)=1|Y,S = 0)$\end{tabular} & \multicolumn{1}{c|}{\begin{tabular}[c]{@{}c@{}}$P(Y|S = 1)$\\ $= P(Y|S = 0)$\end{tabular}} &   \multicolumn{1}{c}{\begin{tabular}[c]{@{}c@{}}\Eqref{eq:ber}\\ BER$(f(X),S)>\epsilon$\end{tabular}}  \\ \hline 
   % Equations & \multicolumn{1}{c|}{\begin{tabular}[c]{@{}c@{}}$P(\eta(X)=s_1|S = s_1)$\\ $=P(\eta(X)=s_1|S = s_0)$\end{tabular}} & \begin{tabular}[c]{@{}c@{}}$P(\eta(X)=s_1|Y=y, S = s_1)$\\ $=P(\eta(X)=s_1|Y=y,S = s_0)$\end{tabular} & \multicolumn{1}{c|}{\begin{tabular}[c]{@{}c@{}}$P(Y = 1|S = s_1)$\\ $= P(Y = 1|S = s_0)$\end{tabular}} &   \multicolumn{1}{c}{\begin{tabular}[c]{@{}c@{}}\Eqref{eq:ber}\\ BER$(f(X),S)>\epsilon$\end{tabular}}  \\ \hline 
    Independent to Classifier $\eta$ & \multicolumn{1}{c|}{X} & X & \multicolumn{1}{c|}{O} & O \\ \hline
    Independent to Label $Y$ & \multicolumn{1}{c|}{X} & X & \multicolumn{1}{c|}{X} & O
    \\\bottomrule
    \end{tabular}
}
\caption{Various notions for fairness. Among the fair notions, $\epsilon$-fairness does not rely on classifier $\eta$ and label $Y$.}
\label{tab:fairness_notions}
\end{table*}

\section{Preliminary and Related Work}

% \hl{uzn:divide intro and related works}

In this section, we introduce key notations, definitions, and related works to explain fairness issues in generative models. 

\paragraph{Notations} We consider the sensitive attributes $S \in \mathcal{S}$ and input data $X \in \mathcal{X}$, where $(X,S) \sim p_D$ represents the underlying distribution of the training data. For synthetic data, we introduce a hat notation $\mathcal{\hat{X}}$. In addition, when defining fairness notions for the classification tasks, we consider the label $Y \in \mathcal{Y}$ or a classifier $\eta : \mathcal{X} \rightarrow \mathcal{Y}$, where data distribution becomes $(X,Y,S) \sim p_{D}$. For diffusion models, $p_t(X_t)$ denotes the distribution of $X_t$ for the diffusion process $\{X_t\}_{t=0}^T$ and $X_0 \sim p_D(X)$.  
\subsection{Fairness Notions}
With regards to group fairness, prior research has introduced a range of fairness concepts \cite{feldman2015certifying, verma2018fairness}.  Among these, \Tabref{tab:fairness_notions} summarizes the predominant definitions~\cite{verma2018fairness}, which categorizes the notions into classification fairness (prediction-dependent) and data fairness (prediction-independent) categories. %, with a focus on the binary classification scenario for simplicity. 
Among them, classification fairness requires the consideration of downstream tasks with the corresponding classifiers. For instance, the \textit{statistical parity} focuses on ensuring that predicted labels are independent of the sensitive attribute~\cite{dwork2012fairness}, and \textit{equalized odds} fairness aims that the classifier $\eta$ maintains the equal values of ROC for label $Y$~\cite{hardt2016equality}.

Therefore, to circumvent the necessity for an additional classifier, we focus on the concept of data fairness.  However, although \textit{statistical fairness} does not directly depend on the classifier $\eta$, it requires taking into account a specific downstream task based on the relationship between the sensitive attribute $S$ and the label $Y$.
% However, \textit{statistical fairness} requires to take into account a specific downstream task represented as the label $Y$ although it does not directly depend on the classifier $\eta$ but is based on the relationship between the sensitive attribute $S$ and the label $Y$. 
Intuitively, it is more natural to consider the input data $X$ alongside the sensitive attribute $S$ to assess generative models. Thus, we focus on  \textit{$\epsilon$-fairness} \cite{feldman2015certifying, xu2018fairgan} defined as follows:
% In the realm of generative models, assessing model quality through a specific classifier falls short of capturing the broader performance spectrum. In light of this, concentrating solely on generation performance independent of downstream tasks, \textit{data fairness} emerges to appraise the generated data itself, bypassing the need for an additional classifier. Despite not relying on classifier $\eta$, \textit{statistical fairness} requires class labels $Y$ to assess potential discrimination concerning the protected attribute $S$. To address these limitations, a recently proposed \textit{$\epsilon$-fairness} \cite{feldman2015certifying} presents a comprehensive framework for evaluating data fairness on sensitive attribute $S$ itself as follows:

\begin{definition}
\textbf{(\boldsymbol{$\epsilon$}-fairness) }
For any function $f:\mathcal{X}\rightarrow\mathcal{S}$, a dataset $D = (X,S)$ satisfies $\epsilon$-fairness if 
$$
\textnormal{BER}(f(X),S)>\epsilon,
$$
where the balanced error rate (BER) is defined as  
\begin{equation}
\label{eq:ber}
    \small{\textnormal{BER}(f(X),S)=\frac{P(f(X)=0|S = 1)+P(f(X)=1|S = 0)}{2}.}
\end{equation}
\end{definition}
The $\epsilon$-fairness evaluates fairness concerning the sensitive attribute $S$ itself.
\citet{xu2018fairgan} pointed out that $\epsilon$-fairness can be measured by the classifiers trained on synthetic data. With a high classification error rate on real data, sensitive attribute $S$ is not predictable by synthetic data $\hat{X}$ and the inherent disparate impact in the real data can be removed in synthetic data.
%In $\epsilon$-fairness, a generator is considered fair when a classifier trained on synthetic data faces difficulties accurately classifying the original real data. %Based on $\epsilon$-fairness, the notion of a fair generator is realized when a classifier trained on synthetic data encounters challenges in accurately classifying the original real data.
In this context, the $\epsilon$-fairness implies that the synthetic data $\hat{X}$ from the fair generator can assure the independence of $S$, denoted as $\hat{X}\perp\!\!\!\perp S$.
%To practically measure the $\epsilon$-fairness, 
%$\epsilon$-fairness can be satisfied for the classifiers trained on synthetic data and tested on real data if the inherent disparate impact in the real data can be removed in synthetic data.
%even train data is balanced, fairness문제가 발생할 수 있다
%그 이유는 모델이 배우기 쉬운 feature가 있기 때문이다

% fair generation (classifier): fairness + *** f gen ***
% diffusion
% fairness in diffusion

% generative model
% diffusion / sampling
% fairness in GM
 % fairness in diffusion
%  Generative models와 fairness 관련 연구 리뷰
% 기존 연구에서 얻은 통찰과 한계점 제시

% related work -> 결론적으로 내 연구의 필요성이 여태까지것에 없다.
% 내연구가 80-100을 한 연구면 기존은 0-80임.
\subsection{Diffusion Models}
Diffusion models \cite{ho2020denoising, dhariwal2021diffusion} become the de-facto standard method for generative models by achieving both sample qualities and diversity. To approximate the data distribution, diffusion models forward sample data into noise distribution from time steps $t=0$ to $t=T$, where $T$ is the number of steps in the denoising process. Then, the model learns the reverse diffusion process by matching score functions via SDE and ODE \cite{song2020denoising, song2020score}.
The forward diffusion process is described with an Itô SDE:
\begin{equation}
\label{eq:forward}
    dX_t = f(X_t, t)dt + g(t)dW_t,
\end{equation}
for drift coefficient $f: \mathcal{X} \times[0, T] \rightarrow \mathcal{X}$, diffusion coefficient $g:[0, T] \rightarrow \mathbb{R}$, and Brownian motion $W$.
For the forward diffusion, corresponding reverse SDE \cite{anderson1982reverse} conditioned on $S$ \cite{ho2021classifierfree} can be written as follows:
\begin{equation}
\label{eq:reverse}
    d\bar{X_t} = (f(\bar{X_t}, t) - g^2(t)\nabla_x\log p_t(\bar{X_t}|S))dt - g(t)d\bar{W_t}.
\end{equation}
\citet{song2020score} showed that this reverse SDE solution is equivalent to the following reverse ODE:
\begin{equation}
\label{eq:reverse_ode}
    d\bar{X_t} = (f(\bar{X_t}, t) - \frac{1}{2}g^2(t)\nabla_x\log p_t(\bar{X_t}|S))dt.
\end{equation}
To learn the score function via neural networks $\theta$, diffusion models minimize the following loss: 
\begin{equation}
\label{eq:loss_scorematching}
     \mathcal{L}(\theta)=\mathbb{E}_{X_t, t}[\|\psi_{\theta}(X_t, t, S)-\nabla_{X_t} \log p(X_t|S)\|_{2}^{2}],
\end{equation}
where $\psi_{\theta}(X_t, t, S)$ is a score function to learn. 

% \hl{alpha, alpha bar t|t-1} 쓸지 말지 쓰지말자
\begin{figure*}[!ht]
\centering     %%% not \center
    \subfigure[Diffusion stages: coarse,content, and cleaning]{\label{fig:snr}\includegraphics[width=70mm]{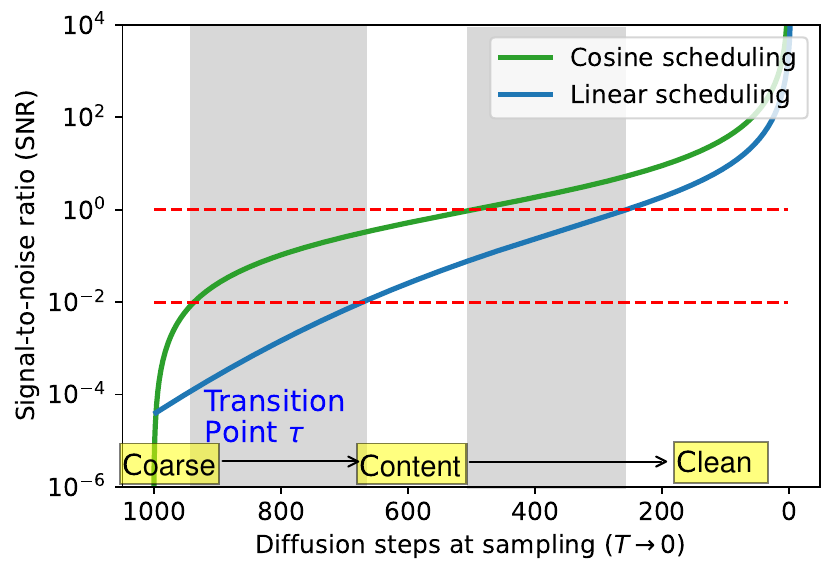}}
    \subfigure[Illustration of attribute switching mechanism (from $s_0$ to $s_1$)]{\label{fig:proposed_method_figure}\includegraphics[width=105mm]{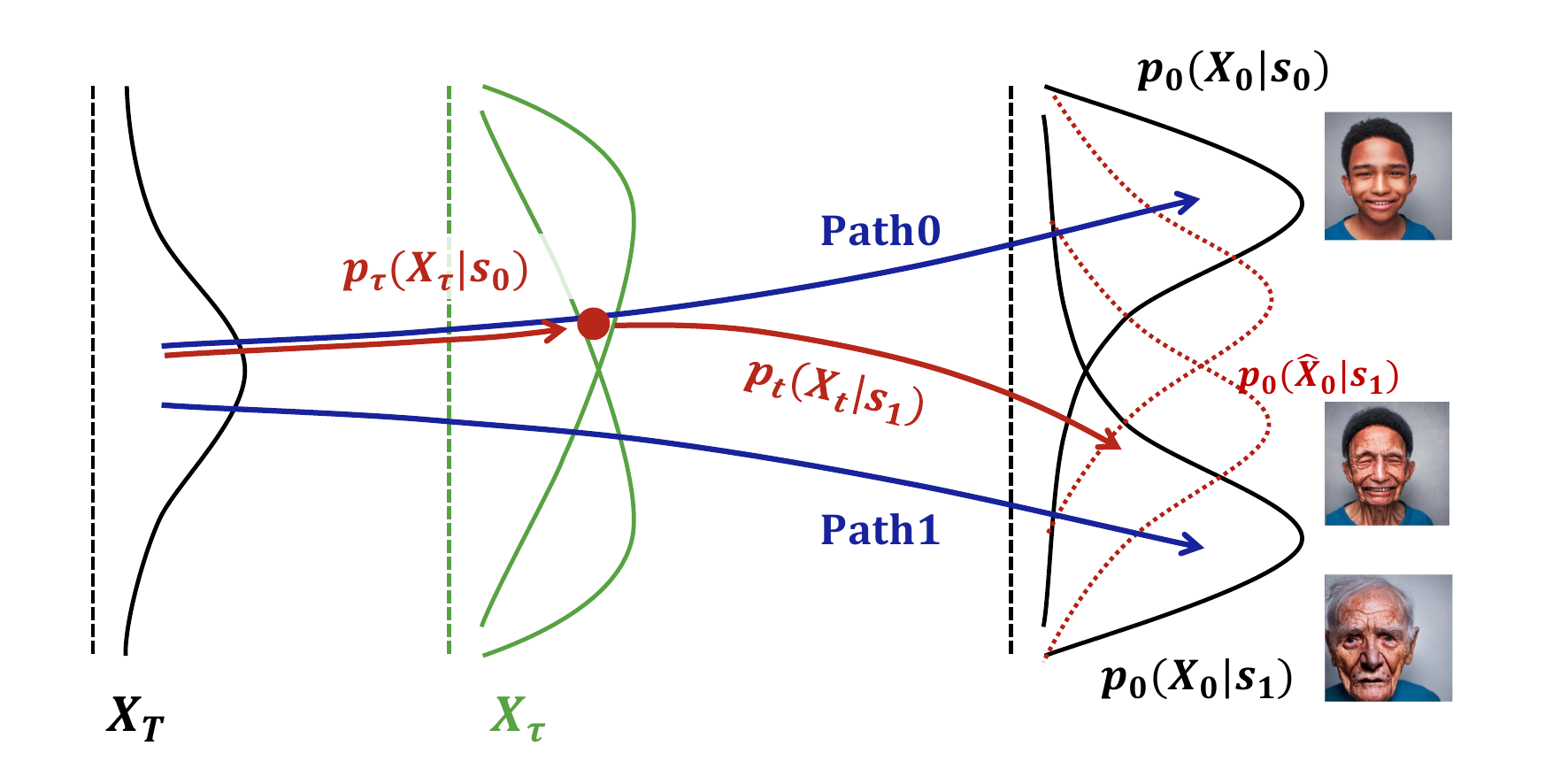}}
    \caption{Motivation concerning diffusion learning stages and illustration of the proposed method. In transition point $\tau$, the proposed method switches the condition of sensitive attributes from $s_0$ to $s_1$ to generate synthetic data satisfying $\epsilon$-fairness.}
    \label{fig:proposed_method}
\end{figure*}

\paragraph{Recent Diffusion Sampling} 
A distinct advantage of the diffusion model lies in its capacity to generate data without needing extra conditioning architectures \cite{ho2021classifierfree}. Thus, previous studies have investigated exploring flexible guidance sampling, such as leveraging text embedding \cite{rombach2022high,ramesh2022hierarchical} or designing new sampling methods using pre-trained models \cite{brack2023sega}. Different sampling methods can vary outcomes in terms of both sampling quality \cite{humayun2022polarity} and speed \cite{lu2022dpm}.
% diffusion model은 SDE를 푼다
% %: (https://arxiv.org/pdf/2305.00562.pdf preliminary 참고)
% :ddpm, ddim sampler - ODE로 바꿀 수 있다, conditional diffusion 간단하게
% 우리 방식에서는 ddim sampler 사용했다.
% The fair generative models are differentiated from the fair representation learning methods in terms that fair representation learning aims to obtain the transformed representations which mitigate the discriminations. 

\subsection{Fair Generative Model} 
% Fair generative models have focused on generating fair datasets  to mitigate discrimination resulting from biased datasets. The existing fair generative models mainly focused on two bias issues: (i) discrimination when the generated data is used for downstream tasks and (ii) imbalanced distribution of sensitive attributes.    
% %
When considering downstream fairness, a common approach involves incorporating fairness loss into the training objective. FairGAN(+)~\cite{xu2018fairgan,xu2019fairgan+} and Fairness GAN~\cite{sattigeri2019fairness} focus on generating fair datasets composed of input and label pairs, enabling downstream classifiers to achieve fairness concerning sensitive attributes. Alternatively, DB-VAE \cite{amini2019uncovering} employs a de-biased classifier and adaptive resampling of rarer data during training to mitigate the potential biases related to downstream tasks rather than the generative task itself.
Other lines of studies concentrate on the fairness notion related to imbalanced distributions of sensitive attributes.  \citet{choi2020fair} introduced fair generative modeling via weak supervision, utilizing a small, unbiased reference dataset to address imbalanced distribution biases.  %\citet{zameshina2022fairness} proposed a post-processing technique leveraging reweighting and multi-objective optimization to enhance diversity and mitigate bias.

For diffusion models, \citet{sinha2021d2c} addresses bias in datasets generated by unconditional diffusion through a few-shot conditional generation approach. Their goal is to achieve a balanced number of class labels. Similarly, \citet{friedrich2023fair} achieves fairness by fair guidance, which aims to make a balanced number of class labels.

\section{Methodology}

%binary
%problem: epsilon-fairness
%our method: switching -> 이걸 쓰면 fairness + utility.

\subsection{Problem Statement}
Our objective is to achieve $\epsilon$-fairness within the generated data by leveraging the independence relationship $\hat{X} \mathrel{\perp \!\!\! \perp} S$~\cite{xu2018fairgan}. 
Moreover, we aim to obtain $p_G$ that ensures not only the distributional independence of sensitive attributes %\textit{distributional fairness}, manifested as 
$p_G(\hat{X}|S=0) = p_G(\hat{X}|S=1)$,  but also maintains utility with respect to the inherent data manifold. 
While the fairness comes from the independence of generated samples concerning the sensitive groups, the utility of $p_G$ implies that the generated samples from $p_G$ preserve the underlying manifold of original data distribution.
%the utility of $p_G$ implies that the generated samples from $p_G$ do not leave the original data manifold. 

In this section, we demonstrate how our method can achieve these fair characteristics of $p_G$  by introducing a new mechanism to control the sensitive attributes during the sampling. For simplicity, we will omit the subscript $G$ when referring to the distribution associated with the diffusion process and use the notation $ p(X|s) := p(X|S=s)$ when referring to the conditional distribution to attribute $s$. Note that we present the proposed method in a binary context; however, its applicability is broader, covering multi-nominal sensitive attributes or specified text-based conditions.%, represented by $P_D(X, S)=P_G(X, S)$. 

\subsection{Sampling with Attribute Switching Mechanism}

To make the two data distributions similar, it is well known that controlling high-level features (e.g., semantics, outlines, and image color tones) are important \cite{long2015learning, ilyas2019adversarial}. Recent findings indicate that diffusion models learn distinct attributes at each sampling step \cite{choi2022perception}. Within the context of signal-to-noise ratio (SNR) for diffusion \cite{kingma2021variational}, the sampling process involves three steps: (i) learning \textbf{coarse} features, (ii) generating rich \textbf{content}, and (iii) \textbf{cleaning} up residual noise, as depicted in \Figref{fig:snr}. Specifically, the coarse stage generates high-level features, while the content and clean stages handle fine detail generation \cite{kwon2023diffusion}. 

Thus, through the transfer of high-level features during the sampling process, we might mitigate the inherent distributional disparity between the two sensitive attributes. This approach ensures that a sensitive attribute encompasses coarse features similar to those of the other attribute. To achieve this goal, we propose a general sampling framework to handle sensitive attributes, which we call \textit{attribute switching}.
The sampling process of the proposed method is graphically depicted in Figure \ref{fig:proposed_method_figure}. Simply, at the transition point $\tau$, we alter the condition of the diffusion model, transitioning it from the \textbf{initial sensitive attribute} $s_0$ to the \textbf{switched sensitive attribute} $s_1$.  By leveraging high-level features learned from the distribution of $s_0$, our method can generate images from $p(\hat{X}_0|s_1)$, which exhibits an independent relationship to the sensitive attribute. %distributional fairness. 
The detailed sampling procedure is explained in Algorithm \ref{alg:sampling}.
% when calculating the score function of the diffusion model $\nabla_{X_t} \log p_t(X_t|S_t)$.

\paragraph{Preserving the distribution of the ODE solution} %Preserving SDE after Switching
%\hl{uzn:}
We first argue that attribute switching still leads to generate the synthetic data samples on the same data manifold because $p(X_\tau|s_0)$ and $p(X_\tau|s_1)$ have the smooth noised distribution \cite{choi2022perception}.
In a more formal expression, the attribute switching is represented as:
\begin{align}
\label{eq:s_t}
\begin{split}
    S_t &\sim p(S_t=s)\\
    &= p_D(S=s_0)\cdot\delta(s-s_0)\cdot \mathds{1}(t>\tau)\\
    &\quad+p_D(S=s_1)\cdot\delta(s-s_1)\cdot \mathds{1}(t\leq\tau),
\end{split}
\end{align}
for $s_0, s_1 \in \{0,1\}$, $s_0\neq s_1$, and $\mathds{1}$ denotes an indicator function. The vanilla sampling methods, such as DDPM or DDIM, perform sampling under $p(S_t=s)=p_D(S=s)$.  
For mathematical proof, we now demonstrate that the solution to the original SDE maintains an identical distribution to the SDE after attribute switching as follows:
\begin{theorem}
\label{thm:thm2}
%Let forward $X$, reverse $\bar{X}$, switching $\hat{X}$
% Let pre-trained model is trained with \Eqref{eq:forward}.
% Then, subsequent reverse ODE represents an equivalent probability flow ODE corresponding to the pre-trained model:
% $$
% d\bar{X_t} = (f(\bar{X_t}, t) - \frac{1}{2}g^2(t)\nabla_x\log p_t(\bar{X_t}|Y_t))dt.
% $$
% where $Y_t = YI(t>\tau)+(\sim Y)I(t\leq\tau)$.
Assuming a pre-trained model is trained with \Eqref{eq:forward}, and the subsequent reverse ODE represents an equivalent probability flow ODE corresponding to the pre-trained model \Eqref{eq:reverse_ode}. Then, the solution of following reverse ODE has the same distribution with the pre-trained ODE,
\begin{equation}
    \label{eq:switching_ode}
    d\hat{X_t} = (f(\hat{X_t}, t) - \frac{1}{2}g^2(t)\nabla_x\log p_t(\hat{X_t}|S_t))dt,
\end{equation}
where $S_t $ follows \Eqref{eq:s_t}.
%$S_t = SI(t>\tau)+(\sim S)I(t\leq\tau)$.
\end{theorem}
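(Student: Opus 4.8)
The plan is to exploit the defining property of the probability flow ODE: for a fixed conditioning $s$, the reverse ODE in \Eqref{eq:reverse_ode} driven by the exact conditional score $\nabla_x\log p_t(\cdot|s)$ reproduces, at every time $t$, the forward marginal $p_t(\cdot|s)$, as established by \citet{song2020score}. I would first recall the underlying mechanism: the deterministic velocity field $f(x,t)-\tfrac{1}{2}g^2(t)\nabla_x\log p_t(x|s)$ transports $p_t(\cdot|s)$ according to the same continuity (Fokker--Planck) equation that governs the marginals of the forward SDE in \Eqref{eq:forward}. Hence, initialized at the common prior $p_T$, the ODE marginal stays pinned to $p_t(\cdot|s)$ for all $t$. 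This is the only heavy tool needed, and I would state it with a brief continuity-equation verification rather than a full rederivation.

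With this in hand, I would analyze \Eqref{eq:switching_ode} piecewise in $t$, using that $S_t$ from \Eqref{eq:s_t} is constant on each side of $\tau$. On $t\in(\tau,T]$ the switching ODE coincides exactly with the $s_0$-conditioned probability flow ODE, so $\hat{X}_\tau\sim p_\tau(\cdot|s_0)$. On $t\in[0,\tau]$ it coincides exactly with the $s_1$-conditioned probability flow ODE, which by the marginal-preservation property maps $p_\tau(\cdot|s_1)$ to $p_0(\cdot|s_1)$. The remaining task is therefore to connect the two phases at the transition point $\tau$.

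This connection is the crux, and where I expect the main obstacle to lie. The distribution entering the second phase is $p_\tau(\cdot|s_0)$, whereas the $s_1$-flow reproduces the correct marginal only if it is fed $p_\tau(\cdot|s_1)$; there is no way around requiring these two laws to agree at $\tau$. I would close this gap by invoking the smoothness of the heavily noised conditionals: in the coarse regime both $p_\tau(\cdot|s_0)$ and $p_\tau(\cdot|s_1)$ have been driven close to the common noised distribution, so at a suitably chosen transition point one has $p_\tau(\cdot|s_0)=p_\tau(\cdot|s_1)$ (exactly in the limit $\tau\to T$, and approximately for large $\tau$, which is precisely the regime the later optimal-$\tau$ analysis targets). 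Feeding the matched distribution into the $s_1$-flow then yields $\hat{X}_0\sim p_0(\cdot|s_1)$.

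Concluding, each phase of the switching ODE reproduces a genuine forward marginal, so the trajectory never leaves the family of noised data distributions, and the terminal law $\hat{X}_0\sim p_0(\cdot|s_1)$ coincides with that of the pre-trained (non-switching) ODE conditioned on the switched attribute $s_1$. This is exactly the distributional equivalence asserted in the theorem and delivers the manifold/utility preservation the section is after. I would keep the continuity-equation computation short, since it is standard, and concentrate the effort on justifying the switch-point matching, which is the genuinely load-bearing step and the one that motivates the subsequent search for an optimal $\tau$.
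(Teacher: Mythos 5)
Your piecewise setup---invoking the Fokker--Planck/continuity equation to show that, on each side of $\tau$, the switching ODE is exactly the pre-trained conditional probability flow---is the same machinery the paper's proof uses, and you are right that the switch point is where all the difficulty sits. The problem is how you close that gap. You assume $p_\tau(\cdot|s_0)=p_\tau(\cdot|s_1)$ (exactly as $\tau\to T$, ``approximately'' for large $\tau$) and conclude $\hat{X}_0\sim p_0(\cdot|s_1)$. Neither half survives scrutiny. At any finite $\tau$ the two noised conditionals are distinct---their scores differ by precisely the gap $D(t)$ of \Eqref{eq:diff} that \Thmref{thm:thm_fair} integrates, and the empirically chosen $\tau\approx 640$ sits at the coarse/content boundary, where that gap is far from negligible---so exact matching holds only in the degenerate limit $\tau=T$, where ``switching'' collapses to ordinary $s_1$-conditional sampling. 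Worse, if your conclusion did hold, the method would be pointless: $s_0\rightarrow s_1$ switching would reproduce $p_0(\cdot|s_1)$ and $s_1\rightarrow s_0$ switching would reproduce $p_0(\cdot|s_0)$, leaving the generated groups exactly as separated as the real ones, making the fairness condition of \Thmref{thm:thm_fair} vacuous and contradicting the paper's own evidence (the PCA plots in \Figref{fig:pca} show the switched samples lying \emph{between} the two true conditionals, not on top of one of them). The switched output is deliberately a hybrid---coarse structure inherited from $s_0$, fine detail from $s_1$---and not a sample from either conditional.

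The statement the paper actually proves is weaker and needs no matching step at all: since the drift of \Eqref{eq:switching_ode} coincides on $(\tau,T]$ and on $[0,\tau]$ with the drift of the pre-trained flow \Eqref{eq:reverse_ode} under the corresponding fixed condition, the marginal density of the switching solution satisfies the same (piecewise-in-time) Fokker--Planck equation as the pre-trained model's flow, with continuity of the density at $\tau$. That is the intended sense of ``same distribution as the pre-trained ODE'': the switching sampler is a legitimate probability flow implementable with the pre-trained score network $\psi_\theta$, its trajectories never leave the family of distributions the model was trained to transport, and hence no retraining is required. What terminal law it produces, and for which $\tau$ that law becomes independent of the sensitive attribute, is deferred to \Thmref{thm:thm_fair}; your attempt to answer that question inside \Thmref{thm:thm2} by forcing the two conditionals to agree at $\tau$ is the step that fails.
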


\begin{proof}
\textit{(Sketch of the proof)} Using the Fokker-Plank equation, we can show the solution of the given ODE has the same probabilistic as the original ODE. Refer to the Appendix for the details.
% The pre-trained diffusion model is trained with the following SDE: %말 맞냐
% $dX_t = f(X_t, t)dt + g(t)dW_t$
% When the solution of this SDE $X_t\sim p_t$, $p_t$ satisfies the Fokker–Planck equation as follow:
% $\partial_t p_t = -\partial_x(fp_t) + g^2\partial_x^2(p_t) /2$%multi dim으로 바꿔 적기
% All we want to show is the solution of our switching method has the same solution which satisfies the Fokker-Planck equation.
% Our switching method is 
% $Y_t = Y I(t>\tau) +  (Y+1) I(t<\tau)  $
% \hl{uzn: tau = transition point? switching point?}
\end{proof}

Therefore, we do not need extra training and no adjustments are requisite for existing training methodologies. 
In the following section, we will elucidate how switching can obtain fairness and suggest a straightforward algorithm to find proper $\tau$.

%can utilize the pre-trained diffusion model without additional training in the reverse process and achieve distributional fairness under certain conditions related to $\tau$.

\begin{algorithm}[t]\DontPrintSemicolon
\SetAlgoLined
\SetNoFillComment
    \caption{Fair Sampling with Attribute Switching}
    \label{alg:sampling}
    \KwIn {Pre-trained model $\theta$, subgroup of time steps for sampling  $\{0,k,\ldots,T-k,T\}$, transition point $\tau$, and sensitive attributes $s_0, s_1$}
    \KwOut {Generated fair images $\hat{X}_0$}
    % \tcc{Iterate over all parameters $\vw_t$ at step $t$}
    \For {a random batch}{
    \textbf{Initialize:} $\hat{X}_T\sim N(0, I)$

    \For {$t=T-k, T-2k,\ldots,\tau-k$ }{
    Sampling under the condition of $S=s_0$
    
    $\quad d\hat{X_t} = (f(\hat{X_t}, t) - \frac{1}{2}g^2(t)\psi_{\theta}(\hat{X}_t, t, s_0)dt
    $
    }
    \For {$t=\tau, \cdots, 0$ }{
    \textit{Switching}: sampling under condition of $S=s_1$ 
    
    $\quad
    d\hat{X_t} = (f(\hat{X_t}, t) - \frac{1}{2}g^2(t)\psi_{\theta}(\hat{X}_t, t, s_1)dt
    $
    }
    }
\end{algorithm}

\begin{algorithm}[t]\DontPrintSemicolon
\SetAlgoLined
\SetNoFillComment
    \caption{Fair Transition Point Searching}
    \label{alg:search_tau}
    \KwIn {Pre-trained model $\theta$, sensitive attribute $s_0, s_1$, subgroup of time steps for sampling  $\{0,k,\ldots,T-k,T\}$}
    \KwOut {Transition point $\tau$}
    \textbf{Initialize:} $X_T\sim N(0, I)$
    
    \For {$t=T-k, T-2k,\ldots,0$ }{   
        \tcc{\emph{Single sampling for $\tau$ search}}
        
        Calculate score $\psi_{\theta}(X_t, t,s_0)$, $\psi_{\theta}(X_t, t,s_1)$ using pre-trained model $\theta$

        Obtain diffusion coefficient $g(t)$ in \Eqref{eq:forward}
        
        Store difference between $s_0$ and $s_1$ using $\theta$ \\ $\qquad D_t = g^2(t)\{\psi_{\theta}(X_t, t,s_0)-\psi_{\theta}(X_t, t,s_1)\}$ } %to calculate Riemann Sum

    Calculate the transition point $\tau$ %from \Eqref{eq:}
    $$\tau = \arg\min_\tau \left\|\sum_{i\leq\tau}  D_i - \\ \sum_{i>\tau} D_i\right\|$$
    % \textbf{Return } $\tau$ 
\end{algorithm}

% Since diffusion models are known that the denoising process consists of three-phase: coarse, context, and cleaning, we 

\subsection{Fair Sampling with Attribute Switching}

As vanilla diffusion sampling fails to satisfy $\epsilon$-fairness, two data distributions conditioned on the sensitive attribute become distinct as denoising progresses from time $T$ to $0$. This is illustrated in \Figref{fig:proposed_method_figure}, where `Path0' and `Path1' diverge from each other. In contrast, our method facilitates attaining the distribution matching for $p(\hat{X}_0|s_0)$ and $p(\hat{X}_0|s_1)$ by switching the condition of the sensitive attribute of \Eqref{eq:reverse_ode} from $s_0$ to $s_1$ at the transition point $\tau$. For a detailed explanation, we now present a theoretical analysis aimed at identifying an optimal transition point $\tau$ that guarantees fairness. We also provide theoretical support for the practicality of utilizing a pre-trained diffusion model.

\paragraph{Fair Condition of Transition Point $\boldsymbol{\tau}$}
Our primary goal is to achieve a distribution of generated samples wherein $p(\hat{X}|S = s_0) = p(\hat{X}|S = s_1)$. Thus, we suggest the condition of transition point $\tau$ to achieve the goal as follows:
\begin{theorem}\label{thm:thm_fair}
    \textbf{(Fair condition of transition point $\boldsymbol{\tau}$)} 
    Let $\tau$ be a transition point satisfying the following condition:
    \begin{equation}
    \label{eq:tau_sum}
    \int_0^\tau D(t)dt  = \int_\tau^T D(t)dt.
    \end{equation}
    where
    \begin{equation}
    D(t) = g^2(t)(\nabla_x\log p_t(\bar{X_t}|s_0) -\nabla_x\log p_t(\bar{X_t}|s_1)),
    \label{eq:diff}
    \end{equation}
    Then, 
    the generated distribution from the following reverse ODE
    % \begin{equation}
    % \label{eq:switching_ode}
    % d\hat{X_t} = (f(\hat{X_t}, t) - \frac{1}{2}g^2(t)\nabla_x\log p_t(\hat{X_t}|S_t))dt,
    % \end{equation}
    becomes independent of the sensitive attribute, where $S_t$ is from \Eqref{eq:s_t}.
    %We can find a transition point $\tau$ that ensures the generated distribution becomes independent of the sensitive attribute.
    %fair를 만족시키는 $\tau$는 $\int_0^\tau D(t)dt  = \int_\tau^T D(t)dt $를 만족시킨다
    
\end{theorem}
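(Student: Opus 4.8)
The plan is to prove fairness by a \emph{symmetry} argument: rather than tracking a single conditional law, I would compare the two switching processes obtained by exchanging the roles of $s_0$ and $s_1$, and show that the balance condition \Eqref{eq:tau_sum} forces them to produce the \emph{same} distribution at $t=0$. Concretely, I would introduce two solutions of the reverse ODE \Eqref{eq:switching_ode} driven by the \emph{same} terminal noise $\hat{X}_T\sim N(0,I)$: the trajectory $X_t$ that conditions on $s_0$ for $t>\tau$ and switches to $s_1$ for $t\le\tau$ (the law I read as $p(\hat{X}_0\mid s_0)$), and the trajectory $Y_t$ that conditions on $s_1$ for $t>\tau$ and switches to $s_0$ for $t\le\tau$ (the law $p(\hat{X}_0\mid s_1)$). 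Writing the probability-flow velocity field as $v_s(x,t)=f(x,t)-\tfrac12 g^2(t)\nabla_x\log p_t(x\mid s)$, the difference of the two conditional velocities at a common point is exactly $v_{s_0}(x,t)-v_{s_1}(x,t)=-\tfrac12 D(t)$ by \Eqref{eq:diff}. Since the flow is deterministic given $\hat{X}_T$, establishing $\epsilon$-fairness ($\hat{X}\perp S$) reduces to showing $X_0=Y_0$ for almost every shared initialization, because then the two push-forward laws coincide.

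First I would set up the ordinary differential equation governing the trajectory gap $\Delta_t:=X_t-Y_t$ with boundary value $\Delta_T=0$. Decomposing the right-hand side as
\begin{equation*}
\frac{d}{dt}\Delta_t=\bigl[v_{S_t}(X_t,t)-v_{S_t}(Y_t,t)\bigr]+\bigl[v_{S_t}(Y_t,t)-v_{S'_t}(Y_t,t)\bigr],
\end{equation*}
the first bracket is a homogeneous term proportional to $\Delta_t$, whose size is controlled by the Jacobian of $v$ along $Y_t$, while the second bracket is a forcing term that equals $-\tfrac12 D(t)$ for $t>\tau$ and $+\tfrac12 D(t)$ for $t\le\tau$, because the two schedules $S_t,S'_t$ from \Eqref{eq:s_t} are swapped relative to one another and cross at $\tau$. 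Integrating this forcing term from $T$ down to $0$ yields the net displacement $\tfrac12\bigl(\int_\tau^T D(t)\,dt-\int_0^\tau D(t)\,dt\bigr)$, which vanishes precisely under the transition-point condition \Eqref{eq:tau_sum}. Hence the accumulated asymmetry between the two processes cancels, giving $\Delta_0=0$ and therefore $p(\hat{X}_0\mid s_0)=p(\hat{X}_0\mid s_1)$.

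The main obstacle is that $D(t)$ is evaluated \emph{along a trajectory}, so the forcing felt by $X_t$ and by $Y_t$ is not literally identical, and the homogeneous term couples the gap $\Delta_t$ to the generally nonzero Jacobian of the velocity field. The clean cancellation above is thus exact only to first order in the conditional score gap. To make it rigorous I would pursue one of two routes. Route (i): invoke \Thmref{thm:thm2}, which guarantees that the switched process stays on the same noised manifold and that $p_t(\cdot\mid s_0)$ and $p_t(\cdot\mid s_1)$ are smooth and mutually close over the relevant region, so that $D(t)$ can be treated as approximately path-independent and the two forcings identified. Route (ii): solve the linear gap equation through its state-transition matrix $\Phi(0,t)$ and bound the correction $\int_T^0(\Phi(0,t)-I)\,b(t)\,dt$ by a Gr\"onwall estimate in terms of the Lipschitz constant of $v$ and $\sup_t\|D(t)\|$, showing it is higher order. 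Either route reduces the claim to the scalar balance \Eqref{eq:tau_sum}, so the technical crux is the quantitative control of this trajectory-divergence term rather than the cancellation itself.
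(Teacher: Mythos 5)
Your proposal follows essentially the same route as the paper's proof: write the integrated probability-flow ODE for the two attribute schedules started from a shared terminal noise $\hat{X}_T$, subtract, and observe that the accumulated score-difference forcing equals $\tfrac12\bigl(\int_\tau^T D(t)\,dt-\int_0^\tau D(t)\,dt\bigr)$, which vanishes exactly under \Eqref{eq:tau_sum}. The trajectory-divergence obstacle you flag (that $D(t)$ is really path-dependent, so the cancellation is exact only to first order in the conditional score gap) is present but glossed over in the paper's argument as well, since its definition of $D(t)$ in \Eqref{eq:diff} already treats the score difference as a function of $t$ alone evaluated along a single common trajectory $\bar{X_t}$.
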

%2. switching의 $\tau$ 선택을 통해 fair한 image generation이 가능하다.
\begin{proof} \textit{(Sketch of the proof)} By using \Eqref{eq:reverse_ode}, we can get the equation about the generated image from reverse ODE. Moreover, by \Eqref{eq:s_t} and (\ref{eq:switching_ode}), we can get the equation about the switched image. Refer to the Appendix for the details.
\end{proof}

The above theorem states that the attribute switching guarantees fairness of the generated data distribution, by ensuring the independence of the sensitive attribute when $\tau$ satisfies \Eqref{eq:tau_sum}.

% \begin{minipage}[c]{0.5\linewidth}
% \centering
% \includegraphics[width=\linewidth]{}
% \captionof{figure}{Results of optimal $\tau$ with \\diverse diffusion models and datasets.}
% \label{fig:d_t_u_shape}
% \end{minipage}
% %%%%%%%%%%%%%%%%%%%%%%%%%%%%%%%%%%%
% \begin{minipage}[c]{0.44\linewidth}
% \centering
% \includegraphics[width=\linewidth]{}
% \captionof{figure}{Test the accuracy of true data evaluated with trained models using synthetic data with different $\tau$.}
% \label{fig:train_u_shape}
% \end{minipage}

\begin{figure}
    \centering
    \includegraphics[width=\linewidth]{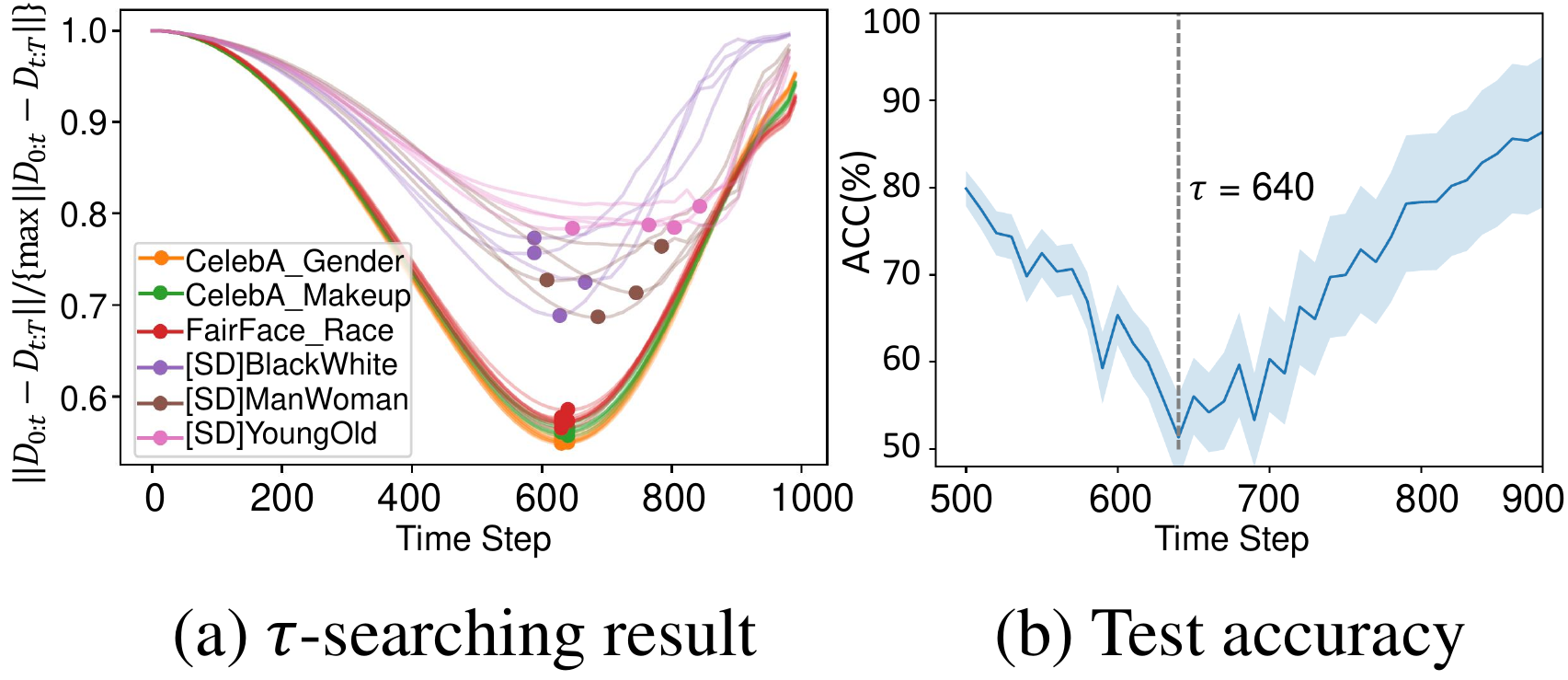}
    \caption {$\tau$-search algorithm and its efficiency. (a) $\tau$-searching results with diverse diffusion models and datasets and (b) test accuracy of true data evaluated with models trained using synthetic data with varying $\tau$ values.}
    \label{fig:u_shape_total}
\end{figure}

\paragraph{Empirical Validation of Transition Points $\boldsymbol{\tau}$}
To guarantee fairness in diffusion sampling according to \Thmref{thm:thm_fair}, our goal is to find $\tau$ that minimizes the following objective:
{\footnotesize\begin{equation}
    \label{eq:tau_diff}
    \left\|\sum_{i\leq\tau}  D_i - \\ \sum_{i>\tau} D_i\right\|.
    %\left\|\int_0^\tau D(t)dt - \int_\tau^T D(t)dt \right\|,
\end{equation}}
Thus, we first introduce an efficient algorithm to determine an appropriate $\tau$ in Algorithm \ref{alg:search_tau}. Within a mini-batch setting, we store the estimated value of the difference between two sensitive attributes in \Eqref{eq:diff} and estimate the score in \Eqref{eq:loss_scorematching}. The optimal $\tau$ that minimizes \Eqref{eq:tau_diff} can be found by performing a one-time sampling with cumulative sum calculation for a given mini-batch. 
% A natural question regarding the selection of the $\tau$ is that the result from a single mini-batch can be applied to the whole sampling process. To assess this, we analyze the results of \Eqref{eq:tau_diff} across varying data sizes, appendix에 그림으로 나타냈다. 우리는 이 결과를 통해 batch-size가 충분히 크면 $\tau$가 stable 하다는 것을 발견했다. \Figref{fig:d_t_u_shape}는 batch-size 256일 때, $64\times64$ CelebA와 fairface data different seed 일 때, $\| - \|$  결과를 나타낸 것이다. 또한, [SD] 로 표시한 label은 stable diffusion model에 대해서 $\tau$-searching algorithm 을 적용한 결과이며 memory 문제로 인해 batch-size=5를 사용했다. stable diffusion attribute switching 에 대한 내용은 Experiment 단원에서 더 자세히 다룰 예정이다. Refer to the subsequent Experiments section for the details of the experimental setups.
To verify whether the selection of $\tau$ from a single mini-batch can be applied to the entire sampling process, we illustrated the maximum scaled values of \Eqref{eq:tau_diff}, varying the model and dataset in \Figref{fig:u_shape_total}(a).
%To assess this, we analyze the results of \Eqref{eq:tau_diff} across diverse data sizes, as illustrated in the appendix. Our analysis reveals that when the batch size is sufficiently large, the stability of $\tau$ is ensured. 
The top three correspond to experiments conducted with a batch size of 256, while the remaining three at the bottom represent experiments with a batch size of 5.
%considering CelebA \cite{liu2018large}  dataset with sensitive attribute ``Male" and ``Heavy Makeup", and FairFace \cite{karkkainen2021fairface} datasets with sensitive attribute ``race" (black-white). Additionally, labels denoted as [SD] indicate the application of the $\tau$-searching algorithm to the stable diffusion model, with a batch size of 5 employed due to memory constraints. Refer to the subsequent Experiments section for the details of the experimental setups.
%\hl{..}As depicted in \Figref{fig:d_t_u_shape}, the determined transition point $\tau$ remains stable with a batch size of 256, aligning with the principles of the law of large numbers. 
For a large batch size, individual computations of differences for each mini-batch are unnecessary; a single batch calculation suffices for determining $\tau$ for a given dataset. More detailed experiments with diverse batch sizes are illustrated in the Appendix.
Note that the function $g(t)$ in \Eqref{eq:diff} might be differently defined with the scheduler and sampler \cite{ho2020denoising,karras2022elucidating}.

To verify the optimal transition point $\tau$ from Algorithm \ref{alg:search_tau}, we plot the accuracy of a model trained with the synthetic data and test with the original FairFace \cite{karkkainen2021fairface} ``race" (black-white) dataset. 
In Figure \ref{fig:u_shape_total}(b), we observe that the theoretically derived value of $\tau = 640$, which ensures fairness, closely aligns with the $\tau$ value that exhibits fairness in practice. Interestingly, this value is highly aligned to their empirical boundaries of coarse-content phases ($\approx \text{SNR}=10^{-2}$)  \cite{choi2022perception} in \Figref{fig:snr}.
These empirical results highly support transferring high-level features of $s_0$ to content phases of $s_1$ is effective for fair sampling.

\begin{table}[!ht]
\centering
\resizebox{0.88\linewidth}{!}{%
\begin{tabular}{l|l|rrrr}
\hline
 Classifier & Methods & \multicolumn{1}{l}{$S = 0$} & \multicolumn{1}{c}{$S = 1$} & \multicolumn{1}{c}{gap} & \multicolumn{1}{c}{BER} \\ \hline
\multicolumn{1}{c|}{} & Real & 9.51 & 4.89 & 4.62 & 7.20 \\ \hline
\multirow{4}{*}{\begin{tabular}[c]{@{}c@{}}Syn (Tr)\\$\rightarrow$\\ Orig (Te)\end{tabular}} & Vanilla & 8.80 & 8.20 & 0.60 & 8.50 \\
 & Mixing & 39.59 & 43.93 & 4.34 & 41.76 \\
 & Editing & 9.58 & 18.03 & 8.46 & 14.42 \\ 
 & Ours & 54.63 & 54.92 & 0.29 & \textbf{54.78} \\ \hline
\multirow{4}{*}{\begin{tabular}[c]{@{}c@{}}Orig (Tr)\\$\rightarrow$\\ Syn (Te)\end{tabular}} & Vanilla & 19.59 & 10.68 & 8.91 & 15.14 \\
 & Mixing & 62.64 & 20.55 & 42.09 & 41.60 \\
  & Editing & 31.56 & 10.92 & 20.64 & 21.24 \\
 & Ours & 62.59 & 38.86 & 23.73 & \textbf{50.73} \\ \hline
\end{tabular}
}
\caption{Error rates for classifier (\%), trained (Tr) with synthetic data (Syn) generated from each sampling method and tested (Te) with original data (Orig), and vice versa.}
\label{tab:epsilon_fairness}
\end{table}

For complexity analysis, let $N_B = \lceil N/\text{batch\_size} \rceil$ be the number of batch samplings for $N$ data samples. The attribute switching sampling in \Algref{alg:sampling} requires $O(N_B \times T)$ for $T$ denoising steps, the same as vanilla sampling since switching only requires \texttt{if-else} without additional space complexity.
For $\tau$ searching in \Algref{alg:search_tau}, we only need one-time mini-batch sampling of $O(T)$, which is marginal to fair sampling of $O(N_B \times T)$. Only for \Algref{alg:search_tau}, we require twice the memory space for sampling both sensitive attributes and comparing their step-wise differences. 

\begin{figure*}[!t]
\centering     %%% not \center
    \subfigure[Bird-Truck]{\label{fig:bird_truck_pca}\includegraphics[width=45mm]{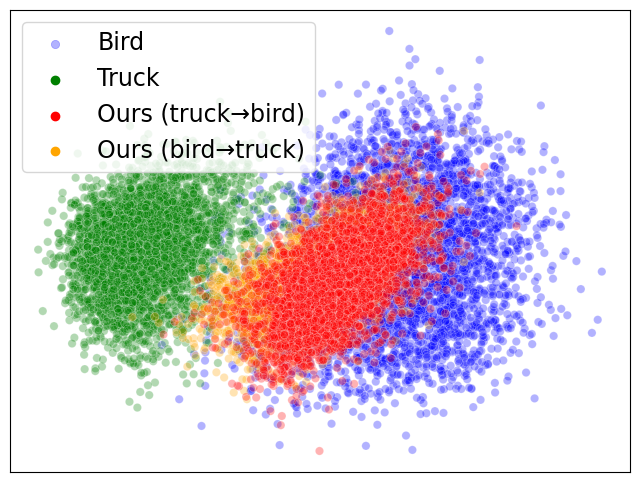}}
    \subfigure[Bird-Frog]{\label{fig:bird_frog_pca}\includegraphics[width=45mm]{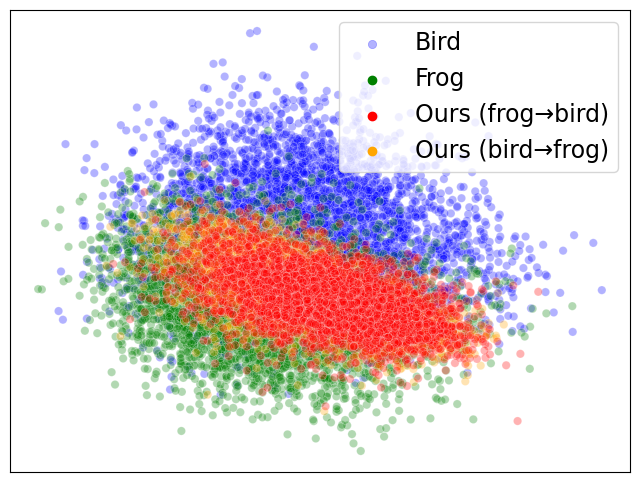}}
    \subfigure[Frog-Truck]{\label{fig:frog_truck_pca}\includegraphics[width=45mm]{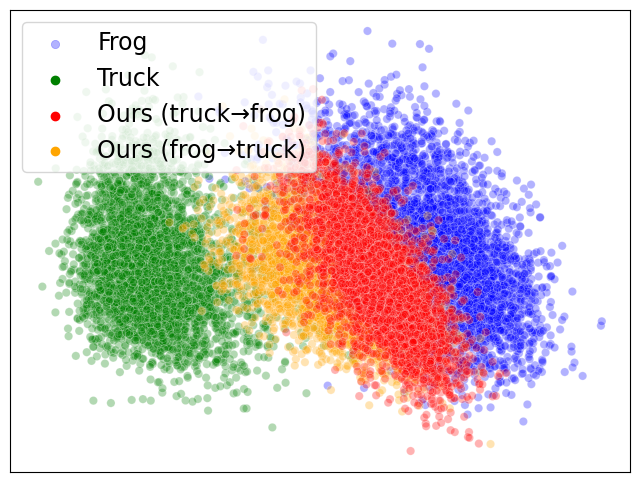}}
    \caption{Dimension reduction (PCA, $n=2$) results of the data distribution for both the true data and the generated data of the proposed methods for Bird-Truck, Bird-Frog, and Frog-Truck pairs within the CIFAR-10 datasets.}
    \label{fig:pca}
\end{figure*}

\begin{table*}[!htb]
\centering
\resizebox{\textwidth}{!}{%
\begin{tabular}{c|l|rrrr|rrrr|rrrr}
\hline
\multirow{2}{*}{\begin{tabular}[c]{@{}c@{}} \end{tabular}}  & \multirow{2}{*}{\begin{tabular}[c]{@{}c@{}}Methods\end{tabular}} & \multicolumn{4}{c|}{FairFace Race (12k)} & \multicolumn{4}{c|}{CelebA Male (12k)} & \multicolumn{4}{c}{CelebA Heavy-Makeup (12k)} \\ \cline{3-14} 
 &  & \multicolumn{1}{l}{$S =$ 0} & \multicolumn{1}{l}{$S =$ 1} & \multicolumn{1}{l}{total} & \multicolumn{1}{l|}{gap} & \multicolumn{1}{l}{$S =$ 0} & \multicolumn{1}{l}{$S =$ 1} & \multicolumn{1}{l}{total} & \multicolumn{1}{l|}{gap} & \multicolumn{1}{l}{$S =$ 0} & \multicolumn{1}{l}{$S =$ 1} & \multicolumn{1}{l}{total} & \multicolumn{1}{l}{gap} \\ \hline
\multicolumn{1}{c|}{\multirow{3}{*}{Total}} & Real (Test) & 10.12* & 8.77* & 6.35* & 1.34* & 21.66 & 21.25 & 2.50 & 0.41 & 12.92 & 18.88 & 2.43 & 5.96 \\
\multicolumn{1}{c|}{} & Vanilla (Total) & 29.60 & 34.99 & 29.50 & 5.40 & 22.24 & 19.75 & 7.19 & 2.49 & 19.75 & 22.24 & 7.19 & 2.49 \\
\multicolumn{1}{c|}{} & Ours (Total) & 31.15 & 36.23 & 31.11 & 5.08 & 23.17 & 29.20 & 7.83 & 6.03 & 20.35 & 21.36 & 7.59 & 1.01 \\ \hline
\multicolumn{1}{c|}{\multirow{3}{*}{\begin{tabular}[c]{@{}c@{}} $s_0=0$\\$s_1=1$\end{tabular}}} & Real ($s_0$)& 21.47* & 33.81* & 17.43* & 12.33* & 2.36 & 68.11 & 22.81 & 65.75 & 2.64 & 53.13 & 13.78 & 50.49 \\
\multicolumn{1}{c|}{} & Vanilla ($s_0$) & 28.04 & 40.59 & 31.78 & 12.55 & 7.04 & 66.00 & 23.03 & 58.96 & 8.75 & 47.37 & 14.50 & 38.62 \\
\multicolumn{1}{c|}{} & Ours ($s_0\rightarrow s_1$) & 31.05 & 38.29 & 32.15 & 7.25 & 8.50 & 58.54 & 18.54 & 50.05 & 10.87 & 39.80 & 11.72 & 28.92 \\ \hline
\multicolumn{1}{c|}{\multirow{3}{*}{\begin{tabular}[c]{@{}c@{}} $s_0=1$\\$s_1=0$\end{tabular}}} & Real ($s_0$) & 29.68* & 18.19* & 13.05* & 11.49* & 53.25 & 2.12 & 20.02 & 51.13 & 53.25 & 2.12 & 20.02 & 51.13 \\
\multicolumn{1}{c|}{} & Vanilla  ($s_0$) & 33.77 & 32.53 & 30.59 & 1.24 & 58.31 & 9.14 & 19.12 & 49.17 & 49.38 & 6.66 & 19.03 & 42.72 \\
\multicolumn{1}{c|}{} & Ours ($s_0\rightarrow s_1$) & 32.08 & 35.57 & 31.26 & 3.48 & 50.03 & 12.00 & 15.64 & 38.03 & 42.03 & 8.44 & 15.44 & 33.60 \\ \hline
\end{tabular}
}
\caption{FID comparison of generated images with a fixed start point for FairFace and CelebA.}
\label{tab:fid_total}
\end{table*}

\section{Experiments}
\subsection{Experimental Setups}
\label{sec:exp_setting}
The success of diffusion models is attributed to their ability to provide effective guidance in a manifold, performing well even under linear interpolation \cite{ramesh2022hierarchical}. As our approach represents the first attempt to investigate fairness by matching the distribution given sensitive attributes in diffusion models, the most straightforward method for comparison is mixing conditional embeddings \cite{song2020denoising}. Specifically, we linearly combine the sensitive attribute embeddings for sampling, utilizing the embedding of $s_1$ with a probability $p>0.5$, and that of $s_0$ with a probability of $1-p$.
 %and then contrast this approach with the proposed method. %While vanilla sampling uses the embedding of $s_1$ for conditional sampling, mixing employs a linear combination of the embeddings of the sensitive attributes $s_1$ and $s_0$.
As another baseline, we employ DiffEdit \cite{couairon2022diffedit}, a state-of-the-art image diffusion-based editing method. %DiffEdit aims to edit the image based on a text query. %, maintaining other properties of the image. 
We generate images using the original sampling method  of diffusion models and edit them by subtracting corresponding text for $s_0$ and adding corresponding text for $s_1$. %The detailed queries and discussions about DiffEdit are in the Appendix.
% Then, edit with query [$\pm$black] and [$\mp$white] for FairFace race dataset, [$\pm$man] and [$\mp$woman] for CelebA Male dataset, and [$\pm$wearing makeup] and [$\mp$no makeup] for CelebA Heavy-Makeup dataset.

% Then, we edit the date towards 
% we further compare with 
% Another baseline: Diffusion models can be used for image editing. We edit the data  ['Male', 'Female']. ['Black', 'White'], ['Wearing Makeup', 'No Makeup']

% Note that Diffedit model is pre-trained with $256\times256$ image resolutions, so that the quality of edited image was worse than expected. (그림 넣어줄까?) 우리 방법에 맞추기 위한 추가적인 train이 필요한데, 우리의 방법은 train이 아닌 sampling 방식이기 때문에 이것은 단점임. 또한, generate 를 한번 하고 editing process를 거쳐야 하기 때문에 기존 time consumpt과 동일한 attribute switching 혹은 mixing 방식에 비해 한계점이 존재함.

% Thus, we aim to further compare our approach with pre-trained SOTA image editing, called Diffedit [3]. We generate the CelebA ``Male" dataset by editing images towards embedding with [$\pm$man] and [$\mp$woman] from the original CelebA (Orig) and the generated data (Gen), 
Following \citet{xu2018fairgan}, we employ a classifier trained on synthetic data and evaluate it on the original dataset to measure fairness. Specifically, we train a ResNet18 \cite{he2016deep} model as a classifier to predict the sensitive attribute $S$, achieving a training accuracy of 100\% in most cases.
For datasets, we employ FairFace, CelebA \cite{liu2018large} to quantify data fairness and utility, and CIFAR-10\cite{krizhevsky2009learning} for dimension reduction experiments. % FairFace \cite{karkkainen2021fairface}, CelebA \cite{liu2018large} 
Regarding evaluation metrics, we assess classification performance using accuracy, data $\epsilon$-fairness with BER as defined in \Eqref{eq:ber}, and data utility with the Frechet Inception Distance (FID) score.
All experiments are conducted using PyTorch-based diffusion libraries \cite{von-platen-etal-2022-diffusers}. Additional details regarding the experimental setup can be found in the Appendix and the code is available at \url{https://github.com/uzn36/AttributeSwitching}.

%minimal-diffusion 코드를 활용했다.
% NVIDIA 3090 4개 사용(parallel안해도 4개 사용 말해야 하냐???)
%Original data를 이용해 classifier를 학습시킨 후, test with the synthetic data. This experiments 의미하다 we cannot 구분하다 given data

%둘 다 fair 달성했다.

%표에서 알 수 있듯, 우리의 방법은 50%에 가까운 BER을 달성했다. 이는 potential 한 sensitive attribute에 대한 정보를 지울 수 있다는 것을 의미한다. (??) 말 이상함! 
\subsection{Data Fairness}
Table \ref{tab:epsilon_fairness} presents the evaluation of error rates for the original and synthetic data, generated using the same diffusion model trained on the FairFace dataset, with different sampling methods: vanilla, mixing, editing, and attribute switching.
The first column indicates the training and evaluating data, e.g., ``Orig (Tr) $\rightarrow$ Syn (Te)" represents the classifier trained on the original dataset and evaluated on synthetic data. ``Real" indicates training and testing with the original dataset. The values are presented as error rates in percentage, where the column for $S= s$ means $P(f(X)=1-s|S = s)\cdot$ 100 (\%) for the classifier $f$, and BER is obtained by \Eqref{eq:ber}.
Notably, we achieved fairness in terms of both the gap and BER, for both ``Orig (Tr) $\rightarrow$ Syn (Te)" and ``Syn (Tr) $\rightarrow$ Orig (Te)" evaluations. Our approach achieved a BER similar to 50\% while mixing and editing only achieved 42\%, 14\%, respectively. Moreover, the classifier trained with the original dataset tends to classify the generated data more often as $S = 1$, while this tendency is alleviated in attribute switching.
This conclusion demonstrates that attribute switching has successfully achieved fairness in terms of $\epsilon$-fairness.\\
Another method to evaluate the similarity of the generated data distribution is through dimension reduction and visualization.
To confirm that the proposed methods generate fair data, we performed dimension reduction (PCA) to dimension $n=2$ on the pre-trained ResNet embedding space. To ensure distinct distributions for each sensitive attribute, we extracted three different classes from the CIFAR-10 dataset: bird, truck, and frog. Each class is considered a sensitive attribute, and then, we applied our method to generate distributionally equivalent images for all possible pairs. For pairwise distribution analysis, we calculated principal component vectors using true data distribution pairs. Subsequently, we projected the generated data onto these principal dimensions. The results are depicted in \Figref{fig:pca}. The generated data is placed in between the two true distributions while sharing similar distributions between attributes.

\subsection{Data utility}  % quality 로 바꾸고
To assess data utility, we compute the FID using the training dataset and an equal number of generated samples. 
 It's worth noting that the FID score is highly sensitive to the dataset size, so we aimed to maintain a consistent number of samples. However, due to limitations in the FairFace dataset, we encountered insufficient data when calculating the FID with the real test data. In cases where the dataset sizes did not align, we have marked this discrepancy with an asterisk (*).

As our method switches the attribute from $s_0$ to $s_1$ and labels the sensitive attribute as $s_1$, we conducted a comparison of FID with a fixed start point. The results are presented in Table \ref{tab:fid_total}. We measure the FID with a total of 12k train data, 12k train data with $S=0$, and 12k train data with $S=1$.
As shown in the table, the total FID of attribute switching is similar to that of vanilla sampling. This suggests that the total data distribution of generated data is similar to that of vanilla data, which can ensure the data utility. 
Notably, the FID of the proposed method is worse than vanilla sampling, in terms of $s_0$. This is expected since the attribute of generated data is changed to $s_1$. Most importantly, the FID with $s_1$ is significantly better even when the gap between the two attributes is reduced.
Given that FID measures the difference between two distributions, we can conclude that this result pertains not only to data utility but also to fairness considerations. This finding further supports the notion that attribute switching contributes to making the distribution of generated images more similar.

% 우리 방법은 data의 quality 측면 외에도 utiliy 를 보장할 수 있다. 예를 들어서, 다른 distribution은 모두 유지하기 때문에 sensitive attribute를 지운 classifier 학습 등에 사용이 가능하다. 예를 들어서, CelebA data 에서 sensitive attribute를 Male이라고 하고, classification을 하고 싶은 class를  smiling라고 하자. smiling과 Male에 condition된 diffusion model을 학습하고, 이를 이용해서 vanilla sampling과 Male에 대해서 switching 을 해서 sampling을 해 보았다.
% 그 결과, vanilla sampling 을 통해 생성한 synthetic data로 classifier를 학습시켰을 때, smiling에 대한 accuracy는 88.38, Male 에 대한 acccuracy는 93.04로 둘 다 높은 성능을 보였다. switching 을 한 경우, smiling에 대한 성능은 88.14로 거의 동일한 성능을 보였으며, Male에 대한 성능은 55.71로 50\% 에 가까운 fair한 결과를 나타냈다. 즉, 이러한 결과를 통해 attribute switching은 switching 하는 attribute만을 변형하기 때문에 다른 downstream task에 이용하는 것이 유용하다.
\begin{figure}[!t]
\centering     %%% not \center
     \subfigure%[Man to Woman] 
    {\label{fig:man2womanm}\includegraphics[width=20mm]{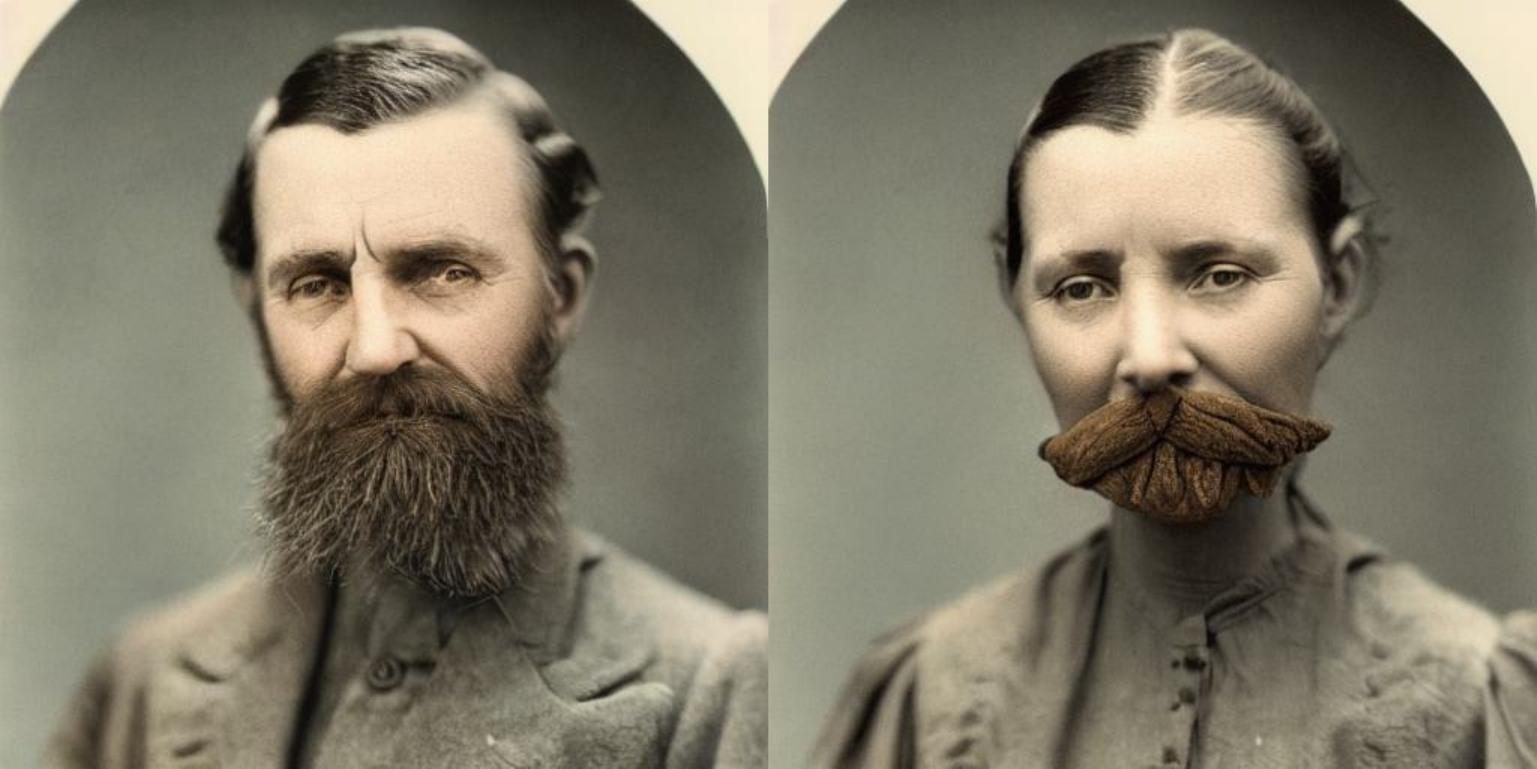}}  \subfigure{\label{fig:woman2manw}\includegraphics[width=20mm]{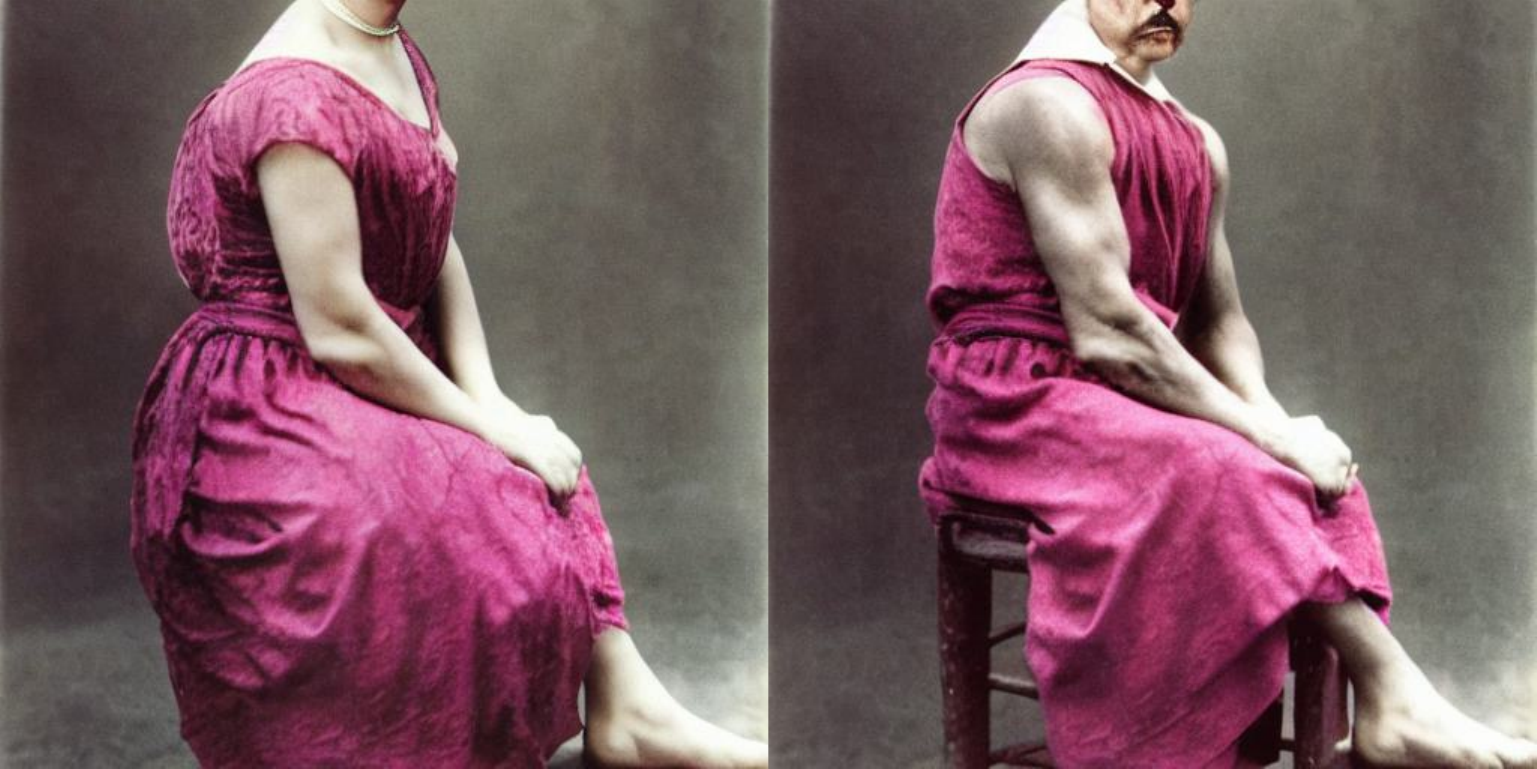}}
    {\label{fig:young2oldy}\includegraphics[width=20mm]{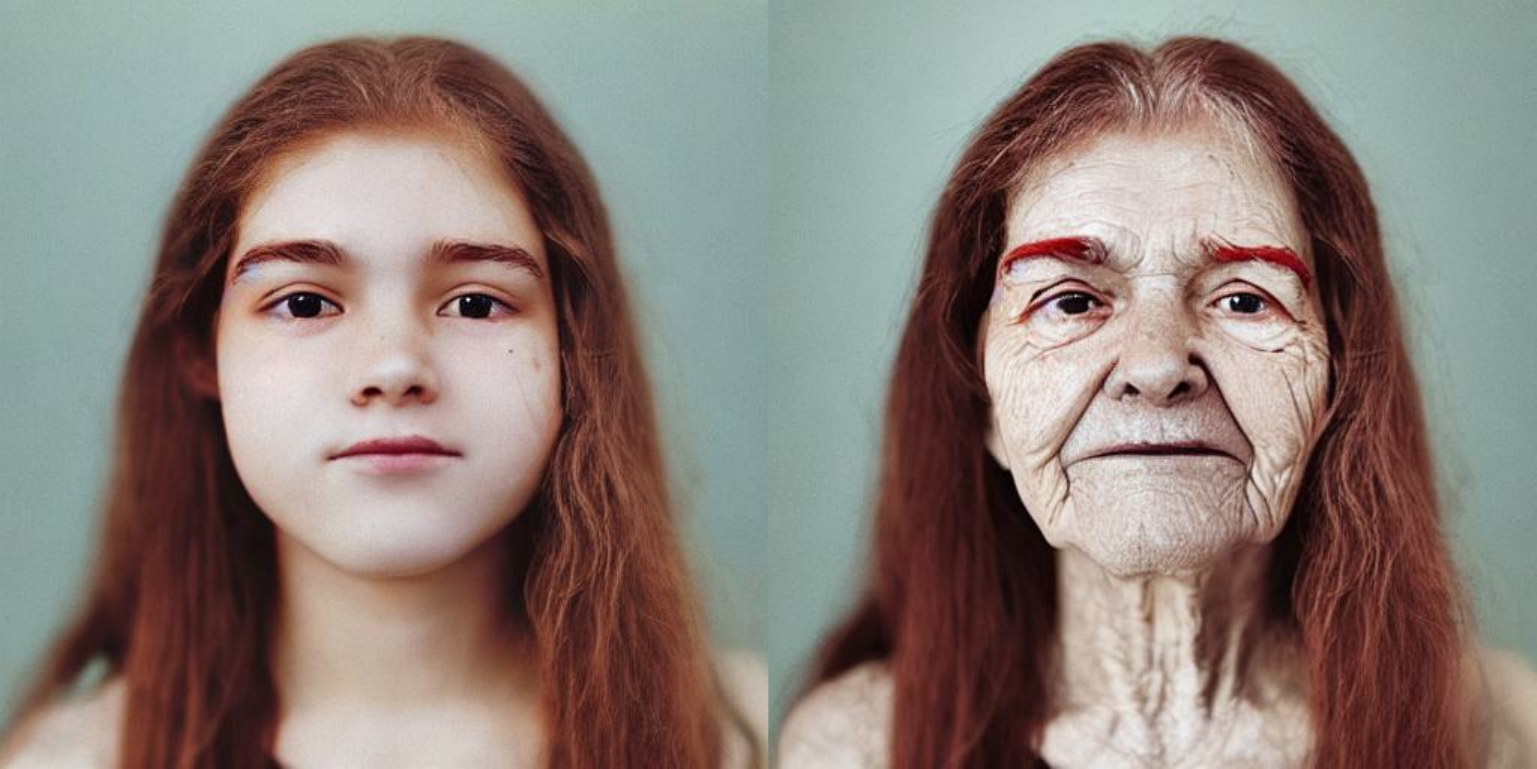}}
    \subfigure%[Old to Young]
    {\label{fig:old2youngo}\includegraphics[width=20mm]{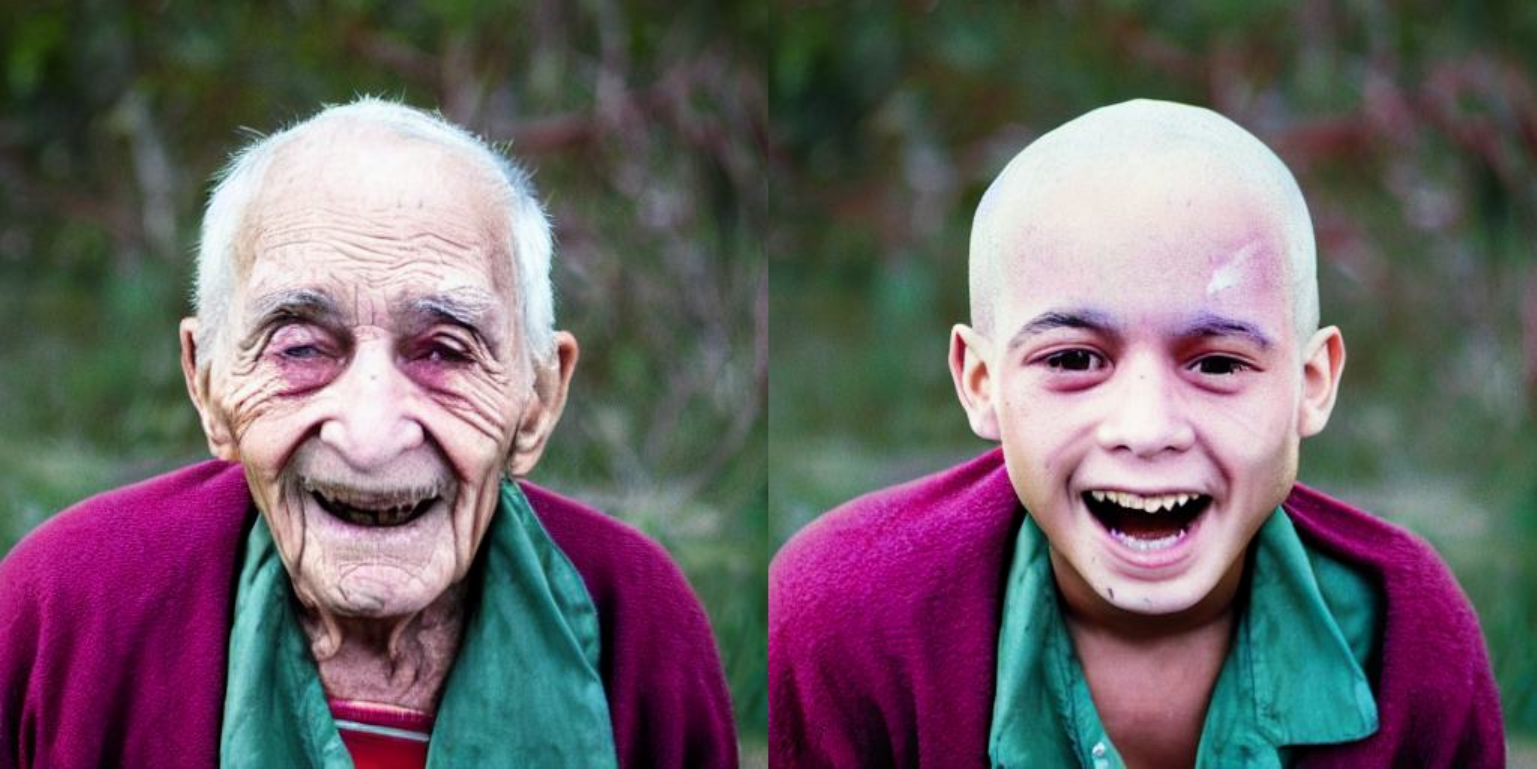}}\\
    \subfigure%[Woman to Man]
    {\label{fig:man2womanw}\includegraphics[width=20mm]{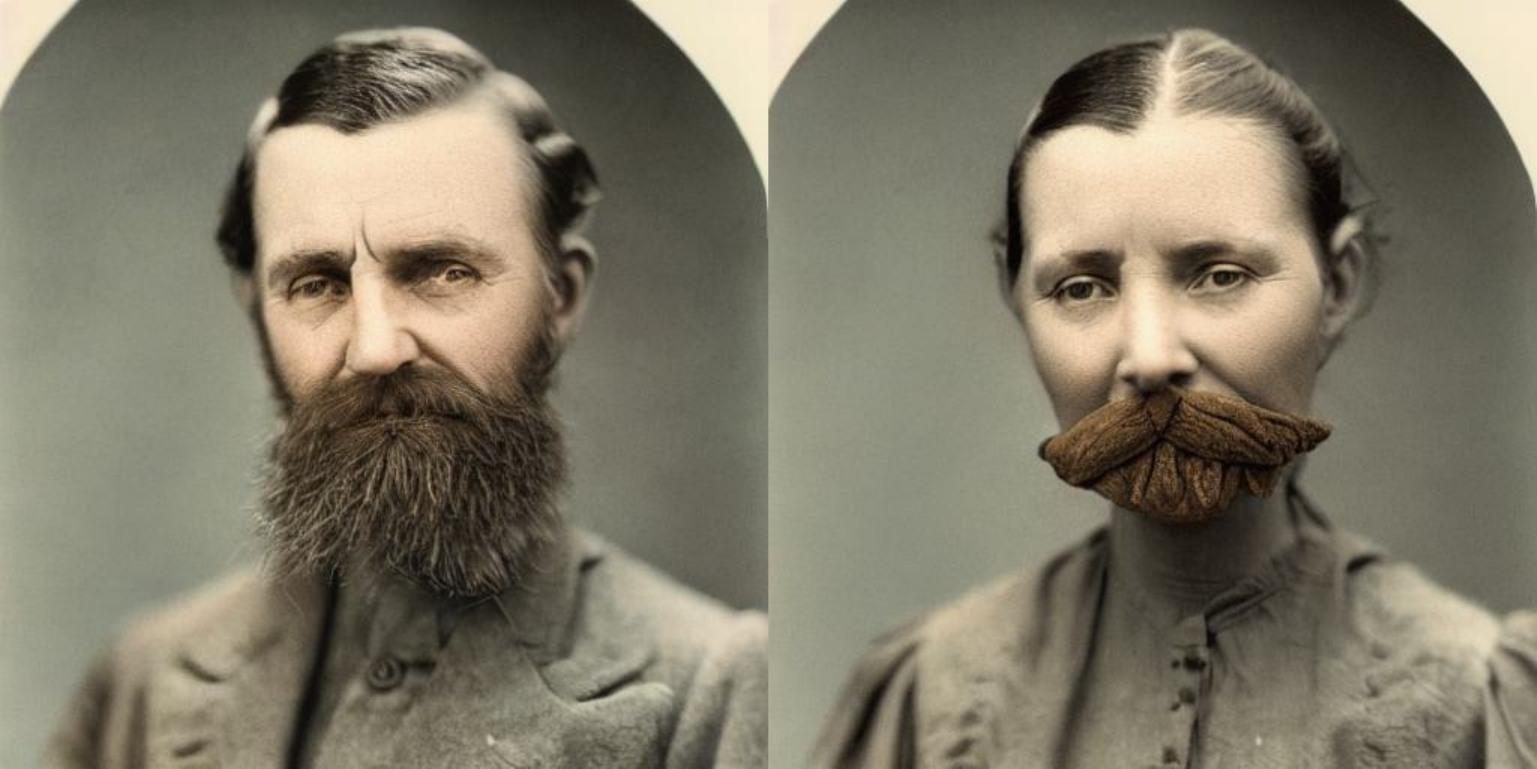}}  
    \subfigure%[Man to Woman] 
    {\label{fig:woman2manm}\includegraphics[width=20mm]{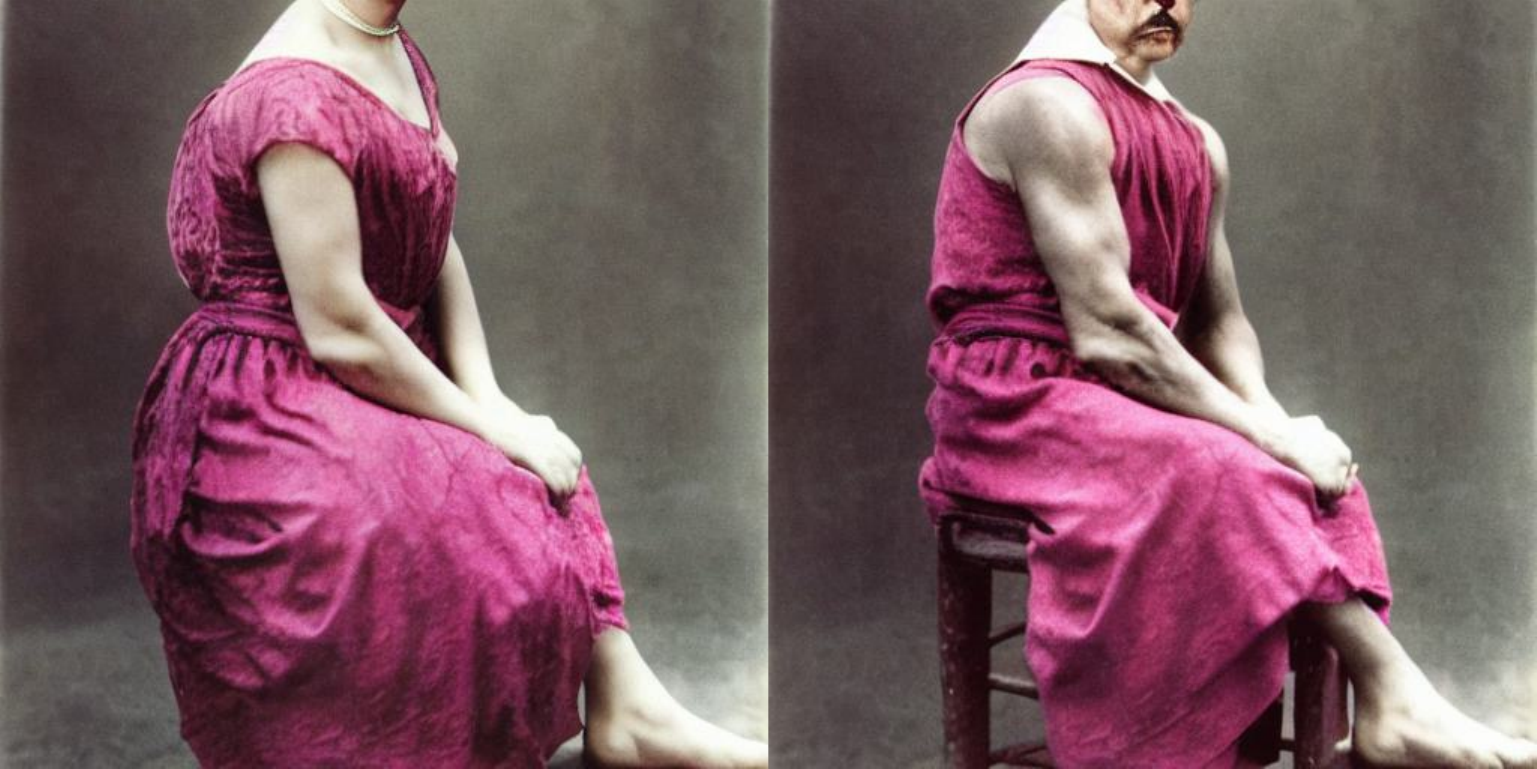}}
    % \captionsetup{skip=5pt}
       \subfigure%[Young to Old]
    {\label{fig:young2oldo}\includegraphics[width=20mm]{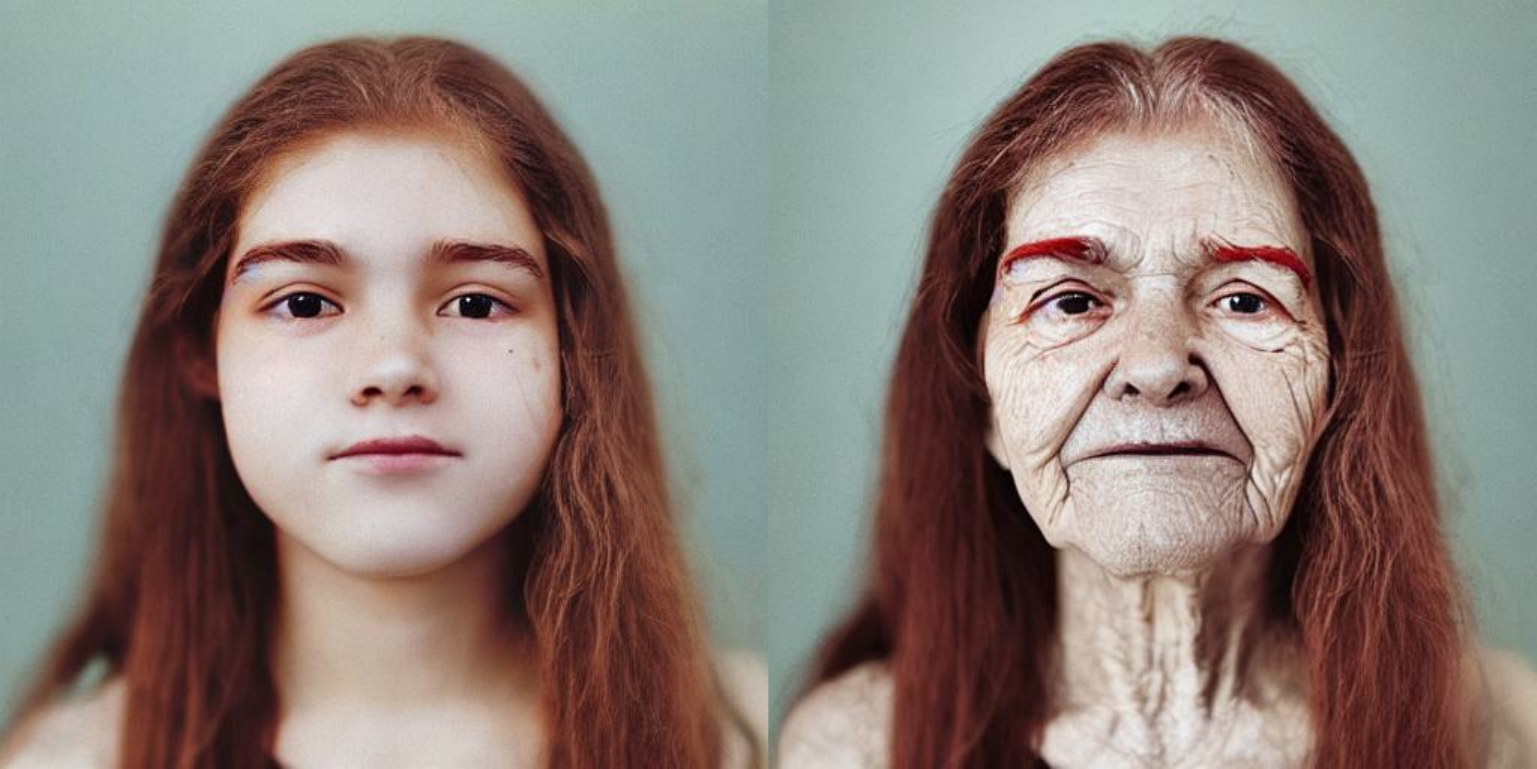}}
    \subfigure%[Old to Young]
    {\label{fig:old2youngy}\includegraphics[width=20mm]{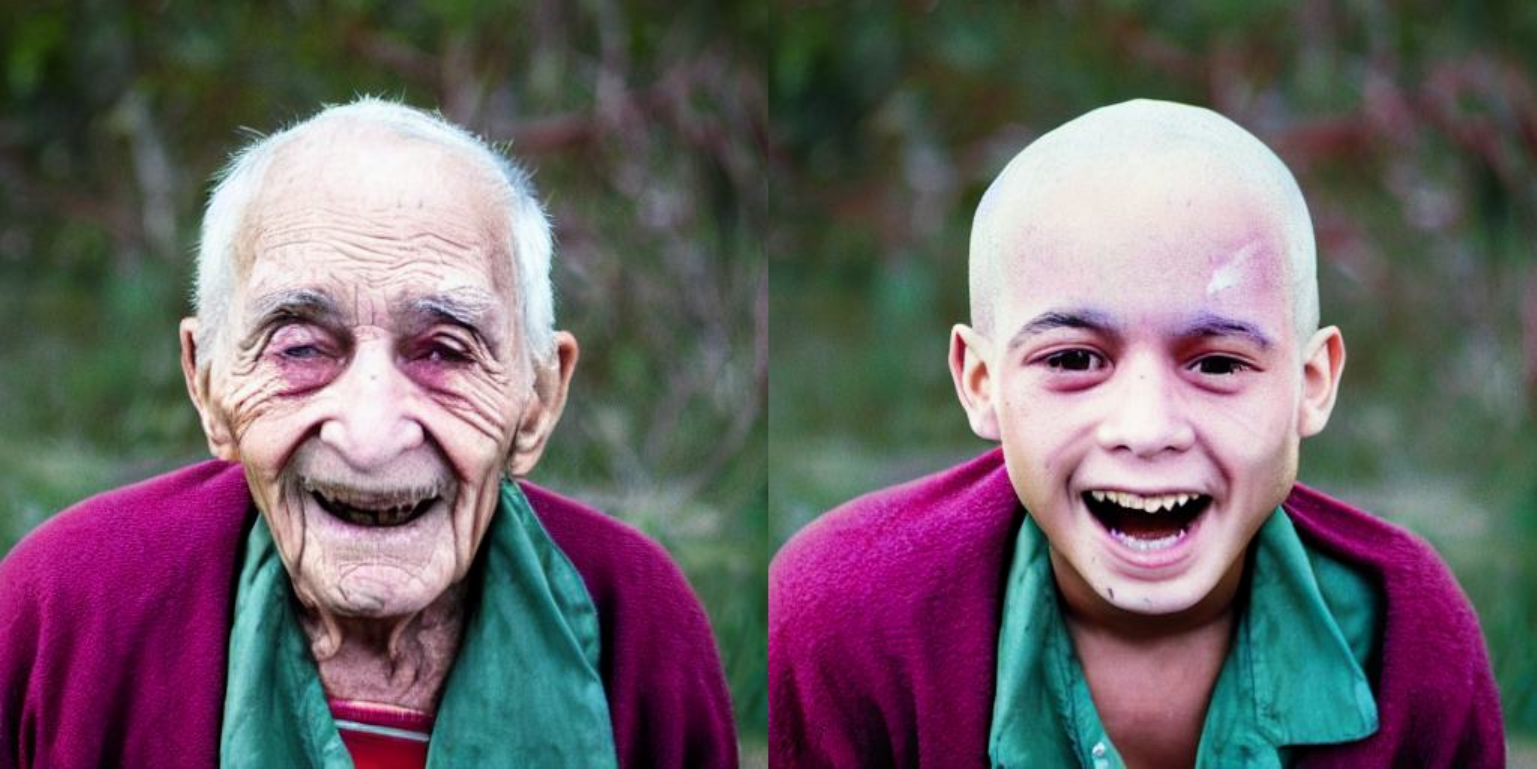}}
    \caption{Sampling from stable diffusion with (Top) vanilla sampling and (Bottom) switching sensitive attribute by the proposed method. After switching, the uncontrolled features such as hairstyle, dress, and mustache are maintained.}
    \label{fig:swt_stable_diff}
\end{figure}
% Attribute switching can also ensure data utility, particularly in classifiers training where the sensitive attribute is removed. For example, consider the classification task of ``Smiling" using the CelebA dataset. In this context, we conditioned a diffusion model on both the class ``Smiling" and the sensitive attribute ``Male". The result for classification task is in \Tabref{tab:smilemale}.
% Attribute switching can also ensure data utility for downstream tasks, particularly  training classifiers  where sensitive attributes, such as ``Male," are removed. For example, consider the ``Smiling" classification task with the CelebA dataset. We conditioned a diffusion model on both the class ``Smiling" and the sensitive attribute ``Male", with the results presented in  \Tabref{tab:smilemale}.
Attribute switching can also ensure data utility for downstream tasks, particularly in training classifiers  where sensitive attributes, such as ``Male," are removed. For example, consider the ``Smiling" classification task with the CelebA dataset. We conditioned a diffusion model on both the class ``Smiling" and the sensitive attribute ``Male". The classification results are in \Tabref{tab:smilemale}.
\begin{table}[!t]
\centering
% \resizebox{.5\textwidth}{!}{%
\begin{tabular}{l|cc||c|cc}
\hline
{Acc (\%) }& Smile ($\uparrow$) & Male (-)  & {}& Smile & Male \\ \hline
\multicolumn{1}{c|}{Vanilla} & \multicolumn{1}{r}{88.38} & \multicolumn{1}{r||}{93.04} & Ours & \multicolumn{1}{r}{88.14} & \multicolumn{1}{r}{55.71} \\ \hline
\end{tabular}
\caption{Classification accuracy trained with synthetic data.}
\label{tab:smilemale}
\end{table}
The classifier trained with synthetic data from vanilla sampling achieved high test accuracy in both ``Smiling" and ``Male", indicating both attributes are predictable by the synthetic data. In contrast, attribute switching maintains ``Smiling" accuracy while achieving fair ``Male" accuracy, close to 50\%.
%In conclusion, these results suggest that attribute switching is valuable for downstream tasks, as it allows for the transformation of the specific attribute of interest while preserving others.
%These suggest 
%that the accuracy for the uncontrolled ``Smile" attribute aligns with vanilla sampling while concealing the ``Male" attribute. This implies that attribute switching only modifies the sensitive attribute, without significantly affecting other attributes. 
%As the accuracy for the uncontrolled attribute aligns with vanilla sampling while concealing the sensitive attribute, t
These results indicate that attribute switching only modifies the sensitive attribute, without significantly affecting other attributes. This implies that attribute switching can be employed across various downstream tasks without raising fairness concerns.
% \hl{This implies that attribute switching can achieve other definitions of fairness that require a classifier for assessment.}% ($\approx$50\% is better) for fairness.
%This implies that attribute switching, by modifying only the sensitive attribute being switched, proves useful for downstream tasks without significantly affecting other attributes.

% This highlights the utility of attribute switching for modifying specific attributes without significantly impacting others in downstream tasks. Overall, these findings emphasize the value of attribute switching for transforming specific attributes of interest while preserving overall data quality.

%gives the condition  sampling from $\hat{X}_T$
% multi class 혹은 imbalanced와의 차이
% SOTA 모델과의 비교
%CIFAR SOTA \cite{karras2022elucidating}를 사용한 결과
% 평균 별로 안달라짐 + gap이 줄어듦 얘기하기
% Utility 측면에서도 fair: GAP을 그려서 + 평균값이 별로 안 나빠집니다..
% gap, worst 성능
% tau바뀌면서 그릴 필요 없이, fixed tau에 대해서 - 다른 방법들이랑 비교.
% 원래거(baseline)이랑만 비교하기.
% => set 마다..
% : 차이, 평균, worst
% Please add the following required packages to your document preamble:
% \usepackage{multirow}
% Please add the following required packages to your document preamble:
% \usepackage{multirow}
% Please add the following required packages to your document preamble:
% \usepackage{multirow}

\subsection{Sampling with Text-conditioning Models}
As we mathematically prove in \Thmref{thm:thm2}, we now demonstrate that the proposed method can be easily applied to any pre-trained model, regardless of whether the model is conditioned with multi-class labels or text information, just as \cite{balaji2022ediffi}. 
We used widely used pre-trained diffusion models with text conditional embedding, i.e., stable diffusion models \cite{rombach2022high}. For the sensitive attributes $s_0$ and $s_1$, we use guided text to generate images as follows: from \textit{``a color photo of the face of $\underline{\quad s_0 \quad}$"} to \textit{``a color photo of the face of $\underline{\quad s_1 \quad}$"}.
\Figref{fig:swt_stable_diff} illustrates the generated images using text conditioning: \{\textit{``man"},\textit{``woman"}\} and \{\textit{``young person"},\textit{``old person"}\} with $\tau$ determined by averaging four results using a batch size of 5, as in \Figref{fig:u_shape_total}(a). In \Figref{fig:swt_stable_diff}, 
each image sampled with a switching mechanism has characteristics that were not present with the vanilla sampling generated (e.g., a woman with a mustache and a man with a pink dress).
More generated images of Figures with different text guidance and results of mixing embedding can be found in the Appendix.

\begin{figure}[!t]
\centering     %%% not \center
    \includegraphics[width=80mm]{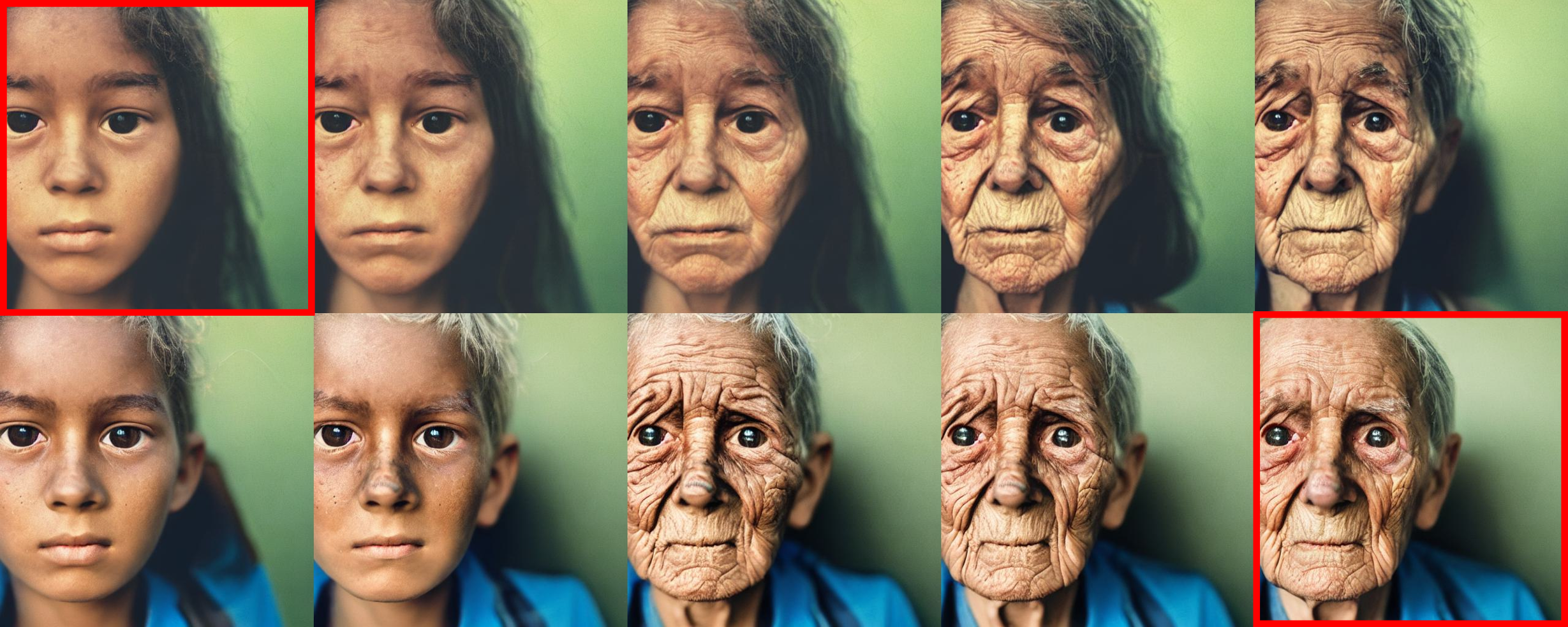}
    \includegraphics[width=80mm]{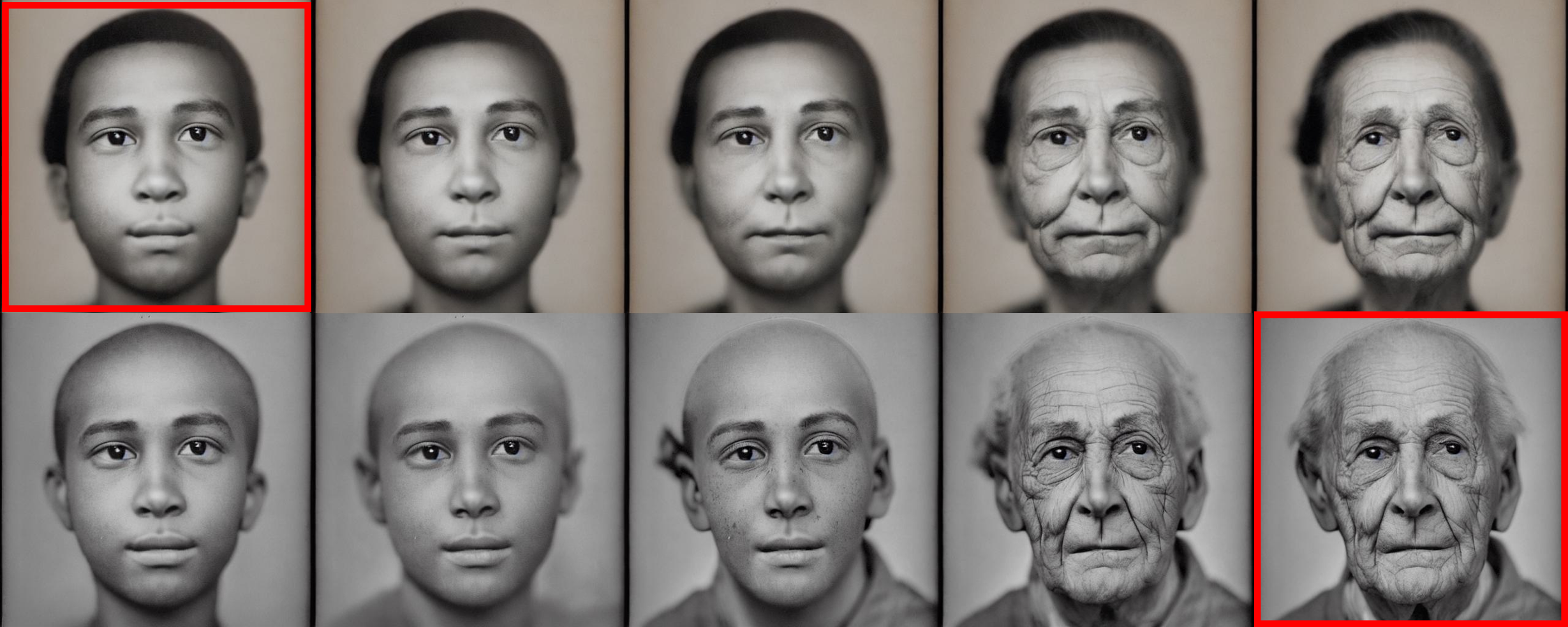}
    \caption{Generated pairwise images with the same random seeds. In each row, images with red borderlines denote vanilla sampling images. Beginning with these vanilla images, we progressively increase the value of $\tau$, moving further away from the initial vanilla image.}
    \label{fig:figure_stablediffusion_fixseed}
\end{figure}

% \Figref{fig:figure_stablediffusion} illustrates the generated images using text conditioning: \{\textit{``young person"},\textit{``old person"}\}, \{\textit{``man"},\textit{``woman"}\},\{\textit{``black firefighter"},\textit{``white firefighter"}\} with $\tau \in \{0,650,700,750,800,850,900,1000\}$ from left to right. The generated images retain qualities of pre-trained model without additional training while achieving fair images which have coarse part similar to $s_0$ but generate images based on $s_1$ for distributional fairness.

\Figref{fig:figure_stablediffusion_fixseed} shows pair-wise images with same random seed but altering the $s_0\rightarrow s_1$ and  $s_1\rightarrow s_0$ during sampling. We use the text condition of \{\textit{``young person"},\textit{``old person"}\} and plot $\tau\in\{500,600,700,800,1000\}$. By choosing the initial sensitive condition $s_0$ and switching condition $s_1$, we can control what property would be retained and what property would be modified. For example, the top image pairs show similar ages of intermediate images, while $s_0\rightarrow s_1$ and  $s_1\rightarrow s_0$ keep female and male properties, respectively.

\section{Conclusion \& Limitations}
%While attribute switching maintains coarse features of image, 
% attribute switching 이 high level feature인 coarse feature는 유지하면서 content 에 해당하는 부분만 바꾸려는 아이디어로 시작했기 때문에, distributution을 유사하게 잘 유지한다. 따라서, 대머리의 여자와 같이, 기존에 generation이 만들기 어려워 했던 이미지를 잘 생성함. 하지만, 얼굴에서의 콧수염 같이 content에 영향이 큰 부분은 없어지기도 했음. 따라서, 이러한 이미지가 unwanted bias가 추가적으로 발생할 수도 있음. carefully use하는 것이 필요함.
%\section{Conclusion}
In this paper, we investigated the fair sampling in diffusion models, focusing on $\epsilon$-fairness to generate distributionally similar images. To address the limitations of existing fairness frameworks, we present a novel switching sampling method that generates data satisfying both fairness and utility. We theoretically prove and experimentally show the effectiveness of the proposed methods in various settings, including pre-trained diffusion models with text embeddings.

While attribute switching preserves high-level features of images, it allows to generate the images with similar distributions. Therefore, it can generate images that were previously challenging, such as bald women.
However, elements with significant contextual impact tend to be removed during the process, which should be addressed later carefully.
\clearpage
\section{Ethics Statement}
This study focuses on addressing fairness issues in generative models, particularly in the context of image generation using diffusion models. Generative models have a broad spectrum of applications ranging from data augmentation and synthetic data creation for privacy preservation to generating novel content for creative and entertainment purposes. However, there is an inherent risk that these models may perpetuate or even exacerbate fairness issues, such as reinforcing stereotypes or introducing biases in their applications.

Our proposed method aims to enable pre-trained generative models to produce images, leveraging similar distributions across the groups defined by sensitive attributes such as race or gender. This approach avoids the need for additional classifier training, making it computationally efficient and readily applicable to existing models.

However, it's important to note that our study primarily focuses on distributional fairness, one facet of the multifaceted concept of fairness in generative models. Moreover, many existing studies primarily explore fairness within specific downstream tasks, neglecting broader considerations. A comprehensive understanding of fairness in generative models demands the counterfactual investigation of multiple factors but has been somewhat unexplored. 

We hope that our work will highlight the importance of evaluating and addressing fairness issues in generative models more comprehensively. Future research should expand upon this foundation, exploring various fairness factors in depth to ensure more equitable and unbiased applications of generative AI technologies.

\section{Acknowledgements}
This research was supported by the National Research Foundation of Korea (NRF) grant funded by the Korean government (MSIT) (No. RS-2023-00272502, No. NRF-2022R1F1A1065171, No. 2022R1A5A6000840) and  Institute of Information \& communications Technology Planning \& Evaluation (IITP) grant funded by the Korea government (MSIT) (No. 2020-0-01336, Artificial Intelligence graduate school support (UNIST)).
% This research was supported by the National Research Foundation of Korea(NRF) grant funded by the Korea government(MSIT) (No. RS-2023-00272502), National Research Foundation of Korea (NRF) and Institute of Information \& communications Technology Planning \& Evaluation(IITP) grant funded by the Korea government(MSIT)(No. NRF-2022R1F1A1065171, No. 2020-0-01336, Artificial Intelligence graduate school support(UNIST)), and the National Research Foundation of Korea (NRF) Grant funded by the Korean Government (MSIT) (No. 2022R1A5A6000840)
\bibliography{aaai24}
\clearpage
\appendix
\section{Implementation Details}
\paragraph{Diffusion Model}
% We implemented Algorithm \ref{alg:sampling} using  and 
We utilized the GitHub code available at \url{https://github.com/VSehwag/minimal-diffusion} to train diffusion models on the FairFace dataset \cite{karkkainenfairface} and the CelebA dataset \cite{liu2015faceattributes}.
For the FairFace dataset, we conducted training with a batch size of 128, a learning rate of 5e-05 , with the number of epochs 150. As for the CelebA dataset, all settings remained the same as for the FairFace dataset, except for the number of epochs: for the CelebA dataset, we trained for a total of 100 epochs. 
Specifically, we employed the FairFace dataset with the ``Race" label and selected two out of the seven race labels: ``Black" and ``White". In the case of the CelebA dataset, we used the labels ``Male" and ``Heave-Makeup".
Algorithm \ref{alg:sampling} was implemented with certain modifications to the minimal-diffusion GitHub code.
For the CIFAR10 dataset sampling used in Figure \ref{fig:pca}, we performed sampling from a pre-trained diffusion model form \cite{karras2022elucidating}, a state-of-the-art model. 
For the pre-trained stable diffusion model, we referenced the code provided at \url{https://github.com/huggingface/diffusers}.
\paragraph{Classifier}
To train the classifier for the Table \ref{tab:epsilon_fairness}, we employed a ResNet18~\cite{he2016deep} on the FairFace dataset. We utilized the SGD optimizer with a learning rate of 0.1 and a Cosine scheduler along with a momentum of 0.9, for a total of 100 epochs. For the model trained with the original dataset, we saved the model achieving the highest validation accuracy. This validation set was derived from the original test set. For this model, we utilize the entire training dataset for training. For training with synthetic data, we sampled 6400 instances with label 0 and an equal number of instances with label 1, resulting in a total of 12800 training examples. The model with the highest training accuracy, achieving 100\% accuracy, was saved.
%%%%%고칠 것
\paragraph{$\tau$-searching}
% For the diffusion model which was trained on CelebA dataset and FairFace dataset, we found $\tau = 640$ for each attribute (Male and Heavy-Makeup for CelebA, Race for FairFace), using a one-step search approach with batch size 256 and a search step of 10. 
For the diffusion model, trained on both the CelebA and FairFace datasets, we determined $\tau = 640$ for each attribute (Male and Heavy-Makeup for CelebA, Race for FairFace). This was achieved using a one-step search approach with a batch size of 256 and a search step of 10.
For the stable diffusion model, we utilized a batch size of 5 and a search step of 20. After repeating the search process 4 times and averaging the results, we found $\tau = 700$ for text conditioning on \{\textit{``man"}, \textit{``woman"}\}, and $\tau = 760$ for \{\textit{``young person"}, \textit{``old person"}\}.

% For the CelebA with attribute ``Male", we found $\tau = 640$, using a one-step search approach with a batch size of 64 and a search step of 20.
% For the CelebA with attribute ``Heavy-makeup" , we found $\tau = 640$ with the same $\tau$ searching hyper-parameters.
\section{Proofs}
In this section, we provide a mathematical proof of \Thmref{thm:thm_fair} and \Thmref{thm:thm2}. 

\begin{T2}
(Restated)
    \textbf{(Fair condition of transition point $\boldsymbol{\tau}$)} 
    Let $\tau$ be a transition point satisfying 
    \Eqref{eq:tau_sum}
    where satisfying \Eqref{eq:diff}.
    Then, 
    the generated distribution from the following reverse-ODE
    becomes independent of the sensitive attribute, where $S_t$ is from \Eqref{eq:s_t}.
\end{T2}

\begin{proof}
Let $X$ be the latent variable of the forward process in \Eqref{eq:forward}, and $\bar{X}$ be the latent variable of the reverse process in \Eqref{eq:reverse_ode}. %We denote the $\hat{X}$ be the latent variable of the following reverse attribute switching process:
% \begin{equation}
% \label{eq:switching_ode}
% d\hat{X_t} = (f(\hat{X_t}, t) - \frac{1}{2}g^2(t)\nabla_x\log p_t(\hat{X_t}|S_t))dt,
% \end{equation}
 % = I(t\geq\tau) *  \tilde{Y} + I(t<\tau) *Y$\\
Then, by \Eqref{eq:reverse_ode},
\begin{align*}
    \bar{X_T}^{(s_i)} - \bar{X_0}^{(s_i)} = \bar{X_T}^{(s_i)} - \bar{X_\tau}^{(s_i)}+\bar{X_\tau}^{(s_i)} - \bar{X_0}^{(s_i)}\\
    %= \int_0^T (f(\bar{X_t}, t) - \frac{1}{2}g^2(t)\nabla_x\log p_t(\bar{X_t}|Y=0))dt\\
     = \int_0^T f(\bar{X_t}, t)dt - \frac{1}{2}\int_0^T g^2(t)\nabla_x\log p_t(\bar{X_t}|s_i))dt,
     %- \frac{1}{2}\int_\tau^T g^2(t)\nabla_x\log p_t(\bar{X_t}|s_i))dt
\end{align*}
where $s_i$ is the sensitive attribute of the latent variable $\bar{X}$, and $s_i\in\{s_0, s_1\}$. Then, by Equations \peqref{eq:s_t} and \peqref{eq:switching_ode}, 
\begin{align*}
    \hat{X_T}^{(s_1)} - \hat{X_0}^{(s_1)} = \bar{X_T}^{(s_0)} - \bar{X_\tau}^{(s_0)}+\bar{X_\tau}^{(s_1)} - \bar{X_0}^{(s_1)}.
\end{align*}
Note that we labeled the sensitive attribute of $\hat{X}$ as $s_1$.
Now, we aim to find $\tau$ such that $\hat{X_0}^{(s_1)}\overset{D}{=}\hat{X_0}^{(s_0)}$, where $\hat{X_T}^{(s_1)}\overset{D}{=}\hat{X_T}^{(s_0)} \sim \mathcal{N(\textbf{0}, \textbf{I})}$. 
%Let 
% \begin{equation*}
%     D(t) = g^2(t)(\nabla_x\log p_t(\bar{X_t}|s_0) -\nabla_x\log p_t(\bar{X_t}|s_1)).
%     % \label{eq:diff}
% \end{equation*}
Then, 
{\small\begin{align*}
\bar{X_0}^{(0)} - \bar{X_\tau}^{(0)} - (\bar{X_0}^{(1)} - \bar{X_\tau}^{(1)} )
= \frac{1}{2}\int_0^\tau D(t)dt
= \frac{1}{4}\int_0^T D(t)dt,
\end{align*}}
where \Eqref{eq:tau_sum} satisfies.
\end{proof}

\begin{T1}
(Restated) Assuming a pre-trained model is trained with \Eqref{eq:forward}, and the subsequent reverse-ODE represents an equivalent probability flow ODE corresponding to the pre-trained model \Eqref{eq:reverse_ode}. Then, the solution of \Eqref{eq:switching_ode} has the same distribution as the pre-trained ODE.
\end{T1}

\begin{proof} 
    Let the solution of conditional SDE \Eqref{eq:forward} be $X_t\sim p_t(X_t|S)$, and the solution of reverse SDE \eqref{eq:reverse} $\bar{X}_t\sim p_t(\bar{X}_t|S)$. Then, by \citet{anderson1982reverse}, the solution of \Eqref{eq:forward} and \Eqref{eq:reverse} solves the same Fokker-Plank equation, leading to the conclusion that $p_t(X_t|S) \overset{D}{=} p_t(\bar{X}_t|S)$. Similar to this previous work, we want to show that  $p_t({X}_t|S) \overset{D}{=} p_t(\hat{X}_t|S_t)$.
    Firstly, the probability density $p_t$ of random variable $X_t$ satisfies the Fokker-Plank equation as follow:
    \begin{align}\label{eq:FP}%\label{eq:LHSFP}
    \begin{split}
    &\partial_t p_t(X_t|S)=\\& -\nabla_x\cdot(f(X_t, t)p_t(X_t|S))+\frac{1}{2}Tr(g(t)^T\nabla_x^2p_t(X_t|S)g(t))
    \end{split}
    %\label{eq:RHSFP}
    \end{align}
    Then, let the solution of \Eqref{eq:switching_ode} $\hat{X}_t\sim p_t(\hat{X}_t|S_t)$, where $S_t$ is given by \Eqref{eq:s_t}. In order to establish $p_t(X_t|S) \overset{D}{=} p_t(\hat{X}_t|S_t)$ $\forall t$, we show that if $p_t(X_t|S)$ satisfies the Fokker-Planck equation, then $p_t(\hat{X}_t|S_t)$ satisfies the same equation. Since $P(X_t, S_t) = P(X_t, S_t, S = S_t) + P(X_t, S_t, S \neq S_t)$, we can easily show that ${p_t(S_t|X_t)}= p(S|X_t)I(t>\tau) + (1- p(S|X_t))I(t<\tau)$, so that we can
    %Without loss of generality, let $X_t\sim p_t(X_t|s_1)$. Then, 
    analyze $p_t$ by dividing it into two distinct cases based on the value of $t$.\\
    Case 1. $t>\tau$\\
    The solution of reverse-ODE of the pre-trained model satisfies 
    the \Eqref{eq:FP}. Then, the left-hand side of 
    \Eqref{eq:FP} is
    % \begin{align}\label{eq:LHSFP_orig}
    % \begin{split}
    %     &\partial_t \frac{p(S|\bar{X}_t)p_t(\bar{X}_t)}{p(S)}\\ &= \frac{1}{p(S)}\{(\partial_tp(S|\bar{X}_t))p_t(\bar{X}_t) +  p(S|\bar{X}_t)\partial_t p_t(\bar{X}_t)\}.
    % \end{split}
    % \end{align} 
    \begin{align}\label{eq:LHSFP_orig}
    \begin{split}
        &\partial_t \frac{p(S|{X}_t)p_t({X}_t)}{p(S)}\\ &= \frac{1}{p(S)}\{(\partial_tp(S|{X}_t))p_t({X}_t) +  p(S|{X}_t)\partial_t p_t({X}_t)\}.
    \end{split}
    \end{align} 
    The same equation for $\hat{X}_t$ is
    \begin{align}\label{eq:LHSFP_switching}
    \begin{split}
    &\partial_t p_t(\hat{X}_t|S_t) =\partial_t \frac{p_t(S_t|\hat{X}_t)p_t(\hat{X}_t)}{p_t(S_t)} \\&= \frac{1}{1-p(S)}\{-(\partial_tp(S|\hat{X}_t))p_t(\hat{X}_t) +(1-  p(S|\hat{X}_t))\partial_t p_t(\hat{X}_t)\} \\&= \frac{1}{1-p(S)}\partial_t p_t(\hat{X}_t) - \frac{p(S)}{1-p(S)}\partial_t p_t(\hat{X}_t|S)
        % &\frac{p_t(S_t|X_t)p_t(X_t)}{p_t(S_t)} \\&= \frac{(1-p_t(S|X_t))p_t(X_t)}{1-p(S)}  = \frac{p_t(X_t)}{1-p(S)} - \frac{p(S)}{1-p(S)}p_t(X_t|S).
        % \partial_t \frac{p(S|\hat{X}_t)p_t(\hat{X}_t)}{p(S)}\\ &= \frac{1}{p(S)}\{(\partial_tp(S|\hat{X}_t))p_t(\hat{X}_t) +  p(S|\hat{X}_t)\partial_t p_t(\hat{X}_t)\}.
    \end{split}
    \end{align} 
    % Now, all we need is to show if the distribution of solution of proposed method is the same as pre-trained model. Since    
    % $\partial_t p_t(X_t|Y_t) =\partial_t \frac{p_t(Y_t|X_t)p_t(X_t)}{p_t(Y_t)} \\= \frac{1}{1-p(Y)}\{-(\partial_tp(Y|X_t))p_t(X_t) +(1-  p(Y|X_t))\partial_t p_t(X_t)\} \\= \frac{1}{1-p(Y)}\partial_t p_t(X_t) - \frac{p(Y)}{1-p(Y)}\partial_t p_t(X_t|Y)$, 
    The right-hand side of \Eqref{eq:FP} for $\hat{X}_t$ is 
    \begin{align}
    \begin{split}
    &-\nabla_x\cdot\left(f(\hat{X}_t, t)\frac{(1-p_t(S|\hat{X}_t)) p(\hat{X}_t)}{1-p(S)}\right)\\
    &+\frac{1}{2}Tr\left(g(t)^T\nabla_x^2\left(\frac{(1-p_t(S|\hat{X}_t)) p(\hat{X}_t)}{1-p(S)}\right)g(t)\right)
    \end{split}\\
    \begin{split}
    =-\nabla_x\cdot\left(f(\hat{X}_t, t)(\frac{p_t(\hat{X}_t)}{1-p(S)} - \frac{p(S)}{1-p(S)}p_t(\hat{X}_t|S_t))\right)\\
    +\frac{1}{2}Tr\left(g(t)^T\nabla_x^2\left(\frac{p_t(\hat{X}_t)}{1-p(S)} - \frac{p(S)}{1-p(S)}p_t(\hat{X}_t|S)\right)g(t)\right)
    \end{split}\\\small
    \begin{split}\label{eq:RHSFP_switching}
    =&\frac{1}{1-p(S)}\left(-\nabla_x\cdot f(\hat{X}_t, t)(p_t(\hat{X}_t)+\frac{1}{2} Trg(t)^T\nabla_x^2{p_t(\hat{X}_t)}g(t)\right)\\
    &-\frac{p(S)}{1-p(S)} \cdot LHS \text{ of }\Eqref{eq:FP}
    \end{split}
    \end{align}
When substituting the $p_t(\hat{X}_t|S)$ that satisfies \Eqref{eq:FP} into \Eqref{eq:LHSFP_switching} and \Eqref{eq:RHSFP_switching}, we can easily demonstrate that  \Eqref{eq:LHSFP_switching} = \Eqref{eq:RHSFP_switching}.
    % $-\partial_x(f(X_t, t)(\frac{p_t(X_t)}{1-p(Y)} - \frac{p(Y)}{1-p(Y)}p_t(X_t|Y_t))+   g(t)^2\partial_x^2(\frac{p_t(X_t)}{1-p(Y)} - \frac{p(Y)}{1-p(Y)}p_t(X_t|Y_t))/2$
    % = $\partial_x(f(X_t, t)(\frac{p_t(X_t)}{1-p(Y)})+ g(t)^2\partial_x^2(\frac{p_t(X_t)}{1-p(Y)})/2 - \frac{p(Y)}{1-p(Y)}orig$
    This means the two probability distribution is in distributionally same.

    By Case 1, for $t\leq\tau$, the reverse-ODE \Eqref{eq:reverse_ode} and \Eqref{eq:switching_ode} are the same equation, start from the same initial state. Therefore, $p_t({X}_t|S) \overset{D}{=} p_t(\hat{X}_t|S_t)$.
\end{proof}

\section{Ablation study}
\subsection{Additional Experiments for $\epsilon$-fairness}
In this section, we provide additional experiments for the CelebA dataset with the attribute ``Male", which is the imbalanced dataset. Training details are the same as mentioned in the Implementation details section. As shown in Table \ref{tab:epsilon_fairness}, even the imbalanced dataset, our method achieved best BER.\\ 
% Note that Editing needs twice the time of diffusion generation - generating and editing. Moreover, DiffEdit is pre-trained on the images with resolution $256\times256$, so the quality of edited images with resolution $64\times64$ was not good, as shown in \Figref{fig:caption}. \hl{More visualized example in} \Figref{}.
\begin{table}[!h]
\centering
\begin{tabular}{l|l|rrrr}
\hline
 & Methods & \multicolumn{1}{l}{$S = 0$} & \multicolumn{1}{c}{$S = 1$} & \multicolumn{1}{c}{gap} & \multicolumn{1}{c}{BER} \\\hline
\multicolumn{1}{c|}{} & Real & 0.90 & 0.95 & 0.05 & 0.93 \\\hline
\multirow{3}{*}{\begin{tabular}[c]{@{}c@{}}Syn\\$\rightarrow$\\Orig\end{tabular}} & Vanilla & 1.95 & 2.16 & 0.20 & 2.05 \\
 & Mixing & 38.13 & 37.63 & 0.50 & 37.88 \\
% & Editing & 44.73 & 55.83 & 11.11& \textbf{49.41} \\
 & Ours & 34.97 & 47.75 & 12.78 & \textbf{41.36} \\\hline
\multirow{3}{*}{\begin{tabular}[c]{@{}c@{}}Orig\\$\rightarrow$\\ Syn\end{tabular}} & Vanilla& 6.92 & 3.85 & 3.07 & 5.38 \\
 & Mixing  & 42.85 & 27.81 & 15.04 & 35.33 \\
% & \hl{Editing} & & & \\
 & Ours& 61.13 & 56.43 & 4.70 & \textbf{58.78} \\\hline
\end{tabular}
 \caption{The error rates on CelebA with attribute ``Male" .}
\end{table}

%mixing 사진
\subsection{The stability of $\tau$-searching algorithm}
We analyze the results of \Eqref{eq:tau_diff} vary the batch size of the $\tau$-searching algorithm,  within the range $\{2, 4, 8, 16, 32, 64, 128, 256, 512, 768\}$, employing repeated sampling with a maximum batch size of 256. \Figref{fig:u_shape_batch} illustrates the results with FairFace dataset. In the figure, the transition point $\tau$ remains steady after the batch size is sufficiently large. This result supports that there is no need to compute the difference individually.

\begin{figure}
    \centering
    \includegraphics[width=0.8\linewidth]{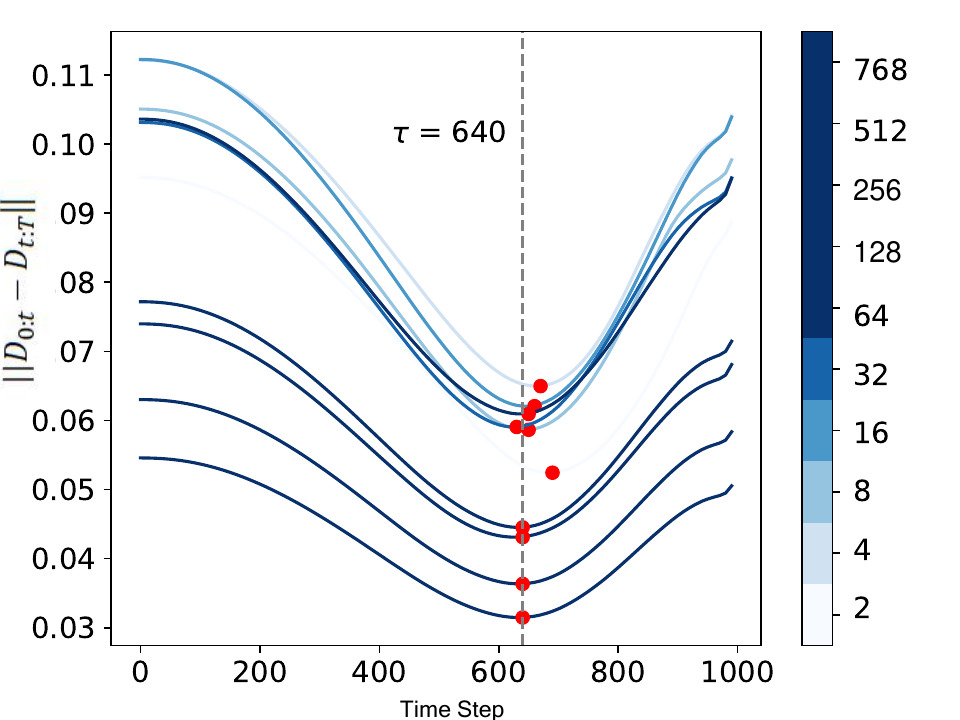}
    \captionof{figure}{Obtained optimal $\tau$ with different batch sizes.}
    \label{fig:u_shape_batch}
\end{figure}

\subsection{Dimension Reduction with CIFAR10}

\Figref{fig:pca_app} illustrate the dimension reduction (PCA, $n=2$) results for both vanilla sampling and the proposed methods are presented for Bird-Truck, Bird-Frog, and Frog-Truck pairs within the CIFAR-10 datasets. We compute principal component vectors using vanilla sampling for each dataset pair. The proposed methods show a large portion of overlapped parts.

\begin{figure*}[!h]
\centering     %%% not \center
 \subfigure[Vanilla (Bird-Truck)]{\label{fig:bird_truck_vanilla}\includegraphics[width=55mm]{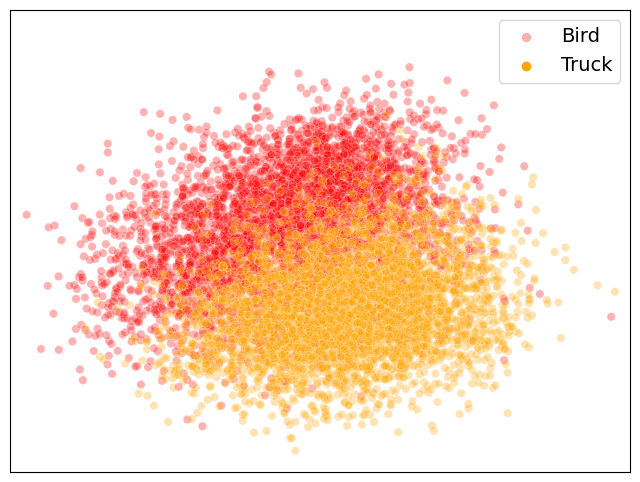}}
    \subfigure[Vanilla (Bird-Frog)]{\label{fig:bird_frog_vanilla}\includegraphics[width=55mm]{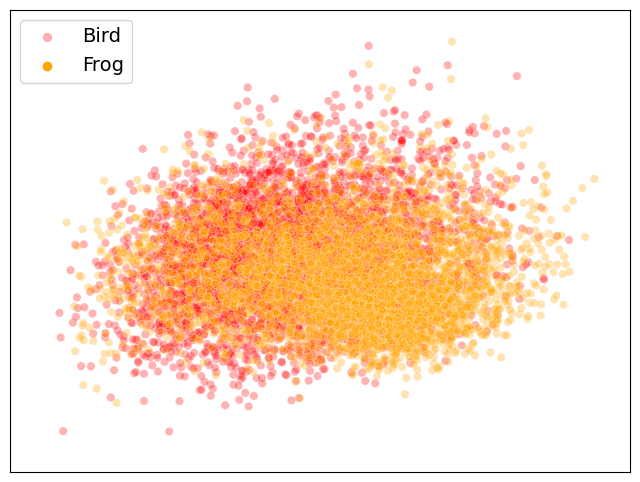}}
    \subfigure[Vanilla (Frog-Truck)]{\label{fig:frog_truck_vanilla}\includegraphics[width=55mm]{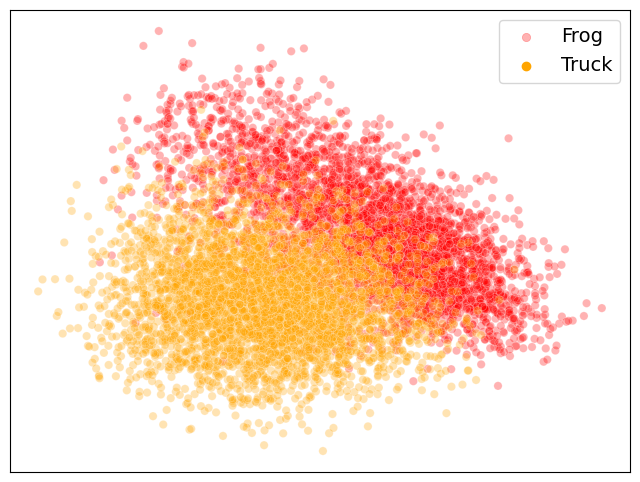}}
    
    \subfigure[Ours (Bird-Truck)]{\label{fig:bird_truck_ours}\includegraphics[width=55mm]{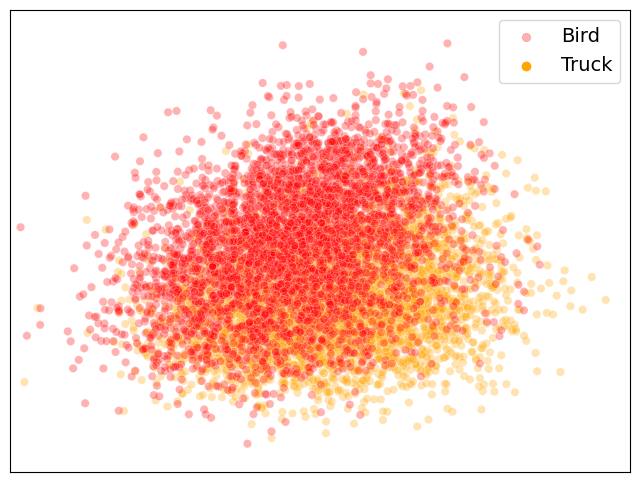}}
    \subfigure[Ours (Bird-Frog)]{\label{fig:bird_frog_ours}\includegraphics[width=55mm]{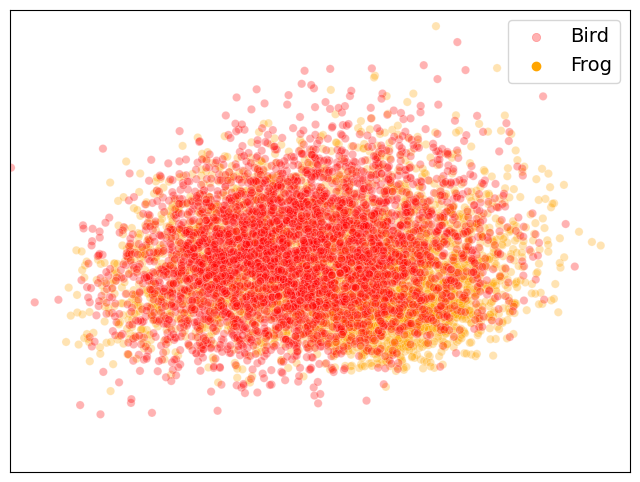}}
    \subfigure[Ours (Frog-Truck)]{\label{fig:frog_truck_ours}\includegraphics[width=55mm]{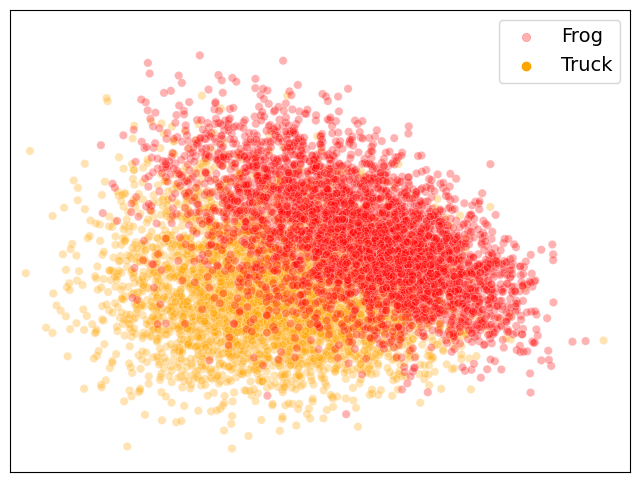}}
    \caption{Comparison of dimension reduction (PCA, $n=2$) results for both vanilla sampling and the proposed methods. The datasets are the same as in \Figref{fig:pca}, i.e., Bird-Truck, Bird-Frog, and Frog-Truck pairs within the CIFAR-10 datasets.}
    \label{fig:pca_app}
\end{figure*}

\subsection{Image Captioning}

\begin{figure}
    \centering
    \includegraphics[width=\linewidth]{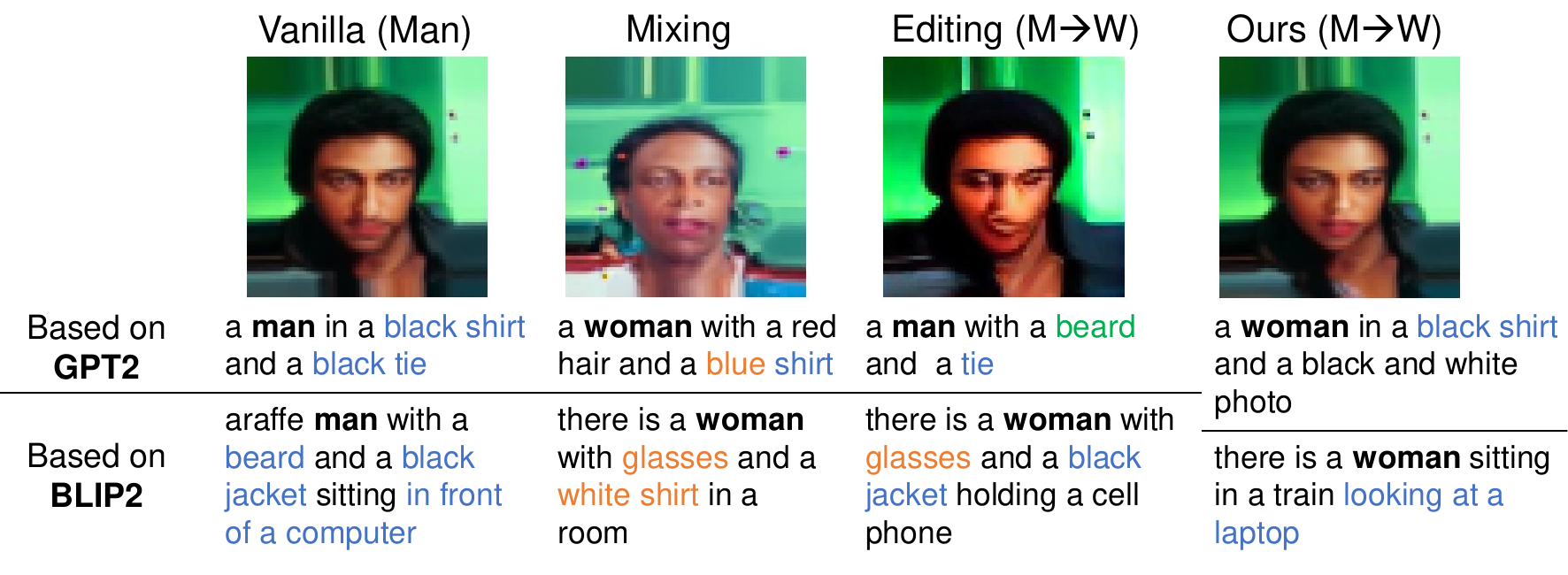}
    \captionof{figure}{Visualization and captioning of sampled images with LLM models.}
    \label{fig:caption}
\end{figure}
To verify that the switching mechanism maintains the original image distribution except for the sensitive attribute, we captioned the sampled image with LLMs (GPT2 and BLIP2).
\Figref{fig:caption} illustrates the images sampled from the diffusion model trained with the CelebA dataset. Blue represents the retained original context, while orange represents differences. Green denotes features not present in the original captioning but existing in the original image. Our method effectively preserves other features, with high quality. 
\subsection{Visualization}
\paragraph{Stable diffusion}

\Figref{fig:figure_stablediffusion} illustrates the generated images using text conditioning: \{\textit{``young person"}, \textit{``old person"}\}, \{\textit{``man"}, \textit{``woman"}\}, \{\textit{``black firefighter"}, \textit{``white firefighter"}\} with $\tau \in \{0,650,700,750,800,850,900,1000\}$ from left to right. The generated images retain qualities of pre-trained model without additional training while achieving fair images which have coarse part similar to $s_0$ but generate images based on $s_1$ for distributional fairness.

\begin{figure*}[!ht]
\centering     %%% not \center
    \includegraphics[width=140mm]{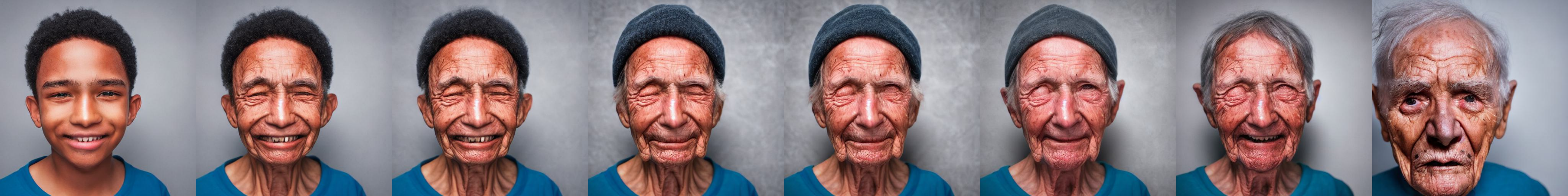}
    \includegraphics[width=140mm]{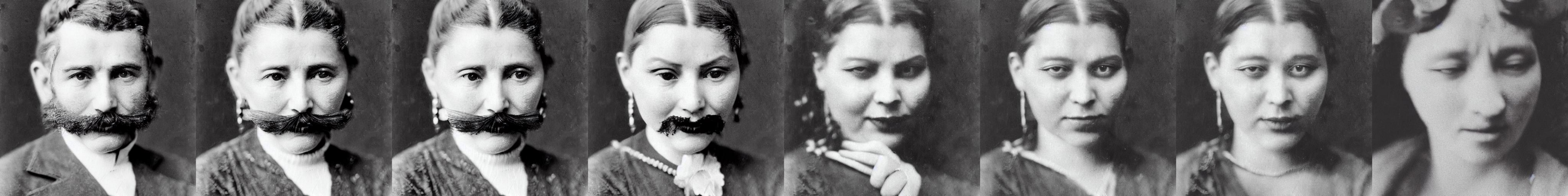}
    \includegraphics[width=140mm]{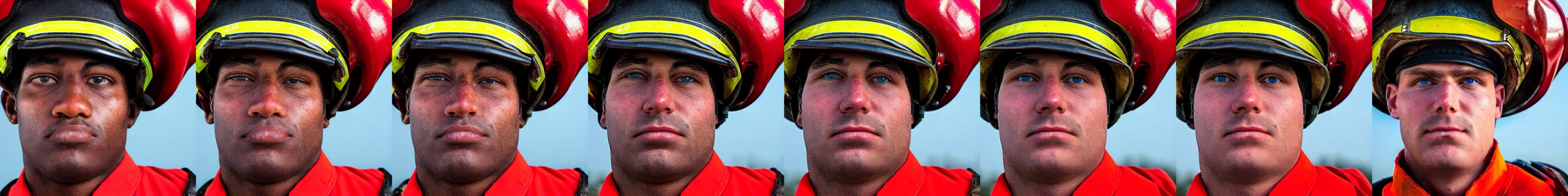}
    \caption{Generated fair images via Attribute Switching mechanism with $\tau \in \{0,650,700,750,800,850,900,1000\}$. $\tau=0$ (left) indicates vanilla sampling with $s_0$ and $\tau=1000$ (right) indicates vanilla sampling with $s_1$. While our method yield high quality images, it also achieves fairness.}
    \label{fig:figure_stablediffusion}
\end{figure*}
\paragraph{CelebA}
We visualize the sampled images with fixed seed = 0. 
Sampling with the diffusion model trained with the settings mentioned in the last section. As the diffusion model trained only 100 epochs, the model may not fully learn the total manifold of embedding space. Here, we can see that the model which is not fully trained, has lower-quality images sampled with mixing.\\
While attribute switching involves switching an attribute while maintaining the original image, mixing embedding can generate different images. 
More importantly in many cases, images sampled with a label of 1 and a probability of 0.6 were nearly identical to images with a label of 0 and the same probability. This means that we need to sample across the range of desired $p$ values to find the suitable one. On the other hand, attribute switching allows only one sampling attempt after storing the value.

\begin{figure*}[!ht]
\centering    
    \subfigure[Vanilla]{\includegraphics[width=55mm]{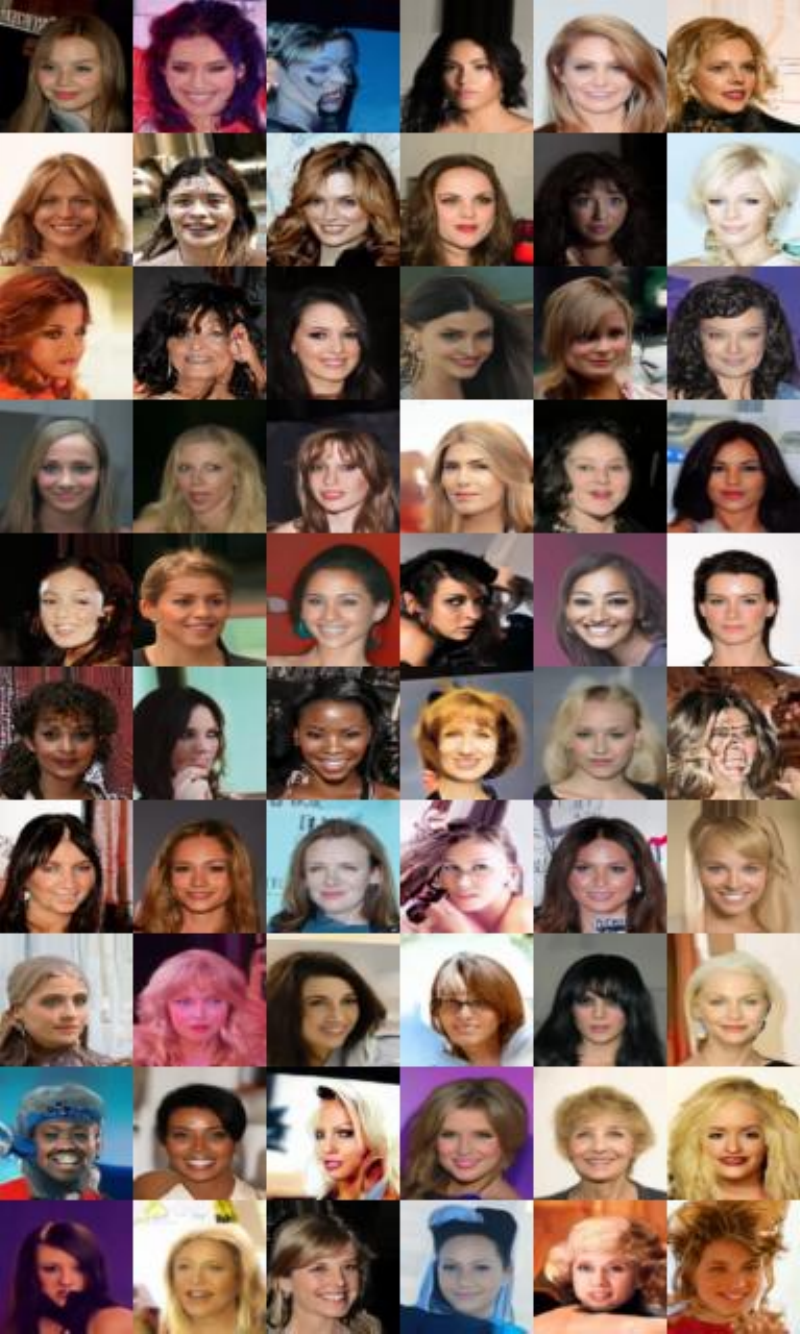}}
    \subfigure[Mixing with $p=0.6$]{\includegraphics[width=55mm]    {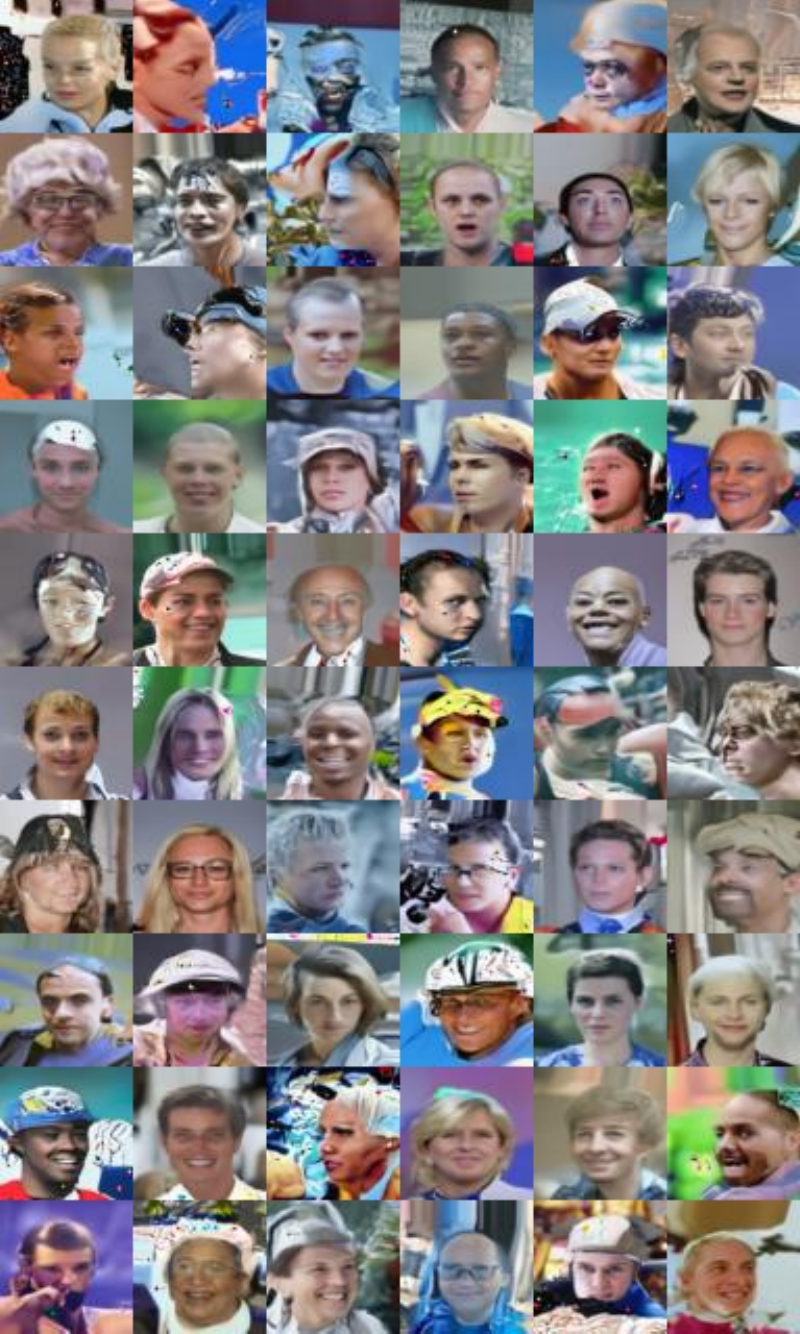}}
    \subfigure[Switching with $\tau = 640$]{\includegraphics[width=55mm]{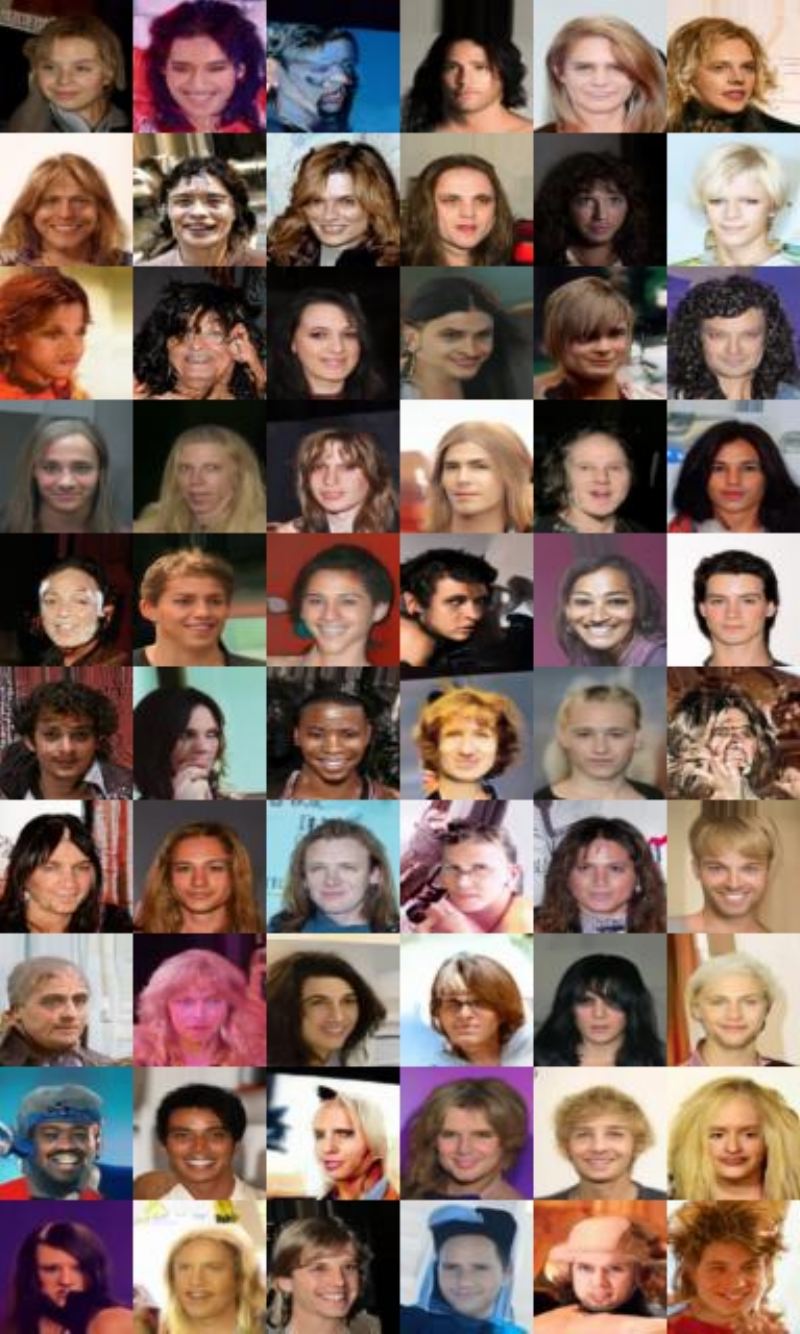}}
    \caption{Mixing and Switching on CelebA data with the Male attribute: Female to Male}
    % \label{fig:figure_stablediffusion}
\end{figure*}

\begin{figure*}[!ht]
\centering    
    \subfigure[Vanilla]{\includegraphics[width=55mm]{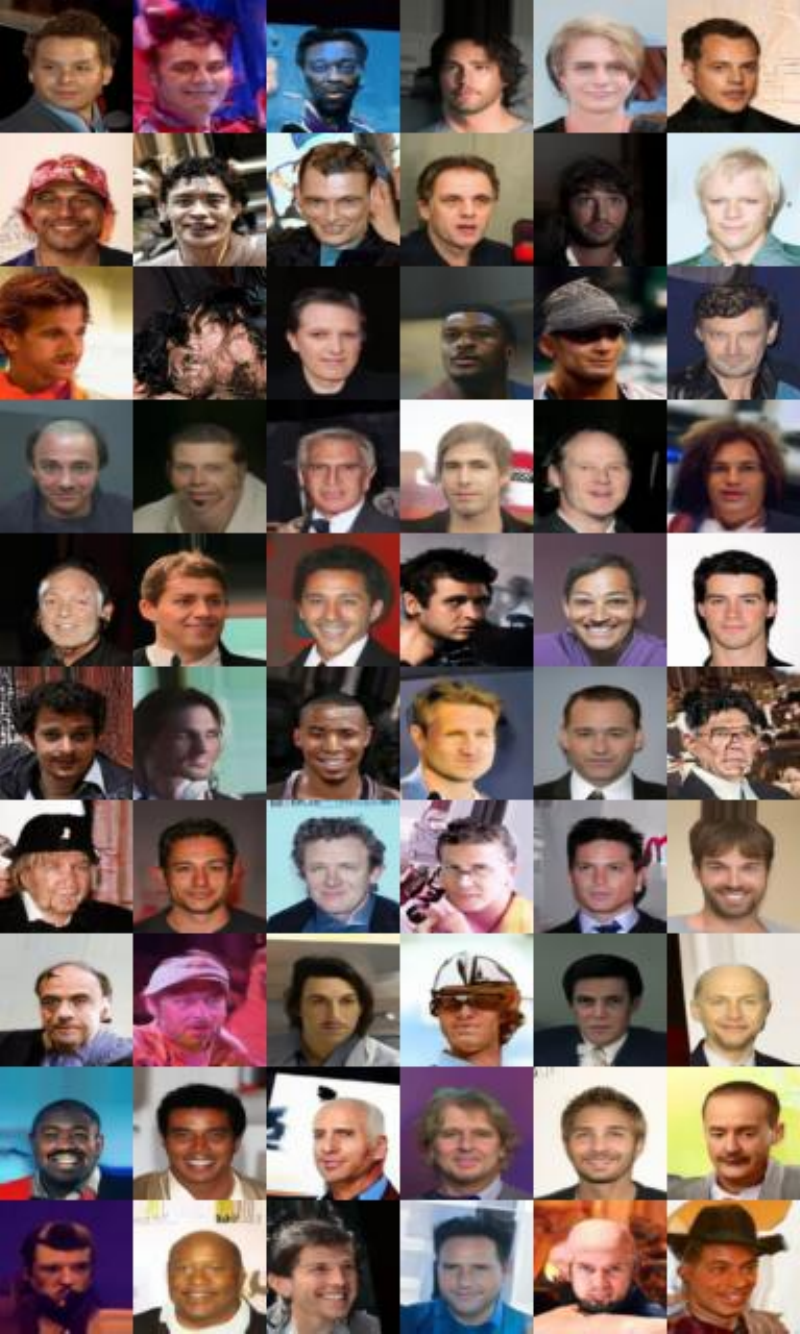}}
    \subfigure[Mixing with $p=0.6$]{\includegraphics[width=55mm]    {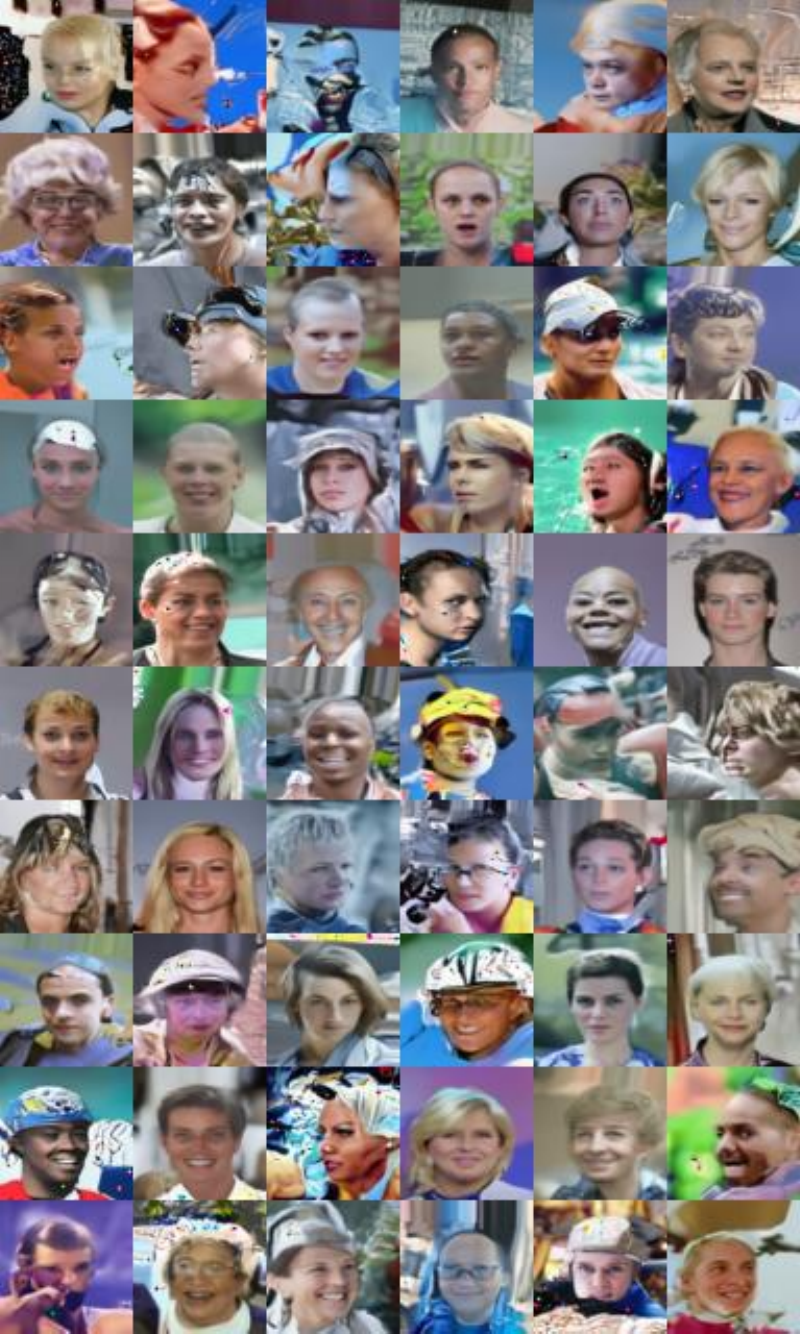}}
    \subfigure[Switching with $\tau = 640$]{\includegraphics[width=55mm]{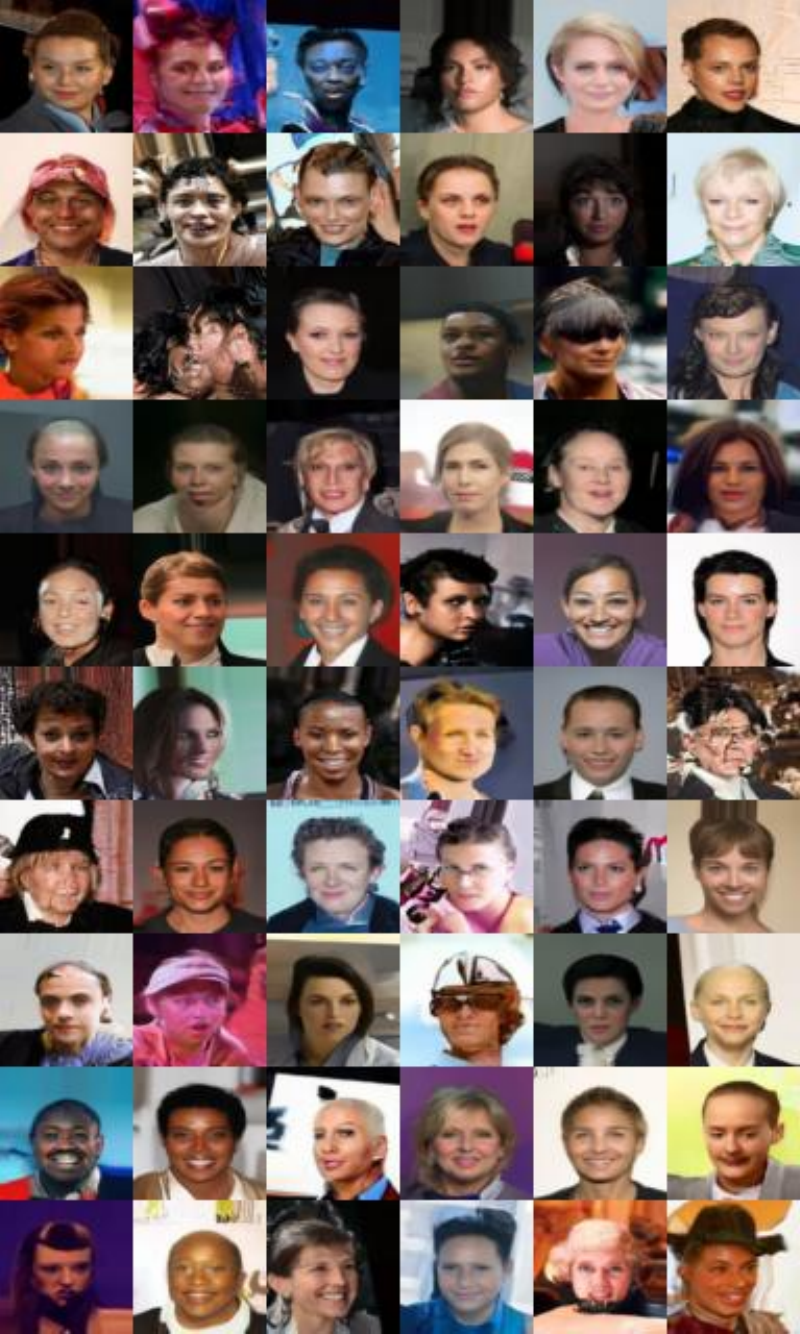}}
    \caption{Mixing and Switching on CelebA data with the Male attribute: Male to Female}%, 머리카랑 등 dist는 그대로 유지, 얼굴만 남자로}
    % \label{fig:figure_stablediffusion}
\end{figure*}

\begin{figure*}[!ht]
\centering    
    \subfigure[Vanilla]{\includegraphics[width=55mm]{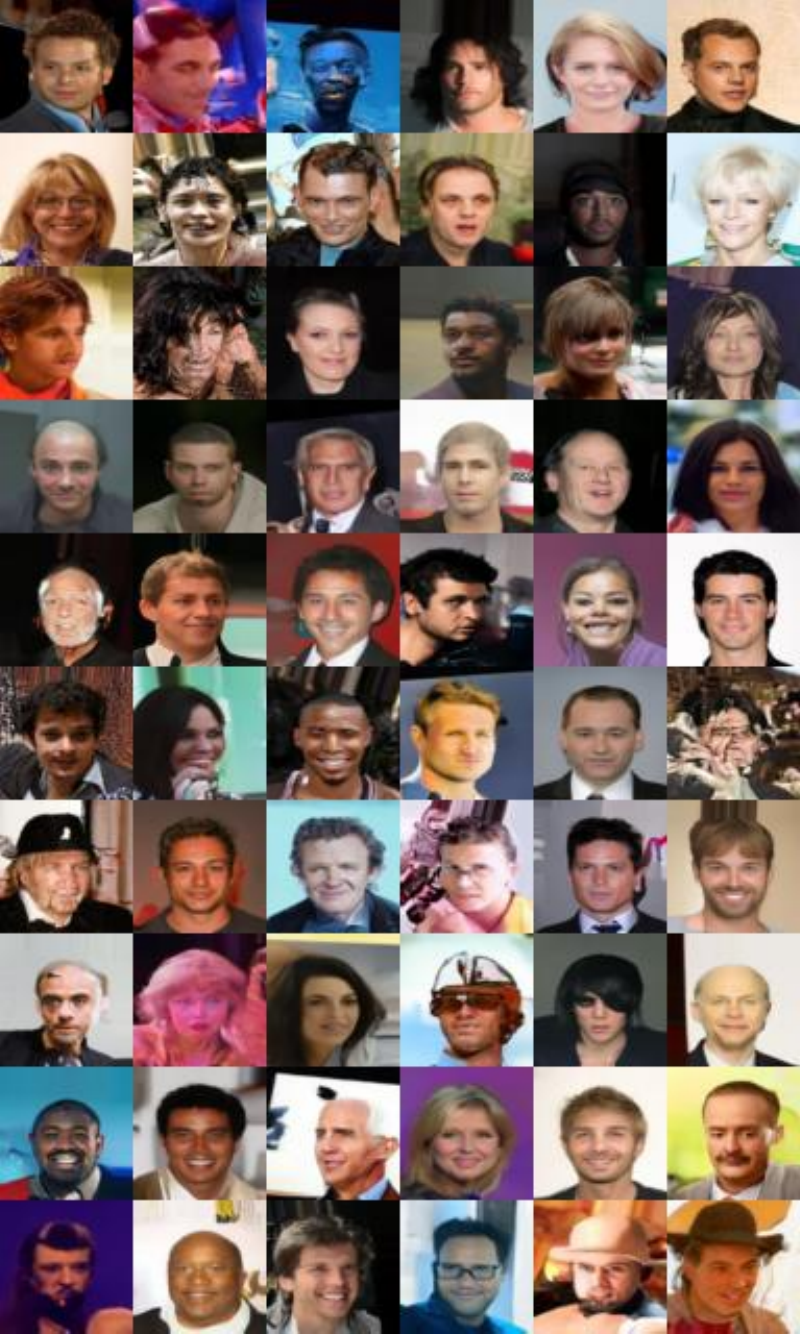}}
    \subfigure[Mixing with $p=0.6$]{\includegraphics[width=55mm]    {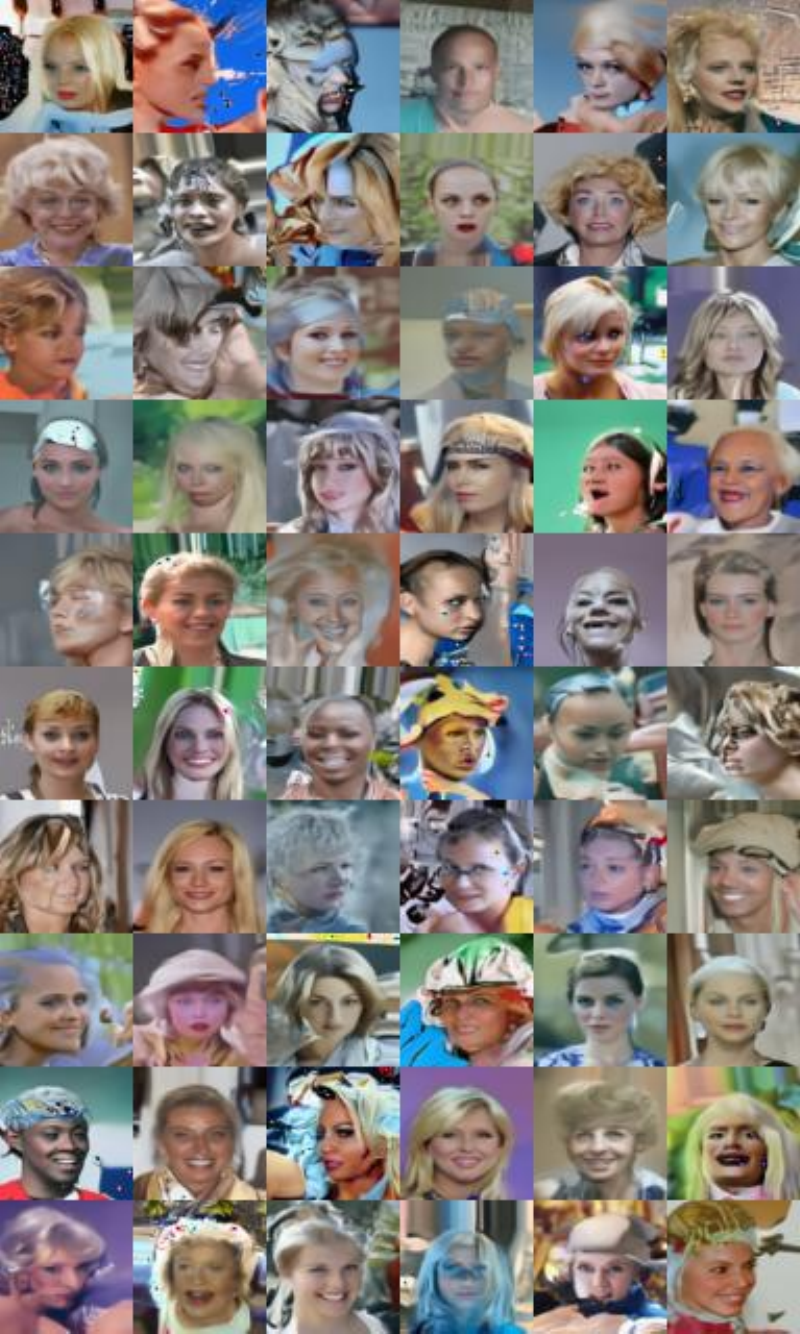}}
    \subfigure[Switching with $\tau = 640$]{\includegraphics[width=55mm]{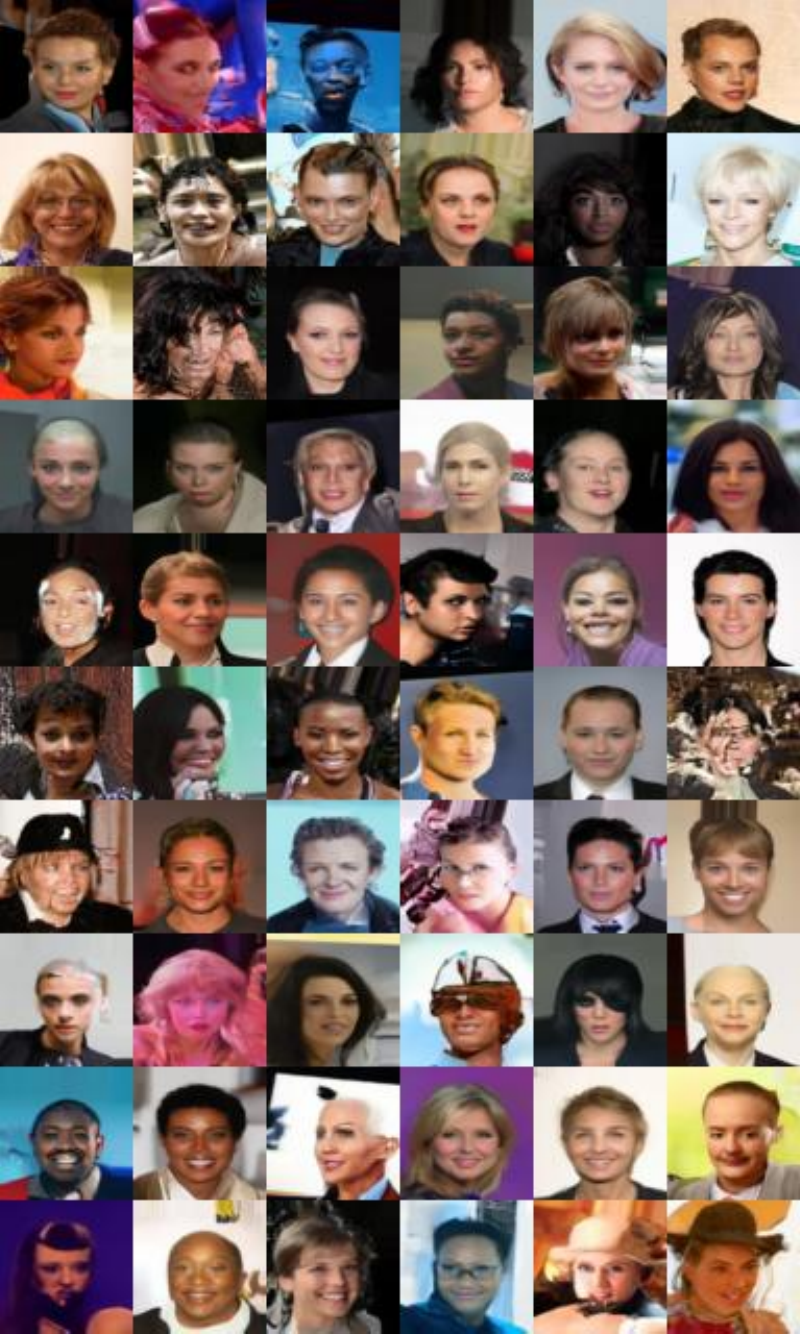}}
    \caption{Mixing and Switching on CelebA data with the Heavy-makeup attribute: No-makeup to Makeup}
    % \label{fig:figure_stablediffusion}
\end{figure*}

\begin{figure*}[!ht]
\centering    
    \subfigure[Vanilla]{\includegraphics[width=55mm]{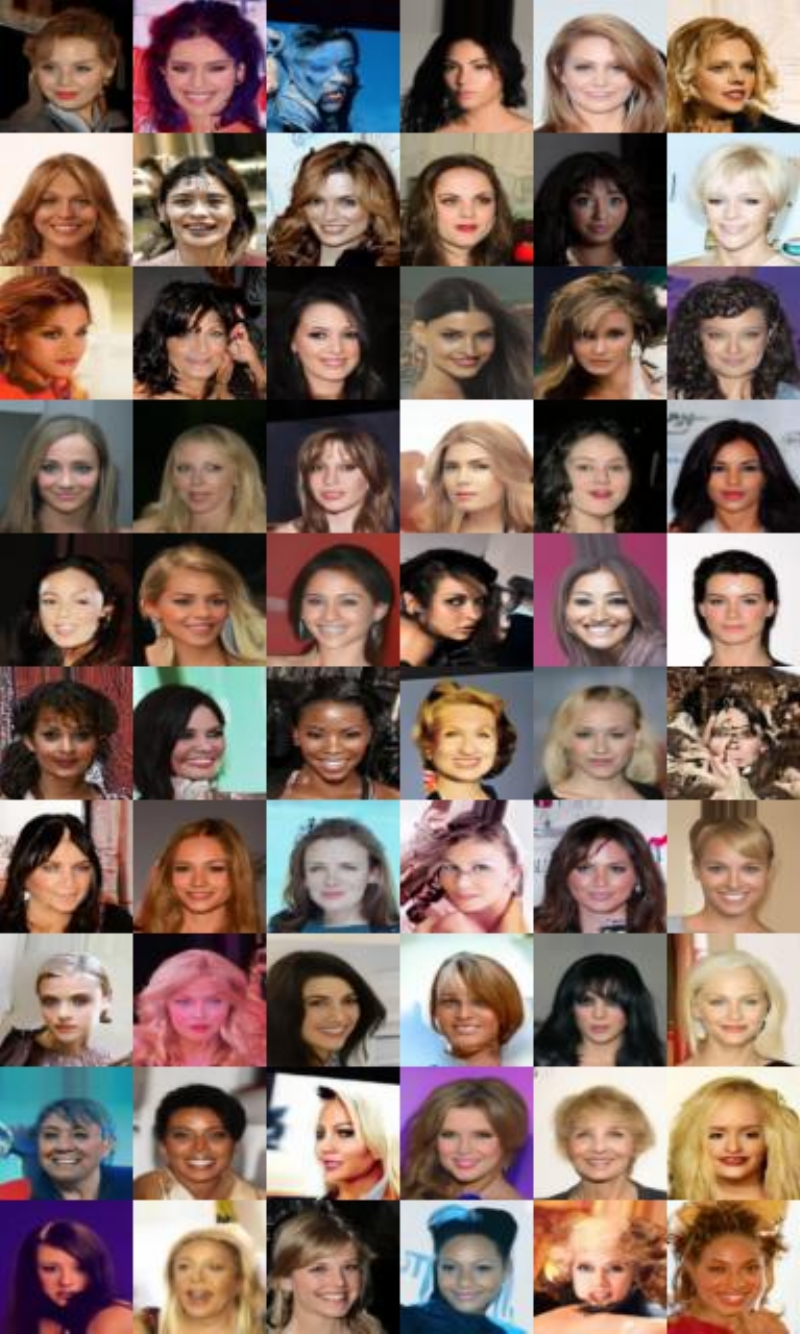}}
    \subfigure[Mixing with $p=0.6$]{\includegraphics[width=55mm]    {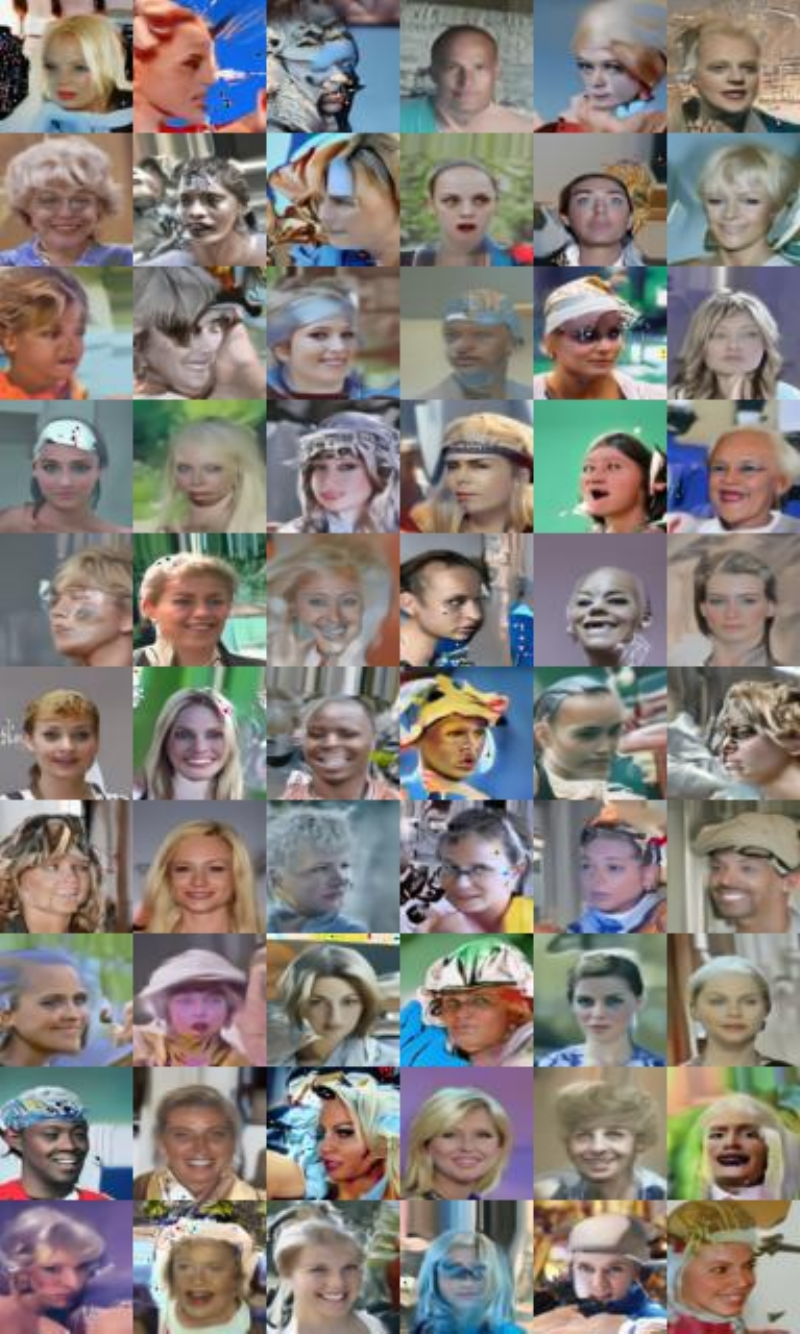}}
    \subfigure[Switching with $\tau = 640$]{\includegraphics[width=55mm]{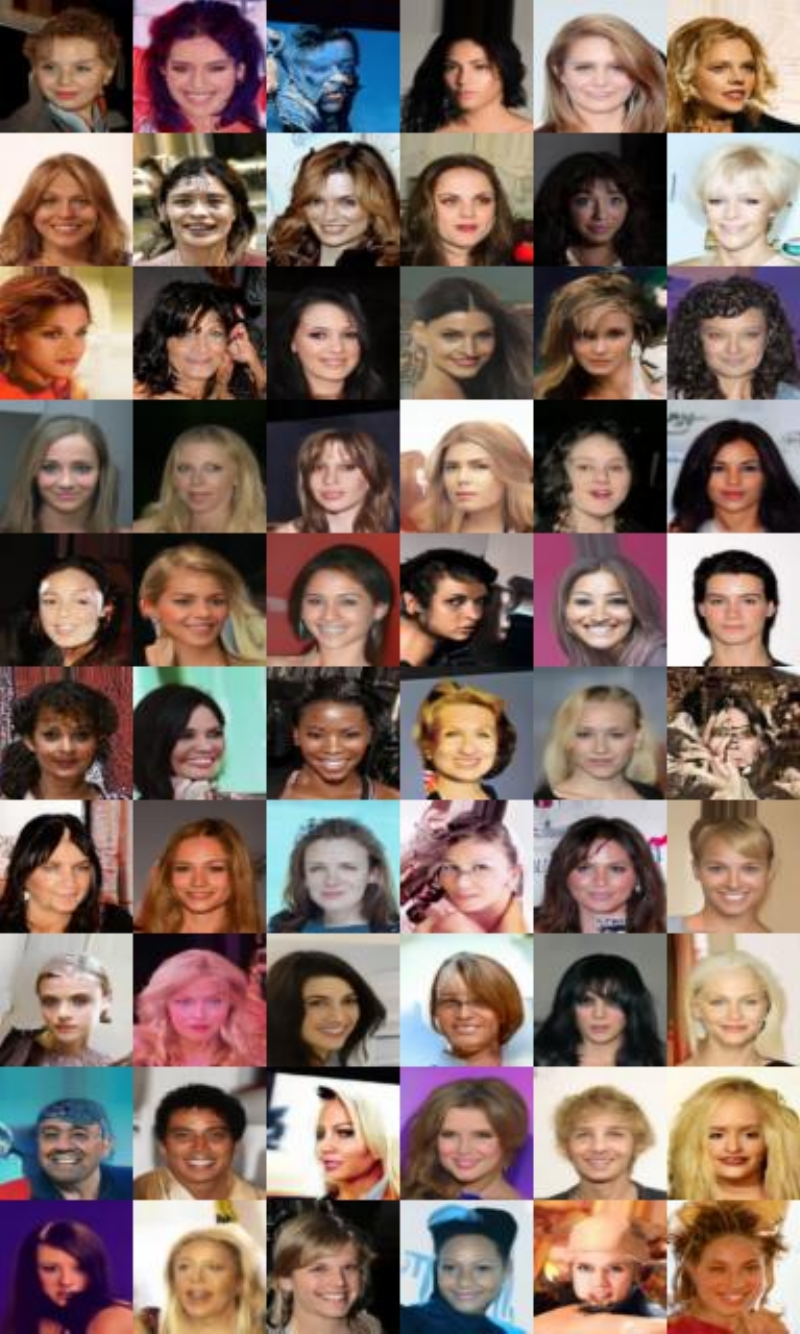}}
    \caption{Mixing and Switching on CelebA data with the Heavy-makeup attribute: Makeup to No-makeup}
    % \label{fig:figure_stablediffusion}
\end{figure*}

\paragraph{FairFace}
Same as CelebA, we visualize the sampled images with fixed seed = 0. 
For the diffusion model trained on the FairFace dataset, some of the original samplings resulted in the generation of low-quality images. However, when employing the mixing, high-quality images were achieved. In the case of attribute switching, due to its ability to incorporate more characteristics of the original image, it tends to exhibit lower quality for images with inherently poor quality. 

\begin{figure*}[!ht]
\centering    
    \subfigure[Vanilla]{\includegraphics[width=55mm]{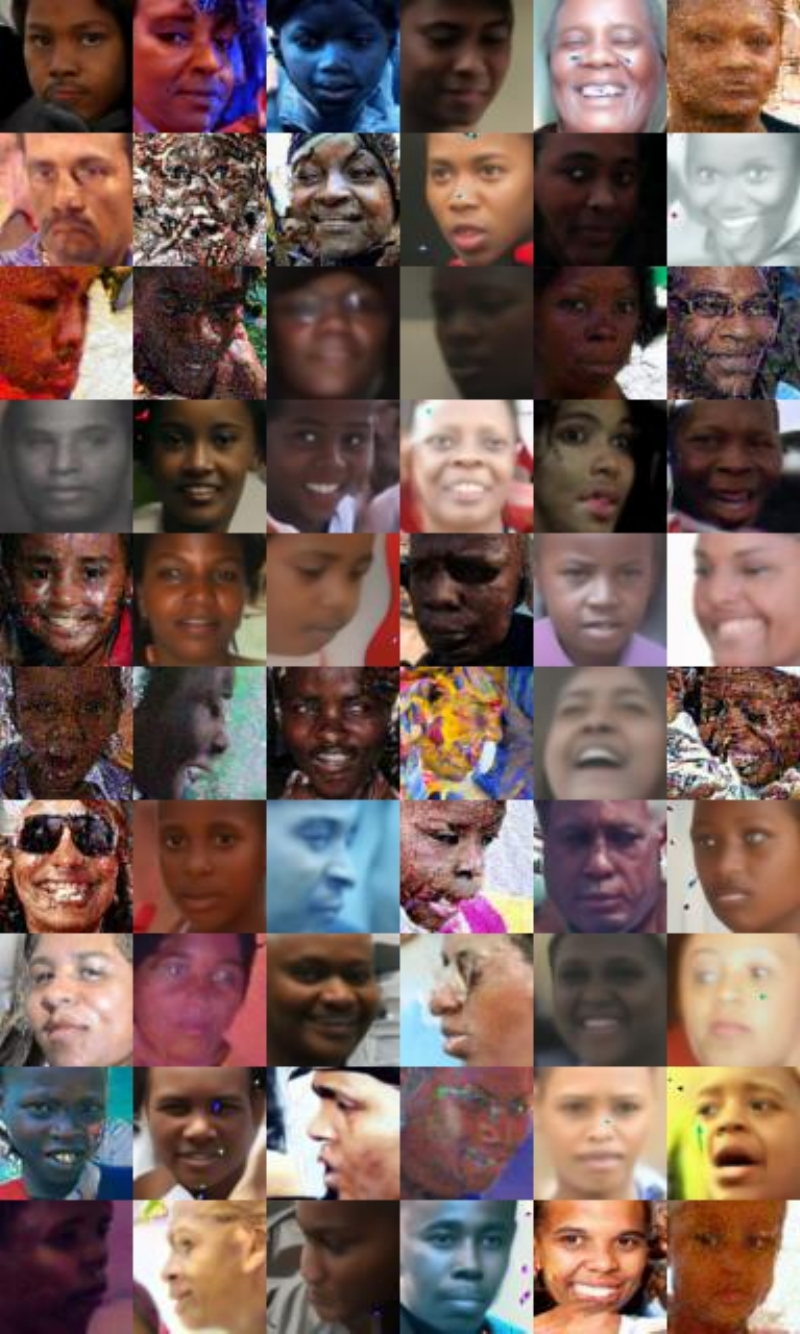}}
    \subfigure[Mixing with $p=0.6$]{\includegraphics[width=55mm]    {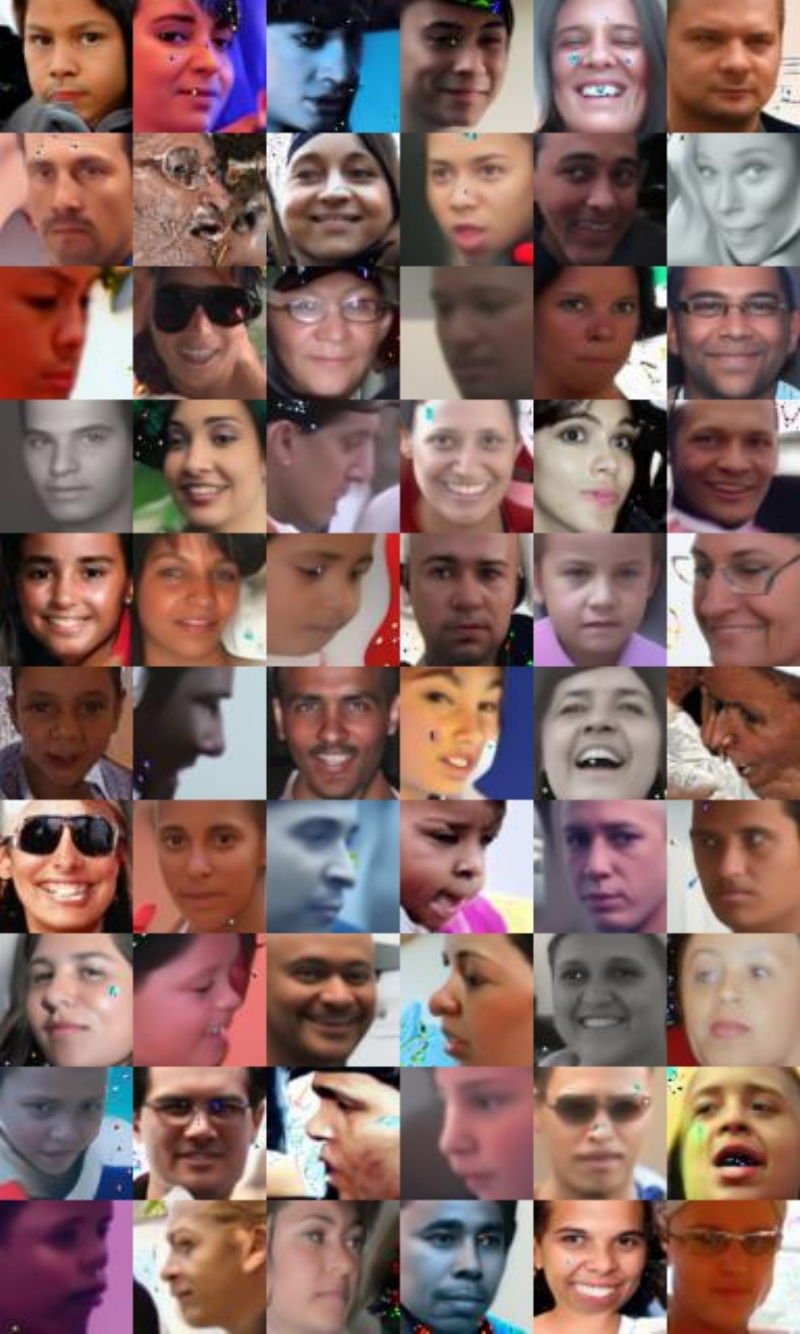}}
    \subfigure[Switching with $\tau = 640$]{\includegraphics[width=55mm]{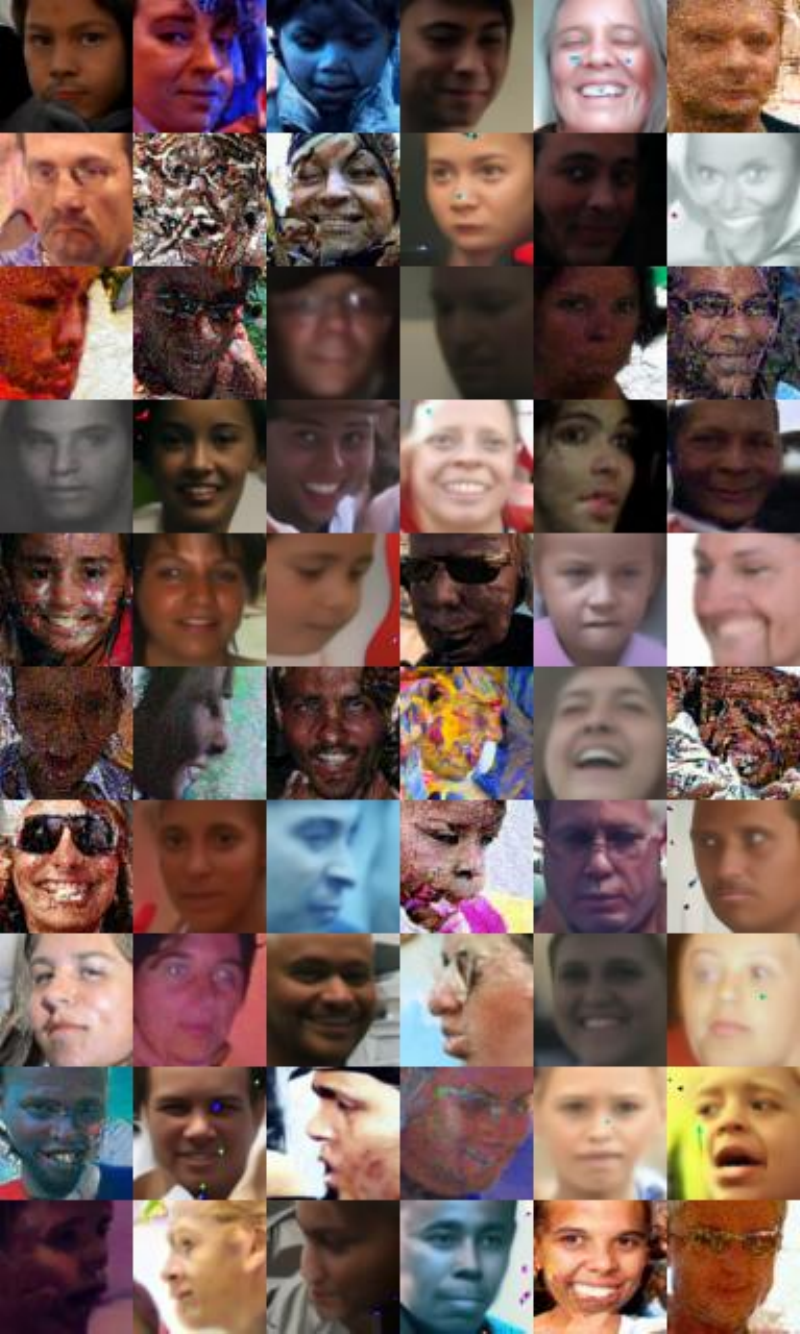}}
    \caption{Mixing and Switching on FairFace data with the Race attribute: Black to White}
    % \label{fig:figure_stablediffusion}
\end{figure*}

\begin{figure*}[!ht]
\centering    
    \subfigure[Vanilla]{\includegraphics[width=55mm]{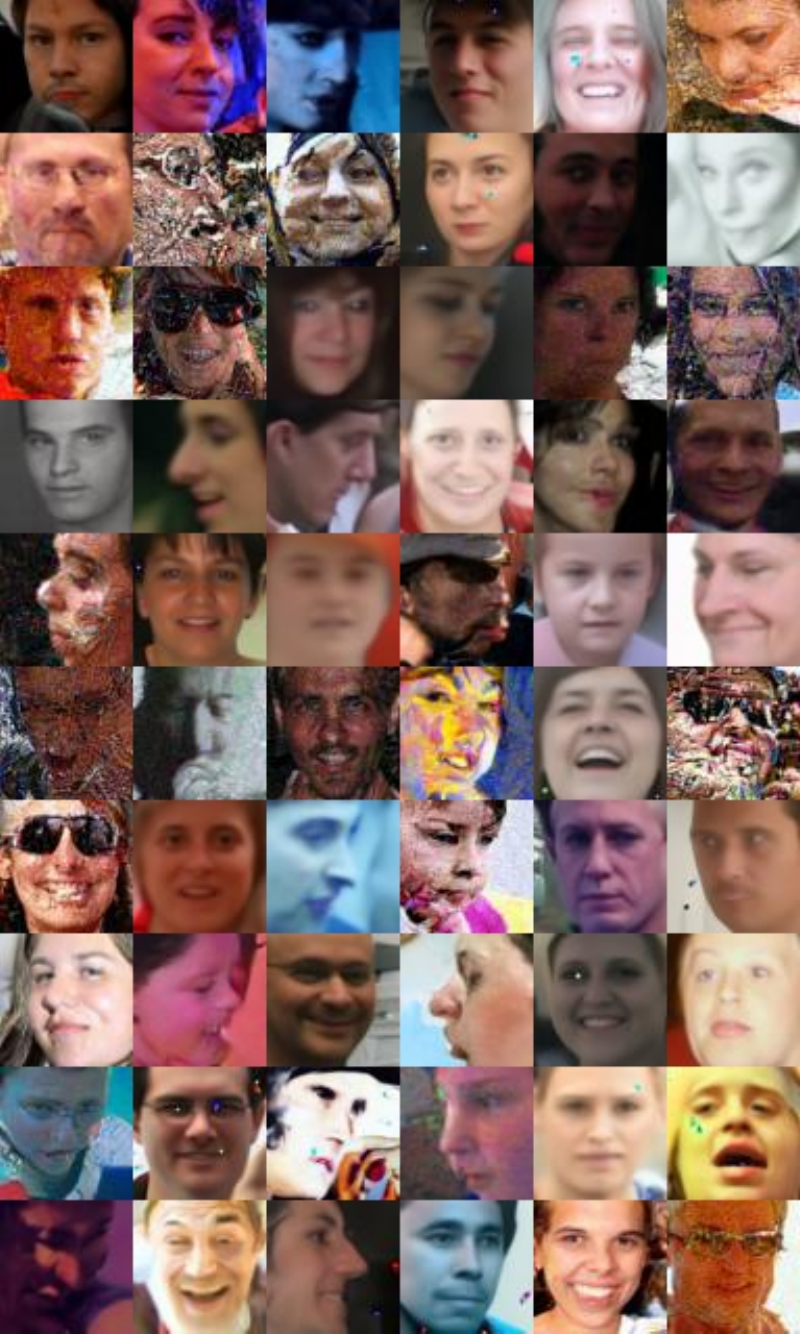}}
    \subfigure[Mixing with $p=0.6$]{\includegraphics[width=55mm]    {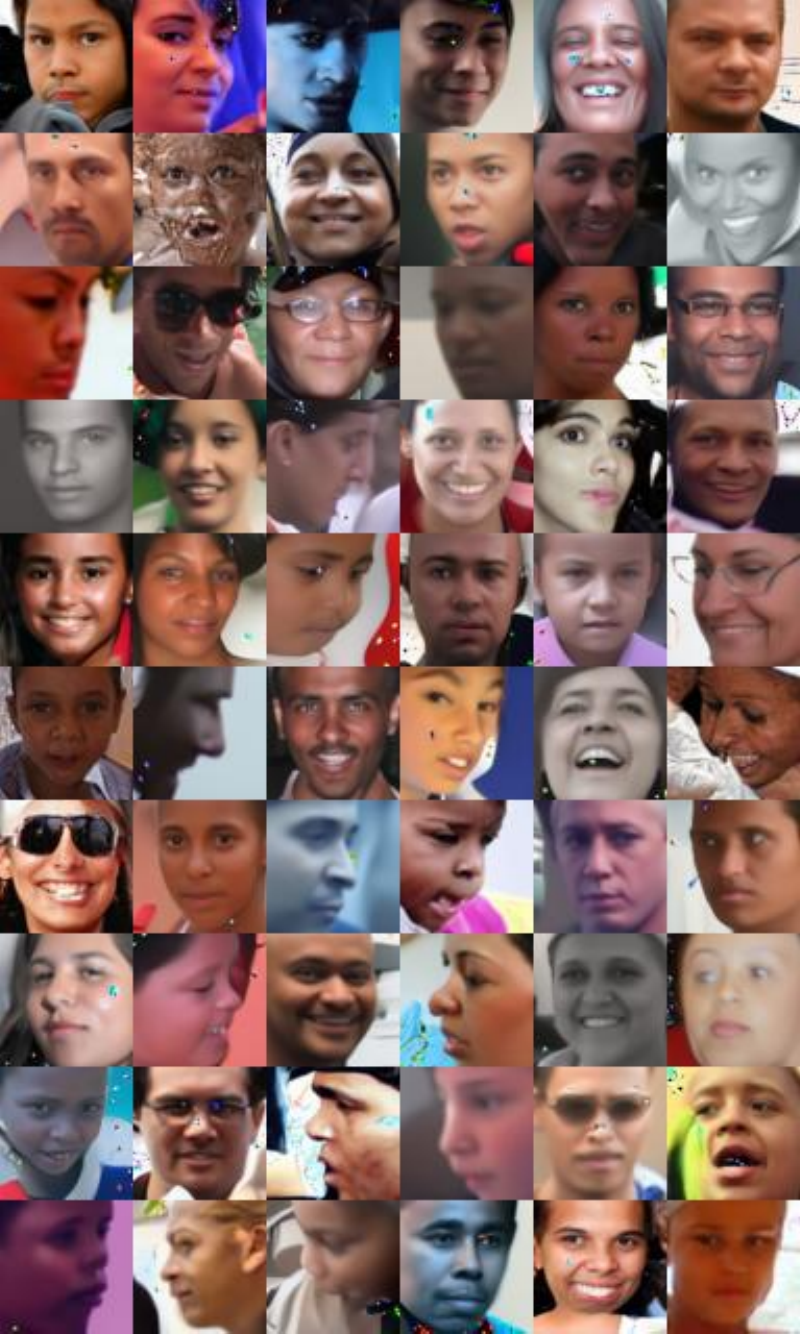}}
    \subfigure[Switching with $\tau = 640$]{\includegraphics[width=55mm]{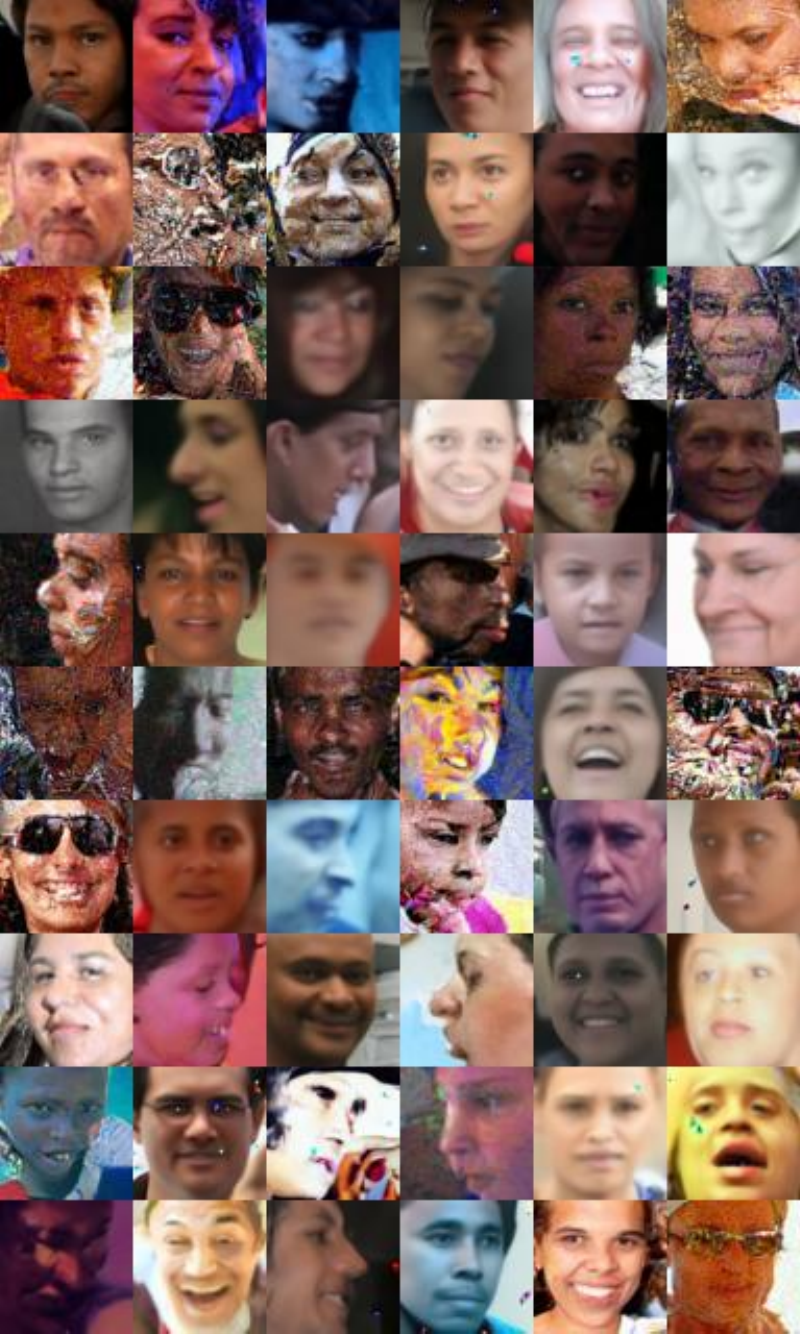}}
    \caption{Mixing and Switching on FairFace data with the Race attribute: White to Black}
    % \label{fig:figure_stablediffusion}
\end{figure*}

\paragraph{Text Embedding}
Using the stable diffusion model \cite{rombach2022high} with text embedding, we compared the results of mixing embedding with $p \in \{0, 0.3, 0.4, 0.45, 0.5, 0.55, 0.6, 1.0\}$ and the proposed attribute switching using $\tau \in \{0, 650, 700, 750, 800, 850, 900, 1000\}$ in Figures \ref{fig:young_old}, \ref{fig:man_woman}.
%Due to limitations in upload capacity for the appendix, we will only be able to upload a few sampled images. 
%, \ref{fig:man_woman}, and \ref{fig:firefighter}% \ref{fig:black_white}. 
The experimental settings are identical in \Figref{fig:figure_stablediffusion}. While mixing embedding to a certain range does not guarantee private results (e.g., mixing $p=0.5$ generates biased images), the proposed methods can generate  fair images (e.g., $\tau=700$) while preserving other attributes.

\begin{figure*}[!ht]
\centering    
    \subfigure[Mixing]{\includegraphics[width=80mm]{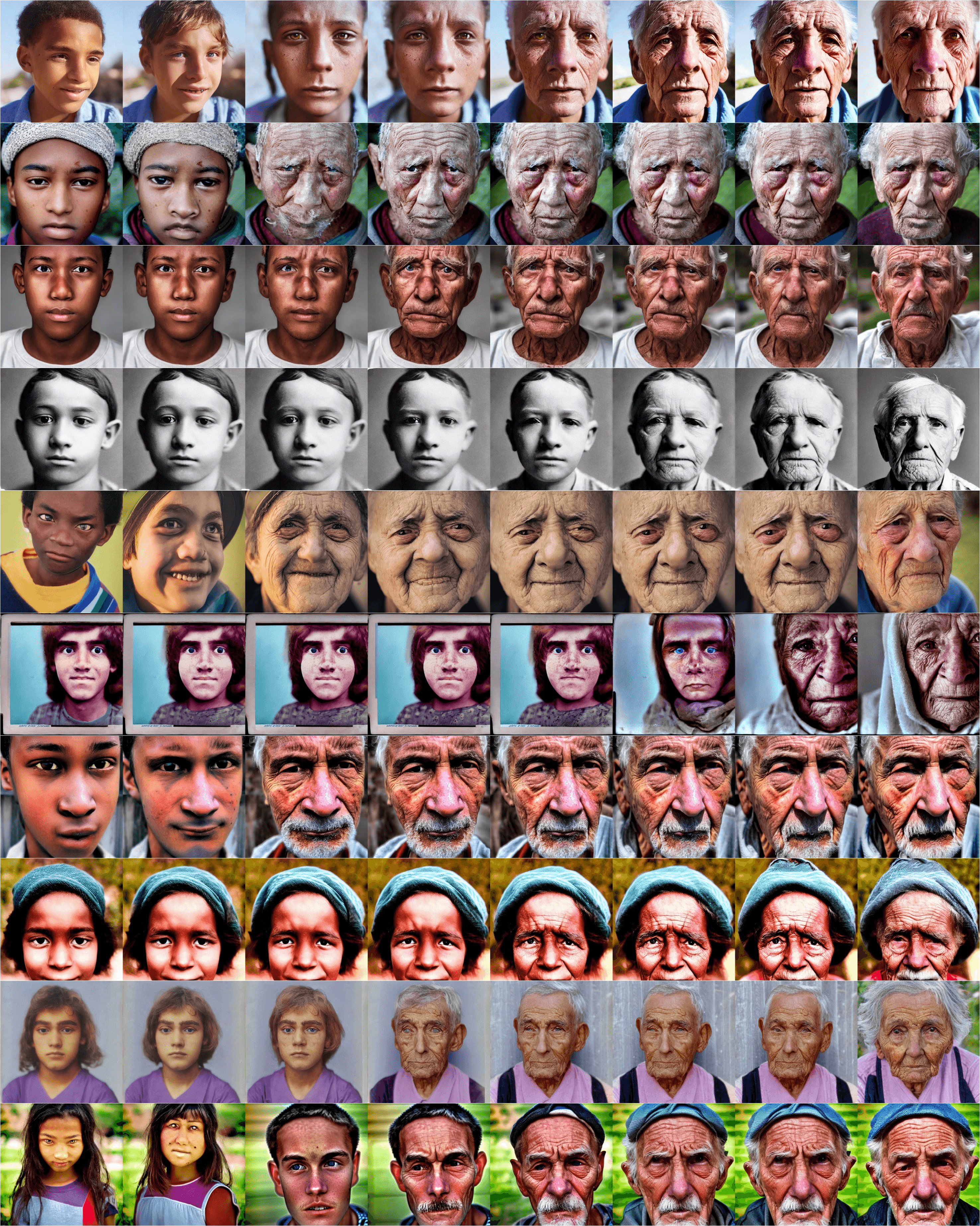}}
    \subfigure[Ours]{\includegraphics[width=80mm]{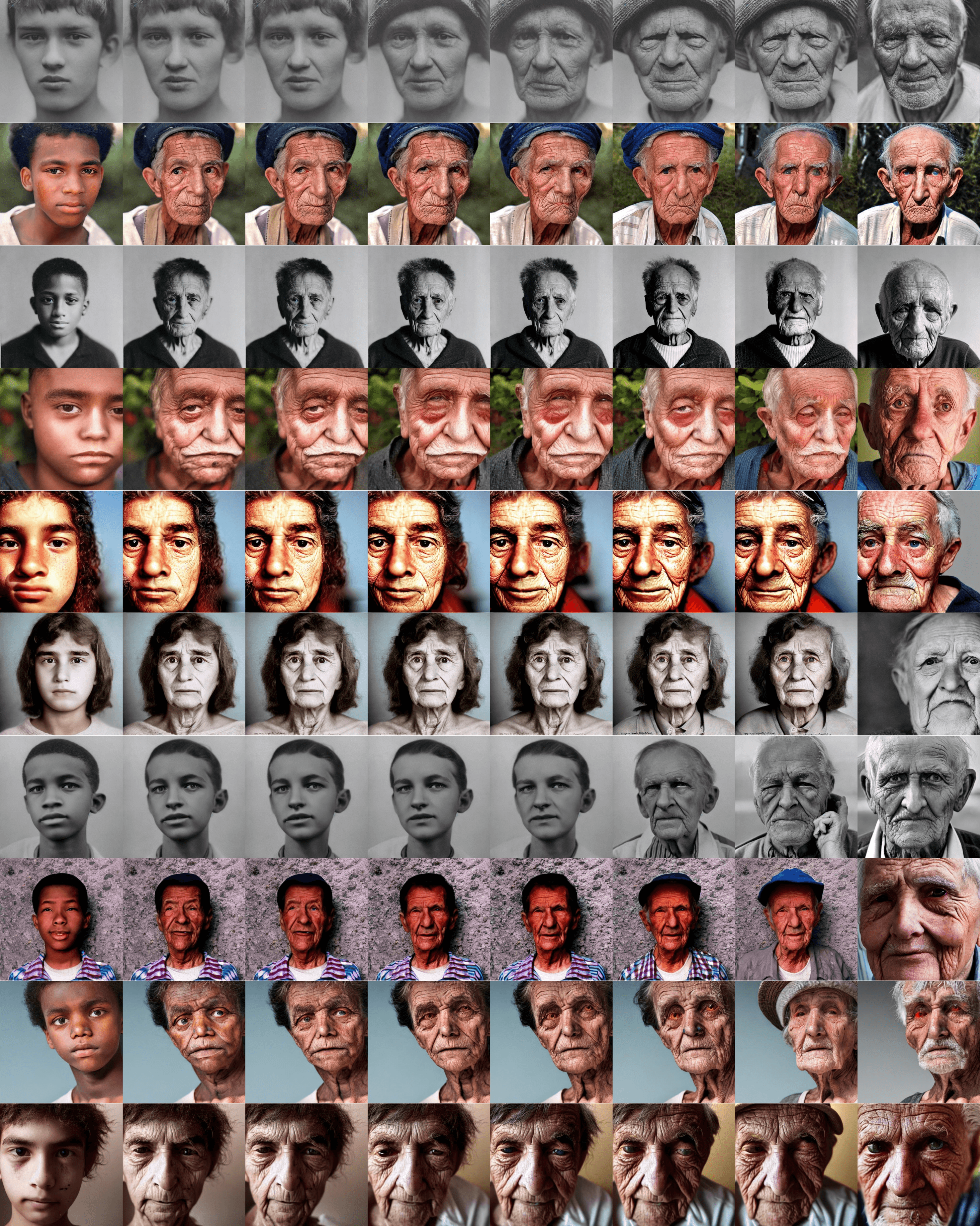}}
    \caption{Text embedding from ``young" to ``old"}
    \label{fig:young_old}
\end{figure*}

\begin{figure*}[!ht]
\centering    
    \subfigure[Mixing]{\includegraphics[width=80mm]{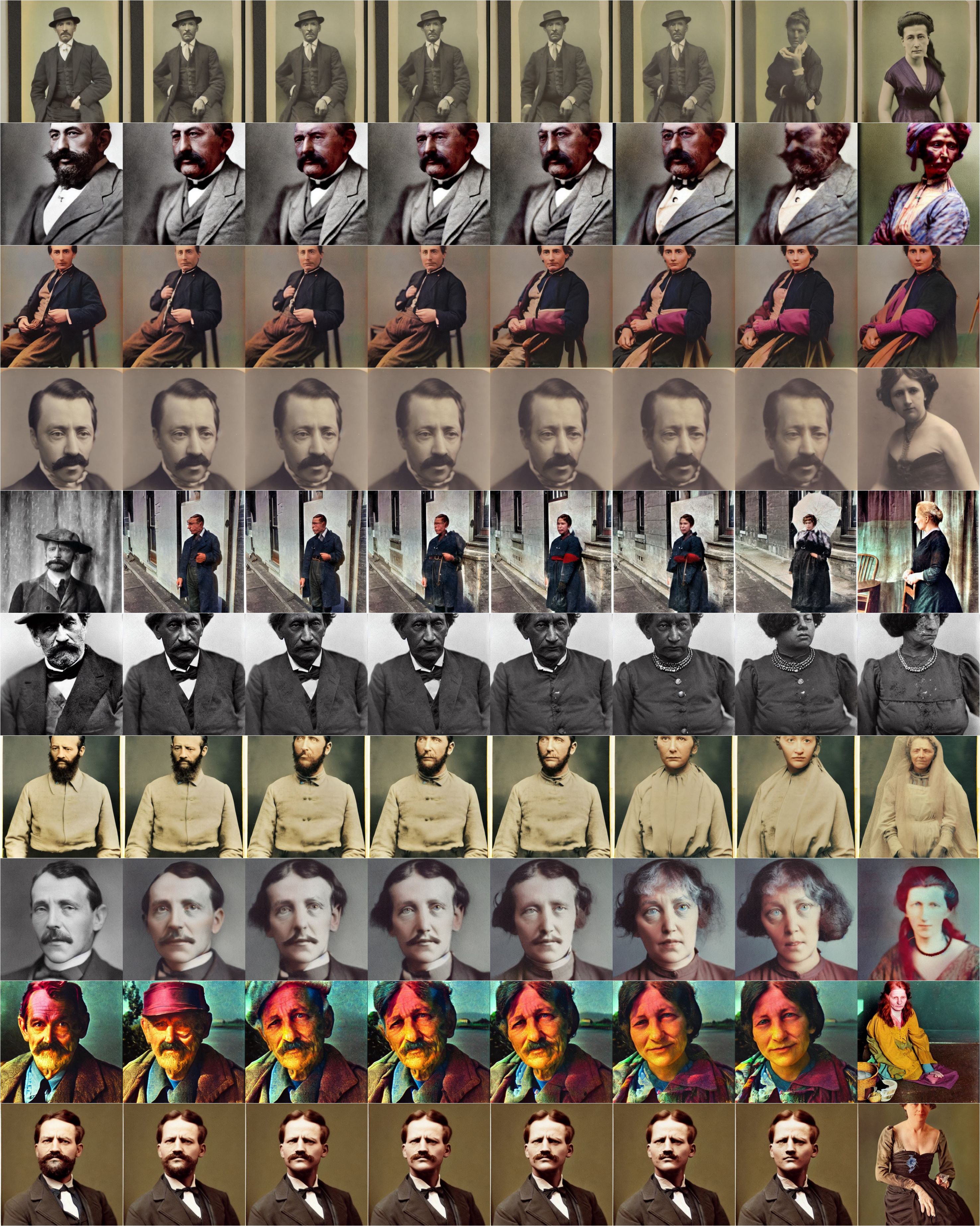}}
    \subfigure[Ours]{\includegraphics[width=80mm]{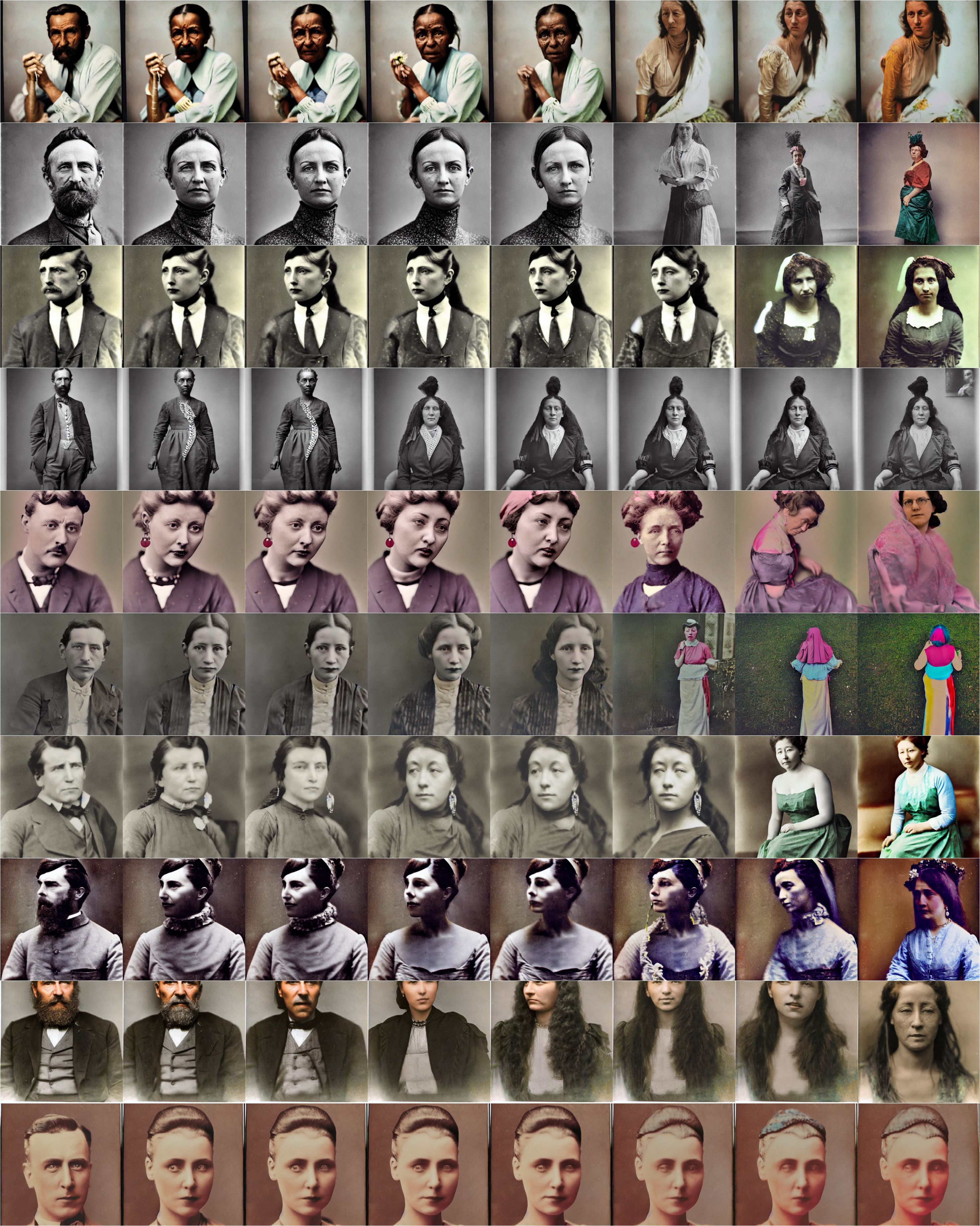}}
    \caption{Text embedding from ``man" to ``woman"}
    \label{fig:man_woman}
\end{figure*}

\end{document}